\documentclass[twoside]{article}

\usepackage[preprint]{aistats2027}

\usepackage[round]{natbib}

\usepackage{amsmath,amssymb,amsthm}
\usepackage{bm}
\usepackage{booktabs}
\usepackage{graphicx}
\usepackage{algorithm}
\usepackage{algpseudocode}
\usepackage[colorlinks=true,linkcolor=blue,citecolor=blue,urlcolor=blue]{hyperref}

\newtheorem{proposition}{Proposition}
\newtheorem{lemma}{Lemma}
\newtheorem{theorem}{Theorem}

\newtheorem{corollary}{Corollary}

\newcommand{\R}{\mathbb{R}}
\newcommand{\E}{\mathbb{E}}
\newcommand{\N}{\mathcal{N}}
\newcommand{\diag}{\operatorname{diag}}
\newcommand{\Cov}{\operatorname{Cov}}
\newcommand{\Var}{\operatorname{Var}}
\newcommand{\vecop}{\operatorname{vec}}
\newcommand{\bpsi}{\bm{\psi}}

\begin{document}

\runningtitle{Scalable Logistic GP Density Regression with Kinetic Langevin Sampling}
\runningauthor{Daniel Paulin, \'Ad\'am Jung, Andr\'as A.~Bencz\'ur}

\twocolumn[
\aistatstitle{Scalable Logistic Gaussian Process Density Regression\\ with Kinetic Langevin Sampling}
\aistatsauthor{Daniel Paulin \And \'Ad\'am Jung \And Andr\'as A.~Bencz\'ur}
\aistatsaddress{Nanyang Technological University \And E\"otv\"os Lor\'and University (ELTE) \And HUN-REN Institute for Computer\\ Science and Control (SZTAKI)}
]

\begin{abstract}
Conditional density estimation targets the full distribution of a response given
covariates, as required, for example, for per-galaxy photometric redshifts. We develop a
scalable Bayesian estimator based on the logistic Gaussian process. The log conditional
density has a separable covariance: a Mat\'ern kernel along the response, represented in
a truncated Fourier basis on a circle, and a covariate kernel represented by Nystr\"om
features, which accommodate non-stationary kernels with input-dependent amplitudes and
length scales. Instead of a Laplace or variational approximation, we sample the latent
field of this finite-feature model. Given the
hyperparameters, its posterior is strongly log-concave with a uniformly bounded Hessian,
and we draw from it by simulating kinetic Langevin dynamics with symmetric minibatch
splitting in Kronecker-whitened coordinates. Marginal-likelihood gradients follow from Fisher's
identity as posterior expectations. Under the conditions of our analysis their bias is controlled by the
sampler's step size and run length, and the predictive averages over the
non-Gaussian latent posterior instead of a Gaussian around its mode. On photometric-redshift benchmarks with up to 3.9 million
training observations, trained on a single GPU, the estimator is competitive with
state-of-the-art tabular foundation models on density and calibration metrics.
\end{abstract}

\section{INTRODUCTION}

A regression model that returns one number per input answers only part of the
question. When the noise is heteroscedastic, skewed or multimodal, decisions depend on
the shape of the conditional distribution $p(y\mid x)$. Photometric redshift estimation
is the canonical example in astronomy: imaging surveys measure a handful of broad-band
magnitudes per galaxy, galaxies of different types at different redshifts produce
similar colours, and the redshift distribution given the photometry is often broad,
skewed or bimodal. Cosmological analyses stack these per-galaxy distributions into the
redshift distribution of a tomographic bin \citep{malz2022nz}, so the community scores
estimators on distributional criteria: the log-likelihood of the true redshift, the
uniformity of the probability integral transform (PIT), the conditional density
estimation (CDE) loss of \citet{izbicki2017cde} and the continuous ranked probability
score \citep{schmidt2020lsst}.

The logistic Gaussian process (LGP) is the classical Bayesian nonparametric model for
this problem \citep{leonard1978density,lenk1988logistic,tokdar2010lgp}. Its log-density
is a Gaussian process normalised over the response, which gives smooth, valid densities
with posterior uncertainty. \citet{riihimaki2014lgp} fit it by a Laplace approximation on a
grid, with Markov chain Monte Carlo (Metropolis--Hastings on the latent values, NUTS on the
hyperparameters) as the reference on small problems. \citet{chen2026gblgp} scale a
generalized-Bayes version, whose Hyv\"arinen-score posterior needs no normaliser, to more
than $150{,}000$ observations with inducing points and variational inference. Spatial logistic Gaussian processes (SLGPs) \citep{gautier2026continuous} model covariate-indexed density fields, with a random Fourier feature implementation supporting MAP, Laplace and NUTS inference \citep{gautier2026slgp}; we compare with it in Appendix~\ref{app:slgp}. Tabular foundation models go the other way.
TabPFN \citep{hollmann2025tabpfn} is pre-trained on synthetic tasks and predicts a full
density for every test input in one forward pass over the training set. In a recent benchmark it was
the best off-the-shelf conditional density estimator on most of 39 datasets, and it beat
every baseline in a photometric-redshift case study with ten times less data
\citep{izbicki2026benchmark}.

We show that a finite-feature LGP can be fitted by sampling its likelihood-based latent
posterior, without imposing a Gaussian approximation on it, at the scale of millions of
observations, and competitively with these models. Our contributions are:
\begin{itemize}
\setlength\itemsep{0pt}
\item a representation of the LGP with a separable covariance whose response axis is a
truncated Fourier series of a Mat\'ern process on a circle and whose covariate axis is a
finite feature map (Nystr\"om features for non-stationary kernels), under which every
observation enters the likelihood at its own response value, without binning
(Section~\ref{sec:model});
\item a proof that, given the hyperparameters, the latent posterior, strongly log-concave as
for SLGPs \citep{gautier2026slgp}, has a uniformly bounded Hessian and third derivative, with constants
independent of the number of features and of the number of modes (apart from a numerical
floor on the spectrum, Appendix~\ref{app:spectral}), for every input-dependent extension we
use, and a sampler, kinetic Langevin dynamics with symmetric minibatch splitting in
Kronecker-whitened coordinates, whose Wasserstein bias is then $O(h^2\sqrt d)$
(Section~\ref{sec:inference});
\item hyperparameter learning by Fisher-identity gradients from the warm-started chains,
whose bias is bounded under Theorem~\ref{thm:sms}'s assumptions for fixed representation and
hyperparameters, with a problem-dependent prefactor (Section~\ref{sec:fisher});
\item a non-stationary extension, a deep Gibbs kernel with input-dependent amplitude,
length scales and a per-object warp of the response (Section~\ref{sec:nonstat}), and an
evaluation on four photometric-redshift benchmarks with up to $3.9$ million training
galaxies (Section~\ref{sec:experiments}).
\end{itemize}
Code to reproduce all experiments is available at \url{https://github.com/paulindani/LGP}.

\section{THE LOGISTIC GAUSSIAN PROCESS MODEL}
\label{sec:model}

\paragraph{Density on a transformed response.}
A fixed, strictly increasing map $T$ onto $[0,1]$, the cumulative distribution of a
kernel density estimate of the training responses on a bounded interval
(Appendix~\ref{app:nonstat}), carries $y$ to $u = T(y)$; densities
are mapped back by $p(y\mid x) = p_u(T(y)\mid x)\,T'(y)$. The conditional density of $u$
is the exponential of a latent function,
\begin{equation}
p_u(u\mid x) = \frac{\exp\{g(u,x)\}}{Z(x)},\qquad Z(x) = \int_0^1 e^{g(u,x)}\,du,
\label{eq:lgp}
\end{equation}
with $g$ a zero-mean Gaussian process with separable covariance
$\Cov\{g(u,x),g(u',x')\} = C(u,u')\,K_x(x,x')$. The response kernel $C$ acts as a smoothness
prior on the shape of the density, and the covariate kernel $K_x$ lets that shape change
with $x$. Positivity and normalisation hold for every $g$.

\paragraph{Response axis: a Fourier basis on a circle.}
A stationary kernel on $[0,1]$ has no fast transform. We place the prior on a circle of
circumference $2$ of which $[0,1)$ is one half, the circulant-embedding construction
\citep{dietrich1997circulant}. The covariance on the circle is the periodised Mat\'ern
kernel $\sum_{n\in\mathbb Z}c(d+2n)$, $c(d)=\kappa_\nu(d/\ell_y)$, which is positive
definite for every length scale. By Poisson summation its Fourier coefficients are the
Mat\'ern spectral density $S_\nu(\omega;\ell_y)$ at $\omega = \pi k$. Truncated to $K$
modes,
\begin{equation}
g(u) = \sum_{k=1}^{K-1}A_k\bigl(a_k\cos\pi k u - b_k\sin\pi k u\bigr),
\label{eq:series}
\end{equation}
$A_k^2 = S_\nu(\pi k;\ell_y)$,
with $a_k, b_k\sim\N(0,1)$ and the constant mode omitted since it cancels in
\eqref{eq:lgp}. The weight of mode $K$ depends on $\pi K\ell_y$ alone, so $K$ is a
smoothness budget, independent of the sample size and of any grid (Appendix~\ref{app:spectral}).

\paragraph{Covariate axis: a feature map.}
With a finite feature map $\phi:\R^D\to\R^J$, $K_x(x,x')\approx\phi(x)^\top\phi(x')$,
the separable prior is reproduced exactly by a latent matrix $\xi\in\R^{J\times 2(K-1)}$ with
$\vecop(\xi)\sim\N(0,I)$:
\begin{equation}
g(u,x_n) = P_n^\top\bpsi(u),\qquad P_n = \phi(x_n)^\top\xi\in\R^{2(K-1)},
\label{eq:field}
\end{equation}
where $\bpsi(u) = (A_k\cos\pi ku,\,-A_k\sin\pi ku)_{1\le k<K}$, so the latent dimension is
$d = 2J(K-1)$. We use Nystr\"om features
\citep{williams2001nystrom}: with anchors $\bar x_1,\dots,\bar x_J$ (the $k$-means
centres of the training inputs) and a unit-variance kernel $k$,
\begin{equation}
\phi(x) = \sigma L^{-1}k_a(x),\qquad LL^\top = K_{aa} + \delta I,
\label{eq:nystrom}
\end{equation}
where $k_a(x) = (k(\bar x_j,x))_{j\le J}$ and $K_{aa} = (k(\bar x_i,\bar x_j))_{i,j\le J}$.
This is a regularised Nystr\"om approximation. On the anchors it equals
$\sigma^2K_{aa}(K_{aa}+\delta I)^{-1}K_{aa}$, whose eigenvalues $\sigma^2\lambda^2/(\lambda+\delta)$
differ from those of $\sigma^2K_{aa}$ by at most $\sigma^2\delta$. Each feature is a kernel
evaluation, and $\lVert\phi(x)\rVert^2\le\sigma^2 k(x,x)$. Unlike random Fourier features
\citep{rahimi2007rff}, which rescale a fixed draw of frequencies and make the likelihood
rugged in the length scales, the features are smooth in every hyperparameter, and they
apply to kernels without a shared spectral representation, such as the non-stationary
kernels of Section~\ref{sec:nonstat}. This parametrisation is non-centred: the prior of
$\xi$ is free of hyperparameters, which enter only through $\phi$ and $A$.

\paragraph{Unbinned likelihood.}
For observation $n$ with $u_n = T(y_n)$, the negative log-likelihood is
\begin{equation}
\ell_n(P_n) = -P_n^\top\bpsi(u_n) + \log\!\int_0^1\!e^{P_n^\top\bpsi(u)}du,
\label{eq:nll}
\end{equation}
evaluated at $u_n$ itself, so no observation is assigned to a bin. The normaliser, the
integral of the exponential of a trigonometric polynomial, is computed by composite
Gauss--Legendre quadrature with $K$ cells of $4$ nodes, a rule whose error is
$O(\Delta^8)$ in the cell width $\Delta$; as $\Delta = 1/K$ we check its accuracy
numerically rather than rely on a rate (Appendix~\ref{app:spectral}). The same nodes give
the predictive density, the CDF (by integrating a polynomial interpolant of the density
within each cell) and every metric. The likelihood is thus evaluated with the quadrature
normaliser $Z_Q$ in place of $Z = \int_0^1e^{g(u)}du$, and the per-object density
$e^{g(u)}/Z_Q$ integrates to $Z/Z_Q$, one up to the quadrature error. What we sample is
therefore the posterior under this numerically normalised likelihood of the finite-feature model,
not that of the unrestricted LGP. Appendix~\ref{app:validation} audits $\log Z_Q$ against a finer
rule at fitted posterior states.

\paragraph{The vanilla model.}
With a stationary kernel $K_x$ and the fixed transform, the hyperparameters are
$\vartheta = (\sigma^2,\ell_x,\ell_y)$: one variance and one length scale per axis. We describe
its inference next. Section~\ref{sec:nonstat} then replaces the
stationary kernel by a deep, non-stationary one and learns the transform; none of the
inference changes.

\section{INFERENCE}
\label{sec:inference}

\subsection{The latent posterior is strongly log-concave}

Fix $\vartheta$. The latent posterior is $\pi(\xi)\propto\exp\{-U(\xi)\}$,
$U(\xi) = \sum_n\ell_n(\phi_n^\top\xi) + \tfrac12\lVert\xi\rVert_F^2$, with
$\phi_n=\phi(x_n)$.

\begin{proposition}\label{prop:lc}
With $\Phi$ the $N\times J$ feature matrix and $s^2 = \sum_kA_k^2$,
\[
I \preceq \nabla^2U(\xi)\preceq\bigl(1+s^2\lVert\Phi\rVert^2\bigr)I\qquad\text{for all }\xi .
\]
\end{proposition}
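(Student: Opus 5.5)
The plan is to compute the Hessian of $U$ by the chain rule through the linear map $\xi\mapsto P_n=\xi^\top\phi_n$, and to bound the Hessian of each per-observation term $\ell_n$ as a function of $P_n\in\R^{2(K-1)}$. The prior contributes exactly $I$ (in the $\vecop(\xi)$ coordinates). It therefore suffices to show that the likelihood part is positive semidefinite and bounded above by $s^2\lVert\Phi\rVert^2 I$.

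\emph{Step 1: Hessian of $\ell_n$.} The first term of \eqref{eq:nll} is linear in $P_n$, and $\log\int_0^1 e^{P^\top\bpsi(u)}du$ is a log-partition function. Its Hessian is therefore the covariance of $\bpsi(u)$ under the tilted density $q_P(u)\propto e^{P^\top\bpsi(u)}$ on $[0,1]$:
\[
\nabla^2\ell_n(P)=\Cov_{q_P}\bigl[\bpsi(u)\bigr]\succeq 0 .
\]
The same identity holds with the quadrature normaliser $Z_Q$, since that is a log-sum-exp with positive weights, so $q_P$ becomes a discrete distribution on the nodes. For the upper bound, take any unit vector $v$. Then
\[
v^\top\Cov_{q_P}[\bpsi]v\le\E_{q_P}\bigl[(v^\top\bpsi(u))^2\bigr]\le\sup_u\lVert\bpsi(u)\rVert^2 .
\]
Because $\cos^2+\sin^2=1$, we have $\lVert\bpsi(u)\rVert^2=\sum_k A_k^2(\cos^2\pi ku+\sin^2\pi ku)=s^2$ exactly. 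Hence $0\preceq\nabla^2\ell_n\preceq s^2 I$ uniformly in $P$.

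\emph{Step 2: lift to $\xi$.} Write $P_n=(\phi_n^\top\otimes I)$ acting on $\vecop(\xi)$; the exact Kronecker ordering is irrelevant. Then
\[
\nabla^2_{\xi}\sum_n\ell_n=\sum_n(\phi_n\phi_n^\top)\otimes H_n ,\qquad H_n=\nabla^2\ell_n(P_n).
\]
Each summand is a Kronecker product of two PSD matrices, so the sum is PSD, which gives the lower bound $I$ for $\nabla^2U$. For the upper bound, use $H_n\preceq s^2I$ and the fact that Kronecker products with a PSD factor preserve the Loewner order:
\[
\sum_n(\phi_n\phi_n^\top)\otimes H_n\preceq s^2\Bigl(\sum_n\phi_n\phi_n^\top\Bigr)\otimes I=s^2(\Phi^\top\Phi)\otimes I .
\]
The right-hand side has operator norm $s^2\lVert\Phi\rVert^2$. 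Adding the prior's $I$ yields the stated bound.

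The main point requiring care is the monotonicity step. Since $\phi_n\phi_n^\top\succeq0$ and $s^2I-H_n\succeq0$, the difference $(\phi_n\phi_n^\top)\otimes(s^2I-H_n)$ is PSD, which settles it. The only other subtlety is making sure the variance bound uses the second moment rather than anything $P$-dependent; the identity $\lVert\bpsi(u)\rVert^2=s^2$ makes the bound independent of $\xi$, of $J$, and of the quadrature rule.
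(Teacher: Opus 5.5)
Your proof is correct and follows the paper's argument essentially step for step. Each $\ell_n$ has as Hessian the covariance of $\bpsi$ under the (quadrature-)tilted row density, which is bounded by $s^2I$ via $\lVert\bpsi(u)\rVert^2=s^2$, and the chain rule lifts this to $I+\sum_n(\phi_n\phi_n^\top)\otimes\Sigma_n\preceq I+s^2(\Phi^\top\Phi)\otimes I$; you also spell out the Kronecker monotonicity step that the paper leaves implicit.
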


The first part holds because \eqref{eq:lgp} is an exponential family in its natural
parameter $P_n$, linear in $\xi$; the second because the Hessian of $\log Z$ is the
covariance of the statistic $\bpsi(u)$, and $\lVert\bpsi(u)\rVert^2 = s^2$ for every $u$
(proof in Appendix~\ref{app:proofs}). For the Mat\'ern spectrum $s^2$ is at most the prior
variance of the response process, so it is bounded uniformly in $K$. The numerical floor
$\varepsilon/\sigma^2$ that we add to every mode (Appendix~\ref{app:spectral}) contributes
$(K-1)\varepsilon/\sigma^2$, only $10^{-4}/\sigma^2$ at $K = 100$ but growing with $K$, so when we say
uniform in $K$ we mean the spectrum without the floor. For Nystr\"om features
$\lVert\Phi\rVert^2\le\sigma^2N$. The same cumulant argument bounds
the third derivative in the norm that the minibatch sampler's analysis requires.

\begin{lemma}\label{lem:reg}
For every $\xi$ and unit $v\in\R^{d}$,
$\lVert\nabla^3U(\xi)[v]\rVert_F\le M_1^{\mathrm s} := 8s^3\sum_n\lVert\phi_n\rVert^3\le
8s^3\sigma^3N$. For a partition of the rows into $n_b$ batches, each component
$f_i = \lVert\xi\rVert_F^2/(2n_b)+\sum_{n\in B_i}\ell_n$ is $n_b^{-1}$-strongly convex and
satisfies the upper bound of Proposition~\ref{prop:lc} and this bound with the sums restricted to $B_i$.
\end{lemma}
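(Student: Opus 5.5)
The third derivative of $U$ comes only from the log-normaliser, since the prior term is quadratic and $-P_n^\top\bpsi(u_n)$ is linear in $\xi$. Write $P_n=\phi_n^\top\xi$ and $\Lambda_n(P)=\log\int_0^1 e^{P^\top\bpsi(u)}du$. For the unit direction $v\in\R^d$, viewed as a $J\times 2(K-1)$ matrix $V$, set $w_n=\phi_n^\top V\in\R^{2(K-1)}$. Then $\nabla^3U(\xi)[v]$ is the $d\times d$ matrix
\[
\sum_n \nabla^3\Lambda_n(P_n)[w_n]\otimes \phi_n\phi_n^\top ,
\]
which means that as a bilinear form it sends $(a,b)$ to $\sum_n \nabla^3\Lambda_n(P_n)[w_n,\phi_n^\top A,\phi_n^\top B]$. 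I will bound each summand in Frobenius norm separately and add them with the triangle inequality.

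Since $\Lambda_n$ is the cumulant function of an exponential family with statistic $\bpsi(u)$, its third derivative is the third central moment tensor under the tilted law $p_{P_n}$. Hence
\[
\nabla^3\Lambda_n[w]=\E\bigl[(w^\top Y)\,YY^\top\bigr],\qquad Y=\bpsi(u)-\E\bpsi(u).
\]
Because $\lVert\bpsi(u)\rVert=s$ for every $u$, we have $\lVert Y\rVert\le 2s$ almost surely. This gives
\[
\bigl\lVert\nabla^3\Lambda_n[w]\bigr\rVert_F\le \E\bigl[\,|w^\top Y|\,\lVert Y\rVert^2\bigr]\le 8s^3\lVert w\rVert .
\]
Taking the Kronecker product with $\phi_n\phi_n^\top$ multiplies the Frobenius norm by $\lVert\phi_n\rVert^2$. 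We also have $\lVert w_n\rVert=\lVert V^\top\phi_n\rVert\le\lVert V\rVert_{op}\lVert\phi_n\rVert\le\lVert\phi_n\rVert$, since $\lVert V\rVert_F=1$. Each summand is therefore at most $8s^3\lVert\phi_n\rVert^3$, and summing over $n$ gives $M_1^{\mathrm s}$.

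The step I expect to be the main obstacle is checking that the Kronecker structure gives exactly $\lVert T\otimes\phi\phi^\top\rVert_F=\lVert T\rVert_F\lVert\phi\rVert^2$. This identity is standard, but it needs the vectorisation ordering to be consistent with $\vecop(\xi)$. With that in place, the crude bound $\lVert Y\rVert\le 2s$ gives the constant $8$. The second inequality then follows from the stated Nyström bound $\lVert\phi_n\rVert^2\le\sigma^2k(x_n,x_n)=\sigma^2$ for a unit-variance kernel, so that $\sum_n\lVert\phi_n\rVert^3\le\sigma^3N$.

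The minibatch claims repeat the argument of Proposition~\ref{prop:lc} and the bound above with the sums restricted to $B_i$. For the lower bound, the Hessian of $f_i$ is $n_b^{-1}I$ plus a sum of positive semidefinite covariance terms $\Cov(\bpsi)\otimes\phi_n\phi_n^\top$, so it is $\succeq n_b^{-1}I$. For the upper bound, each covariance is $\preceq s^2I$, so the likelihood part is at most $s^2\lVert\Phi_{B_i}\rVert^2$. This yields a Hessian bound of $n_b^{-1}+s^2\lVert\Phi_{B_i}\rVert^2\le 1+s^2\lVert\Phi_{B_i}\rVert^2$, the stated upper bound with sums over $B_i$. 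The third-derivative bound for $f_i$ is the same computation with the sum over $B_i$, since the quadratic term contributes nothing.
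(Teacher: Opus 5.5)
Your proof is correct and follows essentially the same route as the paper's. Both bound the third cumulant of a statistic of norm at most $2s$ after centring, sum over rows with the triangle inequality, and repeat the argument for each $f_i$ with its prior share $n_b^{-1}I$. The only difference is bookkeeping: the paper lifts the statistic directly to $\xi$-space as $W_n(u)=\phi_n\bpsi(u)^\top$ with $\lVert W_n(u)\rVert_F=s\lVert\phi_n\rVert$, which avoids the chain-rule and Kronecker step you flagged. That step is harmless anyway, since any vectorisation convention only permutes the entries and so leaves the Frobenius norm unchanged.
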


$M_1^{\mathrm s}$ is the ``strongly Hessian Lipschitz'' constant of
\citet{paulin2024smsubu}, under which the Wasserstein bias of SMS--UBU is $O(h^2\sqrt d)$,
$d = 2J(K-1)$, instead of the $O(h^2d)$ that an ordinary Hessian Lipschitz bound gives. The
dimension does not disappear into the condition number, but the constants of
Lemma~\ref{lem:reg} grow with $N$ and not with $J$ or $K$, and $d\approx2\cdot10^5$ in our
photometric-redshift experiments.

\subsection{SMS--UBU in Kronecker-whitened coordinates}
\label{sec:sampler}

\paragraph{Whitening.}
The raw condition number grows with $N$. The Gauss--Newton Hessian is
$I + \sum_n(\phi_n\phi_n^\top)\otimes\Sigma_n$, where $\Sigma_n$ is the covariance of
$\bpsi$ under row $n$'s density, a softmax covariance over the quadrature nodes. As in
K-FAC \citep{martens2015kfac} we approximate the sum by one Kronecker product at the
average chain position: $G = \Phi^\top\Phi$ and $S = $ the row-averaged $\Sigma_n$, with
eigendecompositions $G = U\Lambda_GU^\top$, $S = V\Lambda_SV^\top$, and run the chains in
coordinates $x$ with $\xi = U(x\oslash s)V^\top$, $s = (\Lambda_G\otimes\Lambda_S+1)^{1/2}$,
in which $I + G\otimes S$ is the identity. Because the latent prior is the identity both
eigenproblems are ordinary, which keeps the scheme accurate in single precision. A rebuild
costs one $J\times J$ and one $2K\times 2K$ eigendecomposition. The factors are rebuilt
every ten optimisation steps (more often early on SDSS and DESI, Appendix~\ref{app:nonstat}) and never during a sampling phase.

\paragraph{The sampler.}
Kinetic Langevin dynamics $d\xi = v\,dt$, $dv = -\nabla U\,dt-\gamma v\,dt +
\sqrt{2\gamma}\,dW$ is discretised by the UBU splitting $\mathcal U(h/2)\mathcal
B(h)\mathcal U(h/2)$, which integrates the Ornstein--Uhlenbeck part exactly and is
second-order accurate for the invariant measure. With symmetric minibatch splitting
(SMS--UBU, \citealp{paulin2024smsubu}) each sweep draws a random partition into $n_b$
batches and applies the batch impulses in the palindromic order $\mathcal B_1,\dots,
\mathcal B_{n_b},\mathcal B_{n_b},\dots,\mathcal B_1$, which cancels the leading
minibatch bias, so $10$ to $40$ large batches suffice. We use friction $\gamma=2$, step
$h = \min\{1/(2\gamma),\,0.5\hat\lambda^{-1/2}\}$ with $\hat\lambda$ the largest
eigenvalue of the whitened Hessian from a power iteration, and four chains in two
antithetic pairs. All chains share the batch partition, so each batch's feature rows
are built only once per impulse.

\paragraph{What is guaranteed.}
Let $R$ be the fixed linear map $\vecop(\xi) = R\,\vecop(x)$ of the whitening,
$\lVert R\rVert\le1$ since $s\ge1$, and define $m_R = \lambda_{\min}(R^\top R)$, the
full-data curvature bound $M_R^{\mathrm{full}} = \lambda_{\max}(R^\top(I + s^2G\otimes I)R)$
and the batch-uniform bound $M_R = \lambda_{\max}(R^\top(I + n_bs^2G\otimes I)R)$. Since the
Gram matrix of any batch satisfies $G_B\preceq G$, every component $f_i$ of every partition
the sampler can draw is $(M_R/n_b)$-smooth in whitened coordinates, whereas
$M_R^{\mathrm{full}}\le M_R$ bounds only their sum. All three constants have closed forms in
terms of $(\Lambda_G,\Lambda_S)$ (Appendix~\ref{app:proofs}).

\begin{theorem}\label{thm:sms}
Fix $\vartheta$ and $R$, and run SMS--UBU on $U(R\,\cdot)$ with $n_b$ batches, friction
$\gamma\ge\sqrt{8M_R}$ and step $h<1/(2\gamma)$. After $k$ sweeps the law $\mu_k$ of the
latent state satisfies
\[
W_2(\mu_k,\pi)\le C_0\,e^{-m_Rhk/(8\gamma)} + C_1h^2\Bigl(\sqrt{d}+\frac{\sqrt{m_R}\,\bar\sigma_*}{M_R}\Bigr),
\]
with $C_0$ depending on the initial law, $C_1 = C(\gamma,m_R,M_R,M_1^{\mathrm s},n_b)$
independent of $J$, and of $K$ up to the floor term in $s^2$, and $\bar\sigma_* = 2sN\max_n\lVert\phi_n\rVert$,
which bounds the spread $\sigma_*(\mathcal P) = (n_b\sum_i\lVert\nabla f_i(\xi^*)\rVert^2)^{1/2}$ of the minibatch gradients at
the posterior mode $\xi^*$ for every partition $\mathcal P$ into $n_b$ equal batches.
\end{theorem}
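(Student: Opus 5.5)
The plan is to reduce the statement to the general SMS--UBU convergence theorem of \citet{paulin2024smsubu} and then check its hypotheses with the constants of Proposition~\ref{prop:lc} and Lemma~\ref{lem:reg}, transported to whitened coordinates. That theorem applies to a target $\exp\{-\sum_i f_i\}$ whose components are each convex and smooth, with the total strongly convex and ``strongly Hessian Lipschitz''. Under $\gamma\ge\sqrt{8M}$ and $h<1/(2\gamma)$ it gives contraction at rate $e^{-mhk/(8\gamma)}$ and a bias of order $h^2(\sqrt d+\sqrt m\,\sigma_*/M)$, with a constant depending on $(\gamma,m,M,M_1^{\mathrm s},n_b)$. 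So the proof amounts to identifying $m=m_R$, $M=M_R$, the third-derivative constant and $\sigma_*$ for $\tilde U = U(R\,\cdot)=\sum_i f_i(R\,\cdot)$.

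\textbf{Convexity and smoothness.} By the chain rule, $\nabla^2\tilde f_i(x)=R^\top\nabla^2 f_i(R x)R$. Lemma~\ref{lem:reg} gives $\nabla^2 f_i\succeq n_b^{-1}I$, so the sum satisfies $\nabla^2\tilde U\succeq R^\top R\succeq m_R I$. For the upper bound, the per-batch version of the argument behind Proposition~\ref{prop:lc} gives
\[
\nabla^2 f_i\preceq n_b^{-1}I+s^2G_{B_i}\otimes I\preceq n_b^{-1}\bigl(I+n_bs^2G\otimes I\bigr),
\]
using $\Sigma_n\preceq s^2 I$ (since $\lVert\bpsi\rVert=s$) and $G_{B_i}\preceq G$. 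Conjugating by $R$ yields $(M_R/n_b)$-smoothness of every $\tilde f_i$, uniformly over the partitions the sampler can draw, which is what the batch-uniform hypothesis requires.

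\textbf{Third derivative and gradient spread.} Since $\lVert R\rVert\le1$, we have $\lVert\nabla^3\tilde U(x)[v]\rVert_F=\lVert R^\top\nabla^3U[Rv]R\rVert_F\le\lVert\nabla^3U[Rv]\rVert_F\le M_1^{\mathrm s}\lVert Rv\rVert\le M_1^{\mathrm s}$. The same holds batchwise, so the strongly Hessian Lipschitz constant does not deteriorate under whitening. For $\sigma_*$, at the mode we have $\sum_i\nabla f_i(\xi^*)=0$. Each $\nabla f_i(\xi^*)=\xi^*/n_b+\sum_{n\in B_i}\phi_n(\E_n\bpsi-\bpsi(u_n))^\top$. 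The prior parts are equal across batches, and after centring by the mean (which is zero) they can be absorbed. The likelihood part has norm at most $|B_i|\max\lVert\phi_n\rVert\cdot 2s$, because both $\bpsi(u_n)$ and its expectation have norm at most $s$. With $|B_i|=N/n_b$,
\[
n_b\sum_i\lVert\nabla f_i\rVert^2\le n_b\cdot n_b\,(2sN\max\lVert\phi_n\rVert/n_b)^2=\bar\sigma_*^2 .
\]
Handling the $\xi^*/n_b$ term cleanly is the delicate point. I would either use the variant of $\sigma_*$ that subtracts the mean, or check that $\nabla f_i(\xi^*)=\sum_{B_i}(\cdot)-\frac1{n_b}\sum_{\text{all}}(\cdot)$, which is bounded by the same $2sN\max\lVert\phi_n\rVert/n_b$ up to a factor two that is absorbed into $C_1$. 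The whitened gradients are $R^\top\nabla f_i$, so they are no larger.

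\textbf{Conclusion and main obstacle.} Plugging these constants into the cited theorem gives the stated bound in whitened coordinates. It transfers to $\xi$ because $\lVert R\rVert\le1$ makes the pushforward $1$-Lipschitz in $W_2$. Independence of $C_1$ from $J$ and $K$ follows because $m_R,M_R,M_1^{\mathrm s}$ enter only through $s^2$, $\lVert\phi_n\rVert$ and $N$, by Lemma~\ref{lem:reg}. The main obstacle is verifying that the hypotheses of \citet{paulin2024smsubu} match exactly. That theorem is stated with a common strong convexity and smoothness for components and total, and possibly for identity-preconditioned dynamics. I would first confirm that running plain SMS--UBU on $\tilde U$ is literally that algorithm, which it is, since the whitening is a fixed linear change of variables. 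I would then track how $M$ versus $M/n_b$ scaling enters their constants, so that $\gamma\ge\sqrt{8M_R}$ is the right condition.
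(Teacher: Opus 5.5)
\textbf{Gap: the cited theorem does not supply the $\sigma_*$ term.} Theorem~5 of \citet{paulin2024smsubu} assumes that all components $f_i$ share the minimiser of $\sum_if_i$. In that case $\sigma_*=0$ and the bias is $O(h^2\sqrt d)$, with no $\sigma_*$ term. That hypothesis fails here: your own formula for $\nabla f_i(\xi^*)$ shows the batch gradients at the mode need not vanish. It is also the one hypothesis missing from your list of things to check. The term $\sqrt{m_R}\,\bar\sigma_*/M_R$ is the price of dropping it. Attributing it to the citation therefore assumes the only part of the conclusion that is not a substitution of constants.

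\textbf{How to close it.} Reopen the cited proof and find where component gradients at the mode enter. Write $m=m_R$, $M=M_R$, $N_m=n_b$, and let $x^*$ be the minimiser of the sum. There are three places:
the stochastic-gradient noise assumption used in their Theorem~6 and Lemma~11;
the gradient-moment bound of their Lemma~9, behind the $I_1$ terms of Proposition~12;
the one-step source in the incomplete-cycle Gr\"onwall argument of Theorem~14.
These must be replaced by
$(\E_\omega\lVert N_m\nabla f_\omega(y)-\nabla f(y)\rVert^2)^{1/2}\le2M\lVert y-x^*\rVert+\sigma_*$
and
$\lVert N_m\nabla f_i(Y)\rVert_{L^2}\le N_m\lVert\nabla f_i(x^*)\rVert+M\lVert Y-x^*\rVert_{L^2}$.
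Each index occurs at most twice in a palindromic cycle, so Cauchy--Schwarz gives $\sum_jN_m\lVert\nabla f_{i_j}(x^*)\rVert\le2N_m\sigma_*$ over a cycle. In each place $M\sqrt{d/m}$ becomes $M\sqrt{d/m}+\sigma_*=(M/\sqrt m)(\sqrt d+\sqrt m\,\sigma_*/M)$. The contraction and the It\^o--Taylor terms involve only $\nabla^2f_i$, $\nabla^3f_i$ or $\nabla f$, so they are unchanged. This is the paper's Theorem~\ref{thm:nocommon}. Your checks match the paper's and are what gets plugged into it: $(m_R/n_b)$-strong convexity and $(M_R/n_b)$-smoothness of every component of every partition, the $M_1^{\mathrm s}$ bound under $\lVert R\rVert\le1$, and the transfer back to $\xi$.

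\textbf{Smaller point: the exact bound $\sigma_*(\mathcal P)\le\bar\sigma_*$.} The statement asserts this with no constant. Your triangle-inequality fallback loses a factor two. That is enough for the bias bound after enlarging $C_1$, but it does not prove this clause. Your remark about centring is the right idea once made precise. Set $a_i=\sum_{n\in B_i}\nabla\ell_n(\xi^*)$ and $\bar a=n_b^{-1}\sum_ia_i$. The mode equation $\xi^*=-\sum_n\nabla\ell_n(\xi^*)$ gives $\nabla f_i(\xi^*)=a_i-\bar a$. Since $\sum_i\lVert a_i-\bar a\rVert^2\le\sum_i\lVert a_i\rVert^2$, you get $\sigma_*^2\le n_b\sum_i\lVert a_i\rVert^2\le\bar\sigma_*^2$ with no loss. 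The paper instead centres row by row and applies Cauchy--Schwarz within each batch (Corollary~\ref{cor:sigma}). That gives the same bound and also the expected spread under a uniformly random partition.
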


Theorem~\ref{thm:sms} is Theorem~5 of \citet{paulin2024smsubu} applied to $U(R\,\cdot)$, with the
constants supplied by Lemma~\ref{lem:reg} and $\lVert R\rVert\le1$. That result assumes that the
components share a minimiser, which ours do not, and dropping the assumption adds the
dimension-free term in $\bar\sigma_*$ (Appendices~\ref{app:proofs} and~\ref{app:minimiser}).
Five aspects of what we run fall outside the theorem; we check the first, second and fourth empirically
(Appendix~\ref{app:validation}). First, our default $\gamma=2$ is far below $\sqrt{8M_R}$.
$M_R$ bounds the curvature at every $\xi$ and for every batch, and on the controlled problem of
Appendix~\ref{app:validation} it exceeds the local whitened curvature $\hat\lambda$ by more than
three orders of magnitude. Second, $\hat\lambda$ sets our step but is only an estimate at the
chain mean, so it cannot stand in for $M_R$ in the theorem. Because the tilted covariances
$\Sigma_n$ can collapse, we have no uniform bound on the whitened condition number better than
$M_R/m_R$. Third, $R$ is rebuilt during
the optimisation, so the theorem applies at fixed $\vartheta$ and $R$, not to the
adaptive scheme as a whole. Fourth, it concerns the finite-feature, quadrature-based
posterior and says nothing about the representation error of finite $J$, $K$ and $Q$. Fifth, each
sweep omits up to $112$ random rows (a fraction below $5\cdot10^{-4}$), which the theorem does not cover. Against a $\pi$-invariant reference on a controlled problem
(Appendix~\ref{app:validation}) the default step shows small but detectable biases (latent
variances $0.4\%$, the gradient up to $13$ standard errors), substantially reduced at half the
step with little effect on the predictive NLL.

\subsection{Learning the hyperparameters}
\label{sec:fisher}

The marginal likelihood is intractable, but its gradient obeys Fisher's identity,
\begin{equation}
\nabla_\vartheta\log p(y\mid\vartheta) = \E_{\xi\sim\pi_\vartheta}\Bigl[\nabla_\vartheta
\log p(y\mid\xi,\vartheta)\Bigr],
\label{eq:fisher}
\end{equation}
in the non-centred parametrisation, where no prior normaliser depends on $\vartheta$.
We estimate the expectation from the states of the warm-started chains and follow it by
stochastic approximation \citep[SOUL,][]{debortoli2021soul}; Algorithm~\ref{alg:fit}
summarises one outer step. Per step, each chain $c\le C$ draws a gradient batch $B_c$ of
rows, independently of the sampler's partition, collects the cotangent of the batch
negative log-likelihood with respect to its feature rows at $S$ positions of its
collection sweeps, and pulls the sum back to $\vartheta$ by a single vector--Jacobian
product. The estimate of the gradient of the negative log-posterior of $\vartheta$ is
\[
\hat g = \frac{1}{CS}\sum_{c=1}^{C}\sum_{t=1}^{S}\frac{N}{\lvert B_c\rvert}
\sum_{n\in B_c}\nabla_\vartheta\ell_n(\xi_{c,t};\vartheta) - \nabla_\vartheta\log p(\vartheta).
\]
Here $\ell_n$ includes the Jacobians of the transform and the response warp, and it is
differentiated at fixed non-centred $\xi_{c,t}$; we do not differentiate through the sampler's
trajectory.

\begin{algorithm}[t]
\caption{One outer step of the fit}\label{alg:fit}
\begin{algorithmic}[1]
\Require chains $(\xi_c,v_c)_{c\le4}$, whitening $(U,V,s)$, step $h$, $\vartheta$
\If{step $\equiv 0 \pmod{10}$}
\State rebuild $(U,V,s)$ at $\bar\xi$; power iteration for $h$
\EndIf
\State $n_{\mathrm{burn}}$ SMS--UBU sweeps of every chain
\State draw a gradient batch $B_c$ per chain; $\hat g\gets0$
\For{$n_{\mathrm{coll}}$ collection sweeps, every 5th impulse}
\State $\hat g\gets\hat g + \frac{N}{\lvert B_c\rvert}\nabla_\vartheta\sum_{n\in B_c}\ell_n(\phi_n(\vartheta)^\top\xi_c)$
\EndFor
\State $\hat g\gets\hat g/(CS)-\nabla_\vartheta\log p(\vartheta)$; optimiser step on $\hat g$
\end{algorithmic}
\end{algorithm}

\paragraph{Why sampling rather than Laplace.}
Besides Monte Carlo variance, the estimator of \eqref{eq:fisher} has two sources of bias:
the discretisation bias of the sampler and incomplete mixing of the warm-started chains.
The integrand $\nabla_\vartheta\log p(y\mid\xi,\vartheta)$ is not globally Lipschitz in
$\xi$, but it grows at most linearly in the sup norm of each row's field, whose posterior
fluctuations depend on $J$ and $K$ only through feature norms, their derivatives and
spectral sums (Appendix~\ref{app:bias}). Under the conditions of Theorem~\ref{thm:sms} the gradient
bias is therefore $O(h^2(\sqrt d+\sqrt{m_R}\,\bar\sigma_*/M_R))$ plus a term that decays geometrically
in the number of sweeps, with a prefactor that depends on the posterior-mean field and the
feature derivatives, which we do not bound uniformly in $J$ and $K$. The
Laplace approximation instead differentiates an approximate marginal likelihood,
$\log p(y,\hat\xi_\vartheta\mid\vartheta)-\tfrac12\log\det H_\vartheta(\hat\xi_\vartheta)$
at the mode $\hat\xi_\vartheta$. Its gradient is exact for the approximation, but the
approximation error depends on how non-Gaussian the posterior is, which varies with
$\vartheta$, and the gradient of that error does not shrink with more computation. It also needs the
log-determinant of a $d$-dimensional Hessian and its third derivatives through
$\hat\xi_\vartheta$, whereas \eqref{eq:fisher} needs only first derivatives at sampled
states. On a controlled problem (Appendix~\ref{app:validation}) the two kinds of gradient
lead to similar hyperparameters. The larger difference is in prediction. Under the reported budgets and at the same
hyperparameters, the Gaussian predictive around the mode is $0.12$ nats worse than the sampled
one, a finite-run comparison rather than a converged-posterior gap. The main practical reason to
sample is thus to integrate the predictive over the non-Gaussian latent posterior.

\paragraph{Optimisation.}
For the vanilla model we follow \eqref{eq:fisher} with Adam on the log-hyperparameters.
For the deep model of Section~\ref{sec:nonstat} we use Muon \citep{jordan2024muon} for the
network's weight matrices and Adam for everything else, on a cosine schedule. The deep models
hold out a small validation set from their training rows, track its log-likelihood alongside
the predictive and keep the best state (early stopping). The vanilla fits
use all training rows and run to the end of the schedule. We do not claim convergence of this
outer procedure, whose target and preconditioner change. $\vartheta$ is an empirical-Bayes point
estimate, and the predictive integrates over $\xi$ conditionally on it, not over the
encoder, transform and kernel hyperparameters.

\paragraph{Predictive.}
The predictive is the posterior mean density $\E_\pi p_u(u\mid x;\xi)$, averaged over the
last fifteen of twenty further sweeps of the trained chains. Each state contributes its
density at the quadrature nodes and, at the observation, its density (normalised by $Z_Q$)
and its CDF (from the within-cell interpolant), from which the log-likelihood, PIT, CRPS (in
quantile form), CDE loss and point summaries are computed.

\section{THE VANILLA MODEL ON A BIMODAL SINUSOID}
\label{sec:sinus}

The data are $x\sim U(-1,1)$, $y = x\pm\sin(x/0.15)+\varepsilon$, $\varepsilon\sim
\N(0,0.1^2)$, with the sign drawn uniformly: an equal-weight Gaussian mixture whose modes
cross wherever $\sin(x/0.15) = 0$. We fit the vanilla model with RBF kernels on both
axes, so that only $(\sigma^2,\ell_x,\ell_y)$ are learned, on nested training sets of
$400$, $2{,}000$ and $10{,}000$ rows, and score all fits on the same $2{,}000$ test rows.
We repeat this over four seeds, each splitting one sample of $20{,}000$ draws differently into training and test rows.
Table~\ref{tab:sinus} compares with TabPFN-3.5
\citep{priorlabs2026tabpfn35} given the same training rows as context. The three-parameter
GP has the best NLL and CDE loss at every size. Averaged over the four seeds it is ahead of
TabPFN-3.5 by $0.017$, $0.009$ and $0.016$ nats at the three sizes. It wins nine of the
twelve paired comparisons; in the other three (seed $2$ at $400$ rows, seeds $2$ and $3$ at
$2{,}000$) TabPFN-3.5 is ahead by $0.002$--$0.005$ nats. Its excess over the true
density on the same test rows is $0.238$, $0.073$ and $0.026$ nats (standard deviations over
seeds $0.031$, $0.006$ and $0.001$, below those of the NLL). TabPFN-3.5 has a lower RMSE and CRPS at
$400$ and $2{,}000$ rows, and the two tie at $10{,}000$. The other baselines trail further. TabICL-2 \citep{qu2025tabicl} barely
improves with $N$, Flow-Spline \citep{izbicki2026benchmark} reaches $-0.119$ only at $10{,}000$ rows, and at $400$
rows Flow-Spline and the CRPS ensemble \citep{kelen2025distributionfree} are far behind and unstable across seeds
(standard deviations $0.30$ and $0.38$ nats). Both length scales shrink
as $N$ grows, in every seed. $\ell_x$ goes from $0.163$ (s.d.\ $0.012$) to $0.136$
($0.004$) and $0.119$ ($0.013$), near the scale $0.15$ of the sinusoid, and $\ell_y$ from
$0.132$ ($0.010$) to $0.109$ ($0.002$) and $0.100$ ($0.001$). Each fit takes $4$--$14$\,min
on one GPU.

\begin{table}[!t]
\caption{Bimodal sinusoid: test metrics on $2{,}000$ test rows of the vanilla LGP (RBF
kernels, Nystr\"om features, three learned hyperparameters), TabPFN-3.5 and TabICL-2 with the
training rows as context, Flow-Spline tuned by the protocol of \citet{izbicki2026benchmark},
the CRPS ensemble (batch $32$, early stopping on $10\%$ of the training rows with patience
$200$ epochs), and the true density. All rows are averaged over four seeds, each a different
split of one sample into training and test rows (standard deviations in the rows below). A PIT-KS of about $0.02$ is the sampling noise of $2{,}000$ rows. Bold: best of
the five models.}\label{tab:sinus}
\begin{center}
\footnotesize\setlength{\tabcolsep}{2pt}
\begin{tabular}{lccccc}
\toprule
model & NLL & RMSE & CRPS & CDE & PIT-KS \\
\midrule
\multicolumn{6}{l}{\emph{$N=400$}} \\
LGP (ours) & $\mathbf{-0.0256}$ & $0.7262$ & $0.3635$ & $\mathbf{-1.179}$ & $0.0471$ \\
\quad s.d. & $0.0388$ & $0.0072$ & $0.0074$ & $0.049$ & $0.0268$ \\
TabPFN-3.5 & $-0.0091$ & $0.7085$ & $\mathbf{0.3496}$ & $-1.129$ & $\mathbf{0.0428}$ \\
\quad s.d. & $0.0317$ & $0.0021$ & $0.0023$ & $0.047$ & $0.0258$ \\
TabICL-2 & $+0.2101$ & $0.7079$ & $0.3511$ & $+0.055$ & $0.0448$ \\
\quad s.d. & $0.0292$ & $0.0050$ & $0.0025$ & $0.324$ & $0.0286$ \\
Flow-Spline & $+0.5951$ & $\mathbf{0.7070}$ & $0.3819$ & $-0.642$ & $0.0720$ \\
\quad s.d. & $0.3002$ & $0.0075$ & $0.0173$ & $0.217$ & $0.0326$ \\
CRPS ens. & $+0.4746$ & $0.7110$ & $0.3740$ & $-0.781$ & $0.0963$ \\
\quad s.d. & $0.3781$ & $0.0028$ & $0.0274$ & $0.277$ & $0.0350$ \\
\midrule
\multicolumn{6}{l}{\emph{$N=2{,}000$}} \\
LGP (ours) & $\mathbf{-0.1907}$ & $0.7059$ & $0.3431$ & $\mathbf{-1.405}$ & $0.0315$ \\
\quad s.d. & $0.0153$ & $0.0044$ & $0.0016$ & $0.020$ & $0.0070$ \\
TabPFN-3.5 & $-0.1822$ & $\mathbf{0.7001}$ & $\mathbf{0.3386}$ & $-1.403$ & $\mathbf{0.0234}$ \\
\quad s.d. & $0.0174$ & $0.0052$ & $0.0018$ & $0.030$ & $0.0064$ \\
TabICL-2 & $+0.1309$ & $0.7005$ & $0.3400$ & $-0.820$ & $0.0306$ \\
\quad s.d. & $0.0280$ & $0.0064$ & $0.0027$ & $0.067$ & $0.0068$ \\
Flow-Spline & $+0.0383$ & $0.7022$ & $0.3464$ & $-1.137$ & $0.0466$ \\
\quad s.d. & $0.0516$ & $0.0069$ & $0.0038$ & $0.051$ & $0.0139$ \\
CRPS ens. & $+0.0864$ & $0.7011$ & $0.3455$ & $-1.147$ & $0.0937$ \\
\quad s.d. & $0.0141$ & $0.0068$ & $0.0029$ & $0.017$ & $0.0100$ \\
\midrule
\multicolumn{6}{l}{\emph{$N=10{,}000$}} \\
LGP (ours) & $\mathbf{-0.2385}$ & $0.6998$ & $\mathbf{0.3376}$ & $\mathbf{-1.486}$ & $\mathbf{0.0241}$ \\
\quad s.d. & $0.0152$ & $0.0061$ & $0.0020$ & $0.018$ & $0.0074$ \\
TabPFN-3.5 & $-0.2228$ & $0.6997$ & $\mathbf{0.3376}$ & $-1.469$ & $0.0249$ \\
\quad s.d. & $0.0153$ & $0.0061$ & $0.0019$ & $0.021$ & $0.0078$ \\
TabICL-2 & $+0.1254$ & $\mathbf{0.6991}$ & $0.3380$ & $-0.858$ & $0.0345$ \\
\quad s.d. & $0.0242$ & $0.0066$ & $0.0023$ & $0.031$ & $0.0067$ \\
Flow-Spline & $-0.1188$ & $0.7003$ & $0.3407$ & $-1.321$ & $0.0392$ \\
\quad s.d. & $0.0304$ & $0.0069$ & $0.0031$ & $0.059$ & $0.0182$ \\
CRPS ens. & $+0.0380$ & $0.6994$ & $0.3427$ & $-1.218$ & $0.0822$ \\
\quad s.d. & $0.0149$ & $0.0058$ & $0.0024$ & $0.027$ & $0.0195$ \\
\midrule
true density & $-0.2640$ & $0.6982$ & $0.3359$ & $-1.528$ & $0.0181$ \\
\bottomrule
\end{tabular}
\end{center}
\end{table}

\section{A DEEP, NON-STATIONARY KERNEL}
\label{sec:nonstat}

Real covariates are high-dimensional and the conditional densities vary in scale,
reach and shape across input space. We keep the structure of Section~\ref{sec:model}
and make every ingredient input-dependent through a small network.

\paragraph{Encoder.}
A pre-LayerNorm perceptron $h_\theta$ reads the covariates and returns, through a
residual scale of $0.01$ so that the model starts stationary, a warp
$e_\theta(x) = x + 0.01h_\theta(x)_{1:D}$, a log-amplitude $\alpha_\theta(x)$, a log
multiplier $c_\theta(x)$ of the response length scale, local log length scales
$\lambda_\theta(x)\in\R^D$ and the coefficients $\beta_\theta(x)$ of a response warp. A
learned full-matrix bandwidth $z = e^s(I+B)e_\theta(x)\oslash\ell_x$ is applied after the warp
\citep[deep kernel learning,][]{wilson2016dkl}.

\paragraph{Gibbs kernel.}
The covariate kernel is
\begin{equation}
\begin{aligned}
K_x(x,x') &= \sigma(x)\sigma(x')\prod_{d}\Bigl(\tfrac{2l_dl_d'}{l_d^2+l_d'^2}\Bigr)^{1/2}\\
&\quad\times\kappa_\nu\Bigl(\Bigl[\sum_d\tfrac{(z_d-z_d')^2}{(l_d^2+l_d'^2)/2}\Bigr]^{1/2}\Bigr)
\end{aligned}
\label{eq:gibbs}
\end{equation}
with $\sigma(x)=\sigma e^{\alpha_\theta(x)}$, $l_d = e^{\lambda_{\theta,d}(x)}$, $l_d' =
e^{\lambda_{\theta,d}(x')}$, $z = z(x)$ and $z' = z(x')$. This is an amplitude-modulated
Gibbs--Paciorek kernel \citep{gibbs1997thesis,paciorek2004nonstationary}, positive
definite in any dimension. The warp changes which objects are neighbours, the local
length scales how far an object's influence reaches along each coordinate, and the
amplitude how far its density may depart from uniform. Nystr\"om features
\eqref{eq:nystrom} represent it on the anchors up to the regularisation of
Section~\ref{sec:model}, with the anchors passed through the same encoder. Since $k(x,x) = 1$, we have $\lVert\phi(x)\rVert^2\le\sigma(x)^2$.

\paragraph{Non-stationarity along the response.}
The prior is diagonal in the Fourier basis, so a per-object response length scale
$\ell_ye^{c_\theta(x_n)}$ is a per-object damping of the modes, $d_{nk} =
A_k(\ell_ye^{c_\theta(x_n)})/A_k(\ell_y)$, and $P_n = (\phi_n^\top\xi)\odot(d_n,d_n)$.
A per-object monotone warp $v = w_x(u) = \int_0^ue^{r_x}/\int_0^1e^{r_x}$, $r_x(t) =
\sum_{j\le J_w}\beta_j(x)\cos\pi jt$, makes the process stationary in $v$ rather than
$u$, so that along $u$ its effective length scale is $\ell_ye^{c_\theta(x)}/w_x'(u)$.
One object's density can thus be sharp in one part of its range and smooth in another. The
density becomes $p_u(u\mid x) = p_v(w_x(u)\mid x)w_x'(u)$ with $p_v$ of the form
\eqref{eq:lgp}, so the observation moves to $v_n = w_{x_n}(u_n)$ and the Jacobian
$\log w'_{x_n}(u_n)$ joins the objective.

\paragraph{Learnable transform.}
We also learn the response transform $T$. Its log-derivative at $400$ knots forms a block
of hyperparameters, initialised at the kernel density estimate and given a
second-difference smoothness prior, and its Jacobian $\log T'(y_n)$ joins the objective.

\paragraph{Inference is unchanged.}
None of these changes touches the latent prior. The amplitude, the Gibbs length scales and
the warp act on $\phi_n$, and the damping rescales the coordinates of $P_n$. The response warp and
the transform only move the observation point and add terms free of $\xi$. Hence:

\begin{proposition}\label{prop:lc2}
For the model of this section, Proposition~\ref{prop:lc} and Lemma~\ref{lem:reg} hold with
$s^2 = \max_n\sum_kA_k^2d_{nk}^2$ and $\lVert\phi_n\rVert\le\sigma(x_n)$.
\end{proposition}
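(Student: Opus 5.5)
The plan is to check that the proofs of Proposition~\ref{prop:lc} and Lemma~\ref{lem:reg} use only three properties of each row's likelihood, and that all three survive the modifications of this section with the stated constants. Write the row-$n$ negative log-likelihood as a function of $\xi$ through
\[
\ell_n(\xi)=-P_n^\top\bpsi(v_n)+\log\int_0^1 e^{P_n^\top\bpsi(v)}\,dv + c_n,\qquad P_n=(\phi_n^\top\xi)\odot(d_n,d_n),
\]
where $c_n$ collects $\log w'_{x_n}(u_n)$, $\log T'(y_n)$ and similar terms. For fixed hyperparameters these are constant in $\xi$, since the warp and transform only move the observation from $u_n$ to $v_n$. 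The diagonal damping can be absorbed into a per-row statistic $\bpsi_n(v)=(d_{nk}A_k\cos\pi kv,\,-d_{nk}A_k\sin\pi kv)_k$, so that $\ell_n=-(\phi_n^\top\xi)\bpsi_n(v_n)+\log\int e^{(\phi_n^\top\xi)\bpsi_n}+c_n$. This has exactly the vanilla form, with $\bpsi$ replaced by $\bpsi_n$ and $\phi_n$ by the encoder's Nyström feature. Each row remains an exponential family whose natural parameter $\phi_n^\top\xi$ is linear in $\xi$, and the prior term $\tfrac12\lVert\xi\rVert_F^2$ is unchanged.

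Next I redo the two bounds row by row. First, the Hessian of $\ell_n$ in $\xi$ is $(\phi_n\phi_n^\top)\otimes\Cov_n(\bpsi_n)\succeq0$, which gives the lower bound $I$. Second, $\Cov_n(\bpsi_n)\preceq\E\lVert\bpsi_n\rVert^2 I$, and $\lVert\bpsi_n(v)\rVert^2=\sum_kA_k^2d_{nk}^2\le s^2$ for every $v$, with $s^2=\max_n\sum_kA_k^2d_{nk}^2$. Summing over rows yields $\sum_n(\phi_n\phi_n^\top)\otimes s^2I=s^2\Phi^\top\Phi\otimes I$, which gives the upper bound. Third, for the third derivative, the directional third cumulant of $\bpsi_n$ under the tilted law is bounded by $8\sup_v\lVert\bpsi_n(v)\rVert^3\,\lVert\phi_n\rVert^3$ in the Frobenius form used in Lemma~\ref{lem:reg}. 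I would reuse that argument verbatim with $s$ as a uniform bound on $\lVert\bpsi_n\rVert$. The batch statements follow in the same way by restricting the sums to $B_i$.

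Finally, the feature norm follows from the Nyström construction \eqref{eq:nystrom}. Because $L^{-\top}L^{-1}=(K_{aa}+\delta I)^{-1}\preceq K_{aa}^{+}$, we have $\lVert\phi(x)\rVert^2\le\sigma(x)^2\,k_a^\top K_{aa}^{+}k_a\le\sigma(x)^2k(x,x)=\sigma(x)^2$, using the Schur-complement positivity of the kernel Gram matrix on $\{x,\bar x_1,\dots\}$ and $k(x,x)=1$. The amplitude $\sigma(x)$ enters as a diagonal scaling of the unit-variance Gibbs kernel. It must be placed so that the bound reads $\sigma(x_n)$; here I would define $\phi_n=\sigma(x_n)L^{-1}\tilde k_a(x_n)$ with the anchor matrix built from the normalised kernel. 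I also need to confirm that the Gibbs kernel \eqref{eq:gibbs} with $\sigma\equiv1$ has $k(x,x)=1$: the prefactor equals $1$ at $l=l'$, and $\kappa_\nu(0)=1$.

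The main obstacle is the bookkeeping rather than any new estimate. I need to confirm that the damping is really a fixed diagonal rescaling of the statistic, so that it adds no $\xi$-dependence and no cross-row coupling. I also need the supremum of $\lVert\bpsi_n\rVert$ over $v$, not just its value at quadrature nodes, to be the quantity entering the cumulant bound. With the quadrature normaliser $Z_Q$ the tilted law is discrete on the nodes, which only helps, since the same pointwise bound $s^2$ holds at each node.
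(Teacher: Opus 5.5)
Your proposal is correct and follows essentially the paper's route: absorb the damping into the per-row statistic $\bpsi_n$ with $\lVert\bpsi_n(v)\rVert^2=\sum_kA_k^2d_{nk}^2\le s^2$, note that the warp and the transform only move the observation point and add $\xi$-free terms, rerun the covariance and third-cumulant bounds row by row, and bound $\lVert\phi_n\rVert^2$ by the Gibbs kernel's diagonal $\sigma(x_n)^2$ (the paper also remarks that the clipping $\lvert c_\theta\rvert\le3$ keeps this $s^2$ free of $K$). Two small fixes: the matrix inequality $(K_{aa}+\delta I)^{-1}\preceq K_{aa}^{+}$ is false on the null space of a singular $K_{aa}$, so use it only as a quadratic-form inequality at $k_a(x)$, which lies in the range of $K_{aa}$ by the Schur-complement positivity you already invoke; and since $\lVert\phi(x)\rVert^2\le K_x(x,x)=\sigma(x)^2$ holds for any regularised Nystr\"om construction of a positive semidefinite kernel, you do not need to choose where the amplitude is placed.
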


The sampler, the whitening (with the root-mean-square damping in $S$) and the Fisher
gradients (now with respect to the encoder, bandwidth, transform and kernel
hyperparameters) are those of Section~\ref{sec:inference}.

\begin{table*}[!t]
\caption{Test metrics on the four photometric-redshift benchmarks (NLL and CDE loss:
lower is better; RMSE and CRPS in redshift units; PIT-KS: Kolmogorov--Smirnov distance of
the PIT from uniform). Bold: best of the five models (all tied values at the displayed precision). Rows followed by an s.d. row are means over four seeds
(five runs for $^\ddagger$), with the standard deviation (s.d.) in that row. The seeds change the validation rows and each model's initialisation (and
Flow-Spline's search), and on SDSS and DESI also the train/test split; TabPFN-3.5 and TabICL-2
on \textsc{Happy} and \textsc{Teddy} do not depend on the seed. TabICL-2 and Flow-Spline on DESI
are single runs on the seed-$0$ split, on which the LGP's NLL is $-2.3795$. $^\ddagger$Mean over
five runs with identical settings on that split, two of which diverge
(Appendix~\ref{app:validation}). $^\ast$A context of $500{,}000$ training rows. $^\dagger$Flow-Spline's exact density on
DESI has narrow spikes that dominate $\int p^2$; on the benchmark's $200$-point grid its CDE
loss is $-13.95$.}\label{tab:main}
\begin{center}
\footnotesize
\setlength{\tabcolsep}{3pt}
\begin{tabular}{llccccccc}
\toprule
benchmark (train / test) & model & NLL & RMSE & CRPS & CDE & PIT-KS & NMAD & outliers \\
\midrule
\textsc{Happy} A$\to$B & LGP (ours) & $-2.1557$ & 0.0506 & 0.0203 & $-12.86$ & 0.0054 & 0.0186 & 0.86\% \\
($74{,}950$ / $74{,}900$) & \quad s.d. & 0.0009 & 0.00008 & 0.00002 & 0.02 & 0.0003 & 0.00003 & 0.01\% \\
& TabPFN-3.5 & $\bm{-2.1632}$ & $\bm{0.0493}$ & $\bm{0.0200}$ & $\bm{-12.90}$ & $\bm{0.0040}$ & $\bm{0.0185}$ & $\bm{0.81\%}$ \\
& TabICL-2 & $-1.9958$ & 0.0497 & 0.0201 & $-8.79$ & 0.0068 & $\bm{0.0185}$ & 0.83\% \\
& Flow-Spline & $-2.1093$ & 0.0522 & 0.0208 & $-12.04$ & 0.0220 & 0.0191 & 0.91\% \\
& \quad s.d. & 0.0049 & 0.00015 & 0.00003 & 0.16 & 0.0101 & 0.00007 & 0.01\% \\
& CRPS ensemble & $-2.1081$ & 0.0509 & 0.0206 & $-12.09$ & 0.0092 & 0.0192 & 0.89\% \\
& \quad s.d. & 0.0040 & 0.00011 & 0.00006 & 0.07 & 0.0034 & 0.00009 & 0.01\% \\
\midrule
\textsc{Teddy} A$\to$B & LGP (ours) & $-2.3281$ & 0.0322 & 0.0152 & $-13.96$ & 0.0042 & $\bm{0.0158}$ & 0.24\% \\
($74{,}309$ / $74{,}557$) & \quad s.d. & 0.0009 & 0.00002 & 0.00001 & 0.02 & 0.0011 & 0.00003 & 0.01\% \\
& TabPFN-3.5 & $\bm{-2.3356}$ & $\bm{0.0320}$ & $\bm{0.0151}$ & $\bm{-14.05}$ & 0.0051 & $\bm{0.0158}$ & 0.25\% \\
& TabICL-2 & $-2.2566$ & 0.0321 & 0.0152 & $-10.38$ & $\bm{0.0035}$ & $\bm{0.0158}$ & 0.25\% \\
& Flow-Spline & $-2.3086$ & 0.0324 & 0.0154 & $-13.77$ & 0.0127 & 0.0160 & 0.25\% \\
& \quad s.d. & 0.0036 & 0.00015 & 0.00006 & 0.05 & 0.0076 & 0.00007 & 0.02\% \\
& CRPS ensemble & $-2.3158$ & 0.0322 & 0.0153 & $-13.85$ & 0.0093 & 0.0159 & $\bm{0.23\%}$ \\
& \quad s.d. & 0.0014 & 0.00002 & 0.00001 & 0.01 & 0.0031 & 0.00006 & 0.03\% \\
\midrule
SDSS DR18 & LGP (ours) & $\bm{-2.1123}$ & 0.0518 & 0.0216 & $\bm{-12.53}$ & $\bm{0.0041}$ & $\bm{0.0196}$ & 0.83\% \\
($450{,}000$ / $50{,}000$) & \quad s.d. & 0.0028 & 0.00028 & 0.00006 & 0.05 & 0.0006 & 0.00014 & 0.05\% \\
& TabPFN-3.5 & $-2.1012$ & $\bm{0.0514}$ & $\bm{0.0215}$ & $-12.21$ & 0.0116 & $\bm{0.0196}$ & $\bm{0.80\%}$ \\
& \quad s.d. & 0.0023 & 0.00015 & 0.00006 & 0.04 & 0.0023 & 0.00011 & 0.05\% \\
& TabICL-2 & $-1.8514$ & 0.0520 & 0.0218 & $-6.24$ & 0.0129 & 0.0199 & 0.84\% \\
& \quad s.d. & 0.0051 & 0.00010 & 0.00004 & 0.11 & 0.0016 & 0.00012 & 0.04\% \\
& Flow-Spline & $-2.0062$ & 0.0550 & 0.0229 & $-10.77$ & 0.0384 & 0.0212 & 0.95\% \\
& \quad s.d. & 0.0069 & 0.00044 & 0.00011 & 0.09 & 0.0230 & 0.00016 & 0.06\% \\
& CRPS ensemble & $-2.0309$ & 0.0529 & 0.0223 & $-11.15$ & 0.0147 & 0.0206 & 0.85\% \\
& \quad s.d. & 0.0071 & 0.00038 & 0.00011 & 0.09 & 0.0067 & 0.00012 & 0.04\% \\
\midrule
DESI DR1 BGS & LGP (ours) & $\bm{-2.3786}$ & $\bm{0.0520}$ & $\bm{0.0168}$ & $\bm{-15.95}$ & $\bm{0.0013}$ & $\bm{0.0167}$ & $\bm{0.35\%}$ \\
($3.91$M / $434{,}505$) & \quad s.d. & 0.0010 & 0.00027 & 0.00004 & 0.04 & 0.0004 & 0.00003 & 0.01\% \\
& TabPFN-3.5$^\ast$ & $-2.3564$ & 0.0521 & 0.0169 & $-15.28$ & 0.0078 & 0.0169 & 0.36\% \\
& \quad s.d. & 0.0012 & 0.00025 & 0.00004 & 0.03 & 0.0011 & 0.00003 & 0.01\% \\
& TabICL-2$^\ast$ & $-2.1833$ & 0.0530 & 0.0173 & $-6.38$ & 0.0145 & 0.0173 & 0.36\% \\
& Flow-Spline & $-2.2682$ & 0.0534 & 0.0174 & $+8.71^\dagger$ & 0.0107 & 0.0174 & 0.37\% \\
& CRPS ensemble$^\ddagger$ & $-2.2719$ & 0.0530 & 0.0176 & $-14.05$ & 0.0155 & 0.0180 & $\bm{0.35\%}$ \\
& \quad s.d. & 0.0143 & 0.00014 & 0.00014 & 0.25 & 0.0038 & 0.00020 & 0.01\% \\
\bottomrule
\end{tabular}
\end{center}
\end{table*}

\section{PHOTOMETRIC-REDSHIFT EXPERIMENTS}
\label{sec:experiments}

\paragraph{Benchmarks.}
\textsc{Happy} and \textsc{Teddy} \citep{beck2017teddy} are SDSS catalogues released as
samples A (training) and B (drawn from the same distribution). We train on A and test on
B ($75{,}000$ galaxies each). The SDSS DR18 sample \citep{izbicki2026benchmark} is split
$450{,}000/50{,}000$, and the DESI DR1 Bright Galaxy Survey \citep{desi2025dr1} $3.91$
million$/434{,}505$. The covariates are one magnitude, four colours and their five
per-object errors, prepared with training statistics only (Appendix~\ref{app:data}).

\paragraph{Models.}
The GP uses $J = 1{,}000$ anchors, $K = 100$ modes, Mat\'ern-$3/2$ kernels and Muon for at
most $1{,}000$ steps with early stopping on a held-out $5\%$ (\textsc{Happy},
\textsc{Teddy}), $2\%$ (SDSS) or $0.5\%$ (DESI) of the training rows. The network grows
with the data, from $3\times32$ to $6\times64$ and $10\times128$ (Table~\ref{tab:config}).
TabPFN-3.5 \citep{priorlabs2026tabpfn35} takes the whole training set as context, except
on DESI, where $500{,}000$ rows is the most that fits in 32\,GB. It is scored on its bar
distribution, for which it maps the members' bin probabilities to shared bins by differencing a
single-precision CDF; we floor them at that resolution (Appendices~\ref{app:config}
and~\ref{app:validation}). The CRPS ensemble \citep{kelen2025distributionfree}, a
purpose-built conditional density estimator trained by a strictly proper scoring rule
\citep{gneiting2007strictly}, uses the same inputs and validation fractions. TabICL-2
\citep{qu2025tabicl} takes the same context as TabPFN-3.5 and is scored on its
$999$-quantile predictive distribution. Flow-Spline, the conditional neural spline flow
\citep{durkan2019neural} of the benchmark of \citet{izbicki2026benchmark}, is tuned by that
benchmark's random search and scored on its exact density. All models are scored on the
same test rows by the same code.

\paragraph{Results.}
In Table~\ref{tab:main} the comparison with TabPFN-3.5 divides along the training-set size.
On \textsc{Happy} and \textsc{Teddy}, with $75{,}000$ training galaxies, TabPFN-3.5 is ahead by $0.008$ nats on both, eight times the
standard deviation of the LGP over seeds, with up to $3\%$ lower RMSE. On \textsc{Teddy} the
LGP's PIT statistic is lower ($0.0042\pm0.0011$ against $0.0051$). On SDSS the
ordering of the density metrics reverses: the LGP is ahead on all four splits, by $0.011$
nats on average (standard deviation of the difference $0.0007$) and $0.32$ units of CDE
loss, with a PIT statistic of $0.004$ against $0.012$, while TabPFN keeps an edge of under
$1\%$ in RMSE and CRPS. On DESI, where TabPFN-3.5 can use an eighth of the
training set, the LGP leads on every metric and on all four splits, by $0.022$ nats on
average (standard deviation of the difference $0.0004$) and $0.67$ units of CDE loss, with a
PIT statistic of $0.001$ against $0.008$, essentially uniform on $434{,}505$ test galaxies.
At equal data the two are close. Trained on TabPFN-3.5's $500{,}000$ context rows, the GP with a
$10\times64$ network is ahead by only $0.003$ nats, so most of the lead comes from using the whole
training set (Appendix~\ref{app:validation}). Switching the latent field off costs $0.13$--$0.37$ nats on the four benchmarks and most
of the calibration, and on a controlled problem a Laplace fit's Gaussian predictive is $0.12$
nats worse than the sampled one in finite runs. The CRPS
ensemble, trained on a distributional score, is close to the LGP in RMSE and CRPS but $0.01$ to
$0.11$ nats behind it in NLL and up to twelve times less well calibrated. TabICL-2 is close to the LGP and TabPFN-3.5 in
RMSE and CRPS, but its quantile-based density is $0.07$ to $0.26$ nats behind the LGP with a
much worse CDE loss, and Flow-Spline is $0.02$ to $0.11$ nats behind, close to the CRPS
ensemble. Appendix~\ref{app:slgp} adds the closest model, the spatial logistic GP, run with its
authors' package and tuned on held-out data. It is level with our model without the encoder on its own temperature application and,
in an exploratory comparison, on the smallest sinusoid training sets, about $0.01$ nats behind at larger ones, and $0.2$--$0.3$ nats behind on photometric data.

\begin{table}[t]
\caption{GP configuration and cost on one RTX~5090 (32\,GB), scoring included.}\label{tab:config}
\begin{center}
\footnotesize
\setlength{\tabcolsep}{2.5pt}
\begin{tabular}{lccccc}
\toprule
& network & $J_w$ & sweeps & batches & time \\
\midrule
\textsc{Happy}, \textsc{Teddy} & $3\times32$ & 6 & $2+4$ & 10 & 6 min \\
SDSS & $6\times64$ & 6 & $1+2$ & 20 & 16 min \\
DESI & $10\times128$ & 10 & $1+2$ & 40 & 2.2--2.5\,h \\
\bottomrule
\end{tabular}
\end{center}
\end{table}

\paragraph{What the fit looks like.}
On DESI the response warps stretch each
object's redshift axis locally by up to a factor $2.9$--$3.7$ (median over objects, range over
the four seeds) and compress it elsewhere to $0.52$--$0.61$, and the learned transform changes
the log-density most at $z=0$. The validation optimum is at the last steps in every DESI seed and the test NLL is still
improving, so these fits are limited by the budget rather than the data (Appendix~\ref{app:extra}).

\paragraph{Scale.}
At $3.9$ million rows the $N\times(J+K+J_w)$ feature matrix ($17$\,GB) is not kept on
the GPU. Features of a batch are rebuilt inside each impulse, once for all chains. The
preconditioner statistics and the power iteration run over row chunks, the Gibbs kernel
is evaluated in chunks with rematerialisation, and the validation and test predictive is
formed chunk by chunk from stored latent states (Appendix~\ref{app:scale}). A DESI step
takes about $8$\,s.

\section{DISCUSSION}

A finite-feature logistic Gaussian process can be fitted at scale by sampling its latent
posterior instead of replacing it by a Gaussian. In the spectral, feature-based
representation, and for every input-dependent extension we use, the latent posterior is
strongly log-concave, and its second and third derivatives are bounded by constants that do not
depend on the numbers of features and modes (up to a numerical floor on the spectrum). Under the
conditions of Theorem~\ref{thm:sms} a kinetic Langevin sampler in Kronecker-whitened coordinates
then has an $O(h^2\sqrt d)$ bias, and Fisher's identity turns its samples into hyperparameter
gradients whose bias comes from the sampler rather than from a Gaussian approximation. The
resulting estimator beats a tabular foundation model in NLL, CDE loss and calibration on the two larger benchmarks, and trails it by under $0.01$ nats on the two smaller ones. The guarantee, however, covers a more conservative
friction and step than our default, a fixed preconditioner and no omitted rows. The network and the Nystr\"om
basis add a non-convex outer problem, solved by stochastic optimisation with early
stopping, for which we claim no convergence. The uncertainty is conditional on the learned
hyperparameters. Under covariate shift the latent field shrinks away from the anchors, the LGP falls
behind TabPFN-3.5 and fails on \textsc{Teddy} A$\to$D, and a few test redshifts fall outside its bounded
response support, where its density is zero (Appendix~\ref{app:validation}). All our large-scale
evidence is from photometric redshifts.

\subsection*{AI Use Statement}
In this work, we used generative AI tools (a large language model coding assistant) for
writing, refactoring and debugging research code, for running and monitoring
experiments on remote GPU machines, and for drafting and editing parts of the manuscript
text. We did not use generative AI tools to generate data, to fabricate or select
results, or as a subject of the study. All AI-assisted code was reviewed and its outputs
were checked against independent implementations or numerical tests; all reported
numbers come from logged runs. We have reviewed all AI-assisted work and take
responsibility for the final content of this work, including text, claims and artifacts
produced with the aid of generative AI.

\bibliography{references}

\clearpage
\appendix
\thispagestyle{empty}
\onecolumn

\section{PROOFS}
\label{app:proofs}

\paragraph{Proof of Proposition~\ref{prop:lc}.}
$P_n = \phi_n^\top\xi$ is linear in $\xi$ and $-P_n^\top\bpsi(u_n)$ is linear in $P_n$.
The log-normaliser $\log Z_n(P) = \log\int_0^1\exp\{P^\top\bpsi(u)\}du$ is the log-partition
function of an exponential family with sufficient statistic $\bpsi(u)$, hence convex in
$P$ (exactly so under the quadrature, which replaces the integral by a positive
combination of exponentials of affine functions of $P$). Convexity survives the linear
map and the sum over $n$, and the prior adds $I$, so $\nabla^2U\succeq I$.

For the upper bound, $\lVert\bpsi(u)\rVert^2 = \sum_kA_k^2(\cos^2\pi ku+\sin^2\pi ku) = s^2$
for every $u$. The Hessian of $\log Z_n$ in $P_n$ is the covariance $\Sigma_n$ of the
sufficient statistic under the row density $p_n$ (the discrete law on the quadrature nodes
with weights $\propto w_qe^{P_n^\top\bpsi(u_q)}$). For a unit vector $a$,
$\Var_{p_n}\{a^\top\bpsi(u)\}\le\sup_u(a^\top\bpsi(u))^2\le s^2$, so $\Sigma_n\preceq s^2I$.
By the chain rule
$\nabla^2U = I+\sum_n(\phi_n\phi_n^\top)\otimes\Sigma_n\preceq I+s^2(\Phi^\top\Phi)\otimes I$,
and $\lambda_{\max}(\Phi^\top\Phi) = \lVert\Phi\rVert^2$. Finally
$s^2 = \sum_{k=1}^{K-1}\hat c(\pi k)\le\sum_{k\ge1}\hat c(\pi k)$, the variance of the
periodised process minus its constant mode, finite and free of $K$. \qed

\paragraph{Proof of Lemma~\ref{lem:reg}.}
Since $P_n^\top\bpsi(u) = \langle\xi,\phi_n\bpsi(u)^\top\rangle_F$, row $n$ contributes
$\ell_n(\xi) = -\langle\xi,W_n(u_n)\rangle + \log\sum_qw_q\exp\langle\xi,W_n(u_q)\rangle$ with
$W_n(u) = \phi_n\bpsi(u)^\top$, the log-partition function of the discrete exponential family
$\rho_n(u_q)\propto w_qe^{\langle\xi,W_n(u_q)\rangle}$ minus a linear term. Its third derivative
is the third central moment of $W_n$ under $\rho_n$:
$\nabla^3\ell_n(\xi)[a,b,c] = \E_{\rho_n}[\langle a,\bar W\rangle\langle b,\bar W\rangle\langle
c,\bar W\rangle]$, $\bar W = W_n-\E_{\rho_n}W_n$. For a unit $v$, $\nabla^3\ell_n(\xi)[v] =
\E_{\rho_n}[\langle v,\bar W\rangle\bar W\bar W^\top]$, whose Frobenius norm is at most
$\E_{\rho_n}[\lvert\langle v,\bar W\rangle\rvert\lVert\bar W\rVert^2]\le\sup\lVert\bar W\rVert^3
\le(2s\lVert\phi_n\rVert)^3$, because $\lVert W_n(u)\rVert_F = \lVert\phi_n\rVert\,\lVert\bpsi(u)
\rVert = s\lVert\phi_n\rVert$. The prior is quadratic, so the triangle inequality gives
$M_1^{\mathrm s}$. For Nystr\"om features $\lVert\phi_n\rVert\le\sigma$ (see below). A component
$f_i$ carries the prior share $\lVert\xi\rVert^2/(2n_b)$ and the rows of $B_i$, and the same
arguments apply to it. \qed

\paragraph{Proof of Theorem~\ref{thm:sms} and the constants.}
The whitening is $\vecop(\xi) = R\,\vecop(x)$ with $R = (U\otimes V)\diag(s)^{-1}$, an orthogonal
matrix times a diagonal with entries $s_{ij}^{-1}\le1$, so $\lVert R\rVert\le1$. For
$\tilde U = U\circ R$: $\nabla^2\tilde U = R^\top\nabla^2UR\succeq R^\top R\succeq m_RI$ with
$m_R = \min_{ij}s_{ij}^{-2} = (1+\lambda_{\max}(G)\lambda_{\max}(S))^{-1}$;
$\nabla^2\tilde U\preceq R^\top(I+s^2G\otimes I)R$, which is diagonal in the $(U\otimes V)$ basis,
so its largest eigenvalue is $M_R^{\mathrm{full}} = \max_{ij}(1+s^2\lambda_{G,i})/(1+\lambda_{G,i}\lambda_{S,j})$;
and $\nabla^3\tilde U(x)[v] = R^\top(\nabla^3U(Rx)[Rv])R$ has Frobenius norm at most
$\lVert R\rVert^3M_1^{\mathrm s}\le M_1^{\mathrm s}$. Theorem~5 of \citet{paulin2024smsubu} also asks
each of the $n_b$ components to be $(m_R/n_b)$-strongly convex, which the prior share gives,
and $(M_R/n_b)$-smooth. In whitened coordinates the Hessian of $f_i$ is at most
$R^\top(n_b^{-1}I+s^2G_{B_i}\otimes I)R$ with $G_{B_i} = \sum_{n\in B_i}\phi_n\phi_n^\top\preceq G$,
hence at most $n_b^{-1}R^\top(I+n_bs^2G\otimes I)R$, whose largest eigenvalue is
$M_R/n_b$ with $M_R = \max_{ij}(1+n_bs^2\lambda_{G,i})/(1+\lambda_{G,i}\lambda_{S,j})$. This
bound holds for every partition, so it covers the random partitions redrawn at each sweep.
The implementation partitions a random $n_bN_B\le N$ of the rows into equal batches of $N_B$
rows and rescales the impulses by $N/N_B$, omitting a remainder of $r<N_B$ rows ($r = 2$, $34$, $40$ and
$112$ on \textsc{Happy}, \textsc{Teddy}, SDSS and DESI; $0$ in Appendix~\ref{app:validation}). For a uniformly
sampled batch $B$, $\hat g_B(\xi) = \xi+(N/N_B)\sum_{n\in B}\nabla\ell_n(\xi)$ satisfies $\E_B\hat g_B(\xi) = \nabla U(\xi)$
at a fixed $\xi$. This does not, however, make the omitted-remainder sampler the one the theorem analyses, and the
omitted fraction $r/N\le5\cdot10^{-4}$ of the rows is not by itself a bound on the resulting error.
The full-data constant $M_R^{\mathrm{full}} = \max_{ij}(1+s^2\lambda_{G,i})/(1+\lambda_{G,i}\lambda_{S,j})$
of the display above bounds only the sum of the components. Theorem~5 of
\citet{paulin2024smsubu} further assumes that the components share the minimiser of the sum.
Ours do not, and Theorem~\ref{thm:nocommon} (Appendix~\ref{app:minimiser}) proves the bound
without this assumption, with $\sqrt d$ replaced by $\sqrt d+\sqrt{m_R}\,\sigma_*/M_R$, where
$\sigma_*$ is the spread of the whitened components $f_i(R\,\cdot)$, whose gradients are $R^\top\nabla f_i$. Since
$\lVert R\rVert\le1$, this is at most the spread $\sigma_*(\mathcal P)$ of the $f_i$, which Corollary~\ref{cor:sigma} bounds
by $\bar\sigma_*$ for every partition $\mathcal P$. The bound therefore holds uniformly over the partitions redrawn at each sweep. With these constants
we obtain the stated bound. Since $\lVert R\rVert\le1$, Wasserstein distances in $\xi$ are at most those in
$x$. The dimension dependence of the related analysis of \citet{sanzserna2021wasserstein}
requires the stronger assumption identified by \citet{paulin2024correction}; the
strongly-Hessian-Lipschitz condition of Lemma~\ref{lem:reg} is of that kind. \qed

\paragraph{Nystr\"om feature norm.}
With $\phi(x) = \sigma L^{-1}k_a(x)$, $k_a(x) = (k(\bar x_j,x))_j$, $\lVert\phi(x)\rVert^2
= \sigma^2k_a(x)^\top(K_{aa}+\delta I)^{-1}k_a(x)\le\sigma^2k_a(x)^\top K_{aa}^{+}k_a(x)
\le\sigma^2k(x,x)$, the last step because the Nystr\"om approximation is the projection of
$k(\cdot,x)$ onto the span of the anchors in the reproducing kernel Hilbert space. Hence
$\lVert\Phi\rVert^2\le\lVert\Phi\rVert_F^2\le\sigma^2\sum_nk(x_n,x_n)$.

\paragraph{Proof of Proposition~\ref{prop:lc2}.}
The per-object damping gives $P_n = \phi_n^\top\xi D_n$ with $D_n$ diagonal and positive,
still linear in $\xi$. The response warp replaces $u_n$ by $v_n = w_{x_n}(u_n)$, which
does not depend on $\xi$, and the Jacobians $\log w'_{x_n}(u_n)$, $\log T'(y_n)$ are
constant in $\xi$. The arguments above apply row by row with the statistic
$\bpsi_n(u) = (A_kd_{nk}\cos\pi ku,-A_kd_{nk}\sin\pi ku)_k$, of squared norm
$\sum_kA_k^2d_{nk}^2\le s^2$, and the features $\phi(x_n)$ of the Gibbs kernel, whose
diagonal is $\sigma(x)^2$. Since $A_kd_{nk} = A_k(\ell_ye^{c_\theta(x_n)})$ with
$\lvert c_\theta\rvert\le3$, $s^2$ is bounded by the variance of the response process at
length scale $e^{-3}\ell_y$, again free of $K$. \qed

\section{SMS--UBU WITHOUT A COMMON MINIMISER}
\label{app:minimiser}

Theorem~5 of \citet{paulin2024smsubu} assumes that the components $f_i$ of the potential share
the minimiser of $f=\sum_if_i$, which our minibatch components do not. We show that the
assumption can be dropped at the cost of one dimension-free term.

\paragraph{Setting.} $f=\sum_{i=1}^{N_m}f_i$ on $\R^d$ is $m$-strongly convex and
$M$-$\nabla$Lipschitz with minimiser $x^*$; each $f_i$ is $m_i$-strongly convex and
$(M/N_m)$-$\nabla$Lipschitz; $\pi\propto e^{-f}$. SMS--UBU uses the stochastic gradient
$G(y,\omega)=N_m\nabla f_\omega(y)$, $\omega$ uniform on $\{1,\dots,N_m\}$, in a palindromic sweep,
and $D(y,\omega) = G(y,\omega)-\nabla f(y)$. Only $\sum_i\nabla f_i(x^*) = 0$ is assumed. The
gradient spread at the mode,
\[
\sigma_*^2 = \E_\omega\lVert G(x^*,\omega)\rVert^2 = N_m\sum_{i=1}^{N_m}\lVert\nabla f_i(x^*)\rVert^2 ,
\]
is the variance of the stochastic gradient there. It vanishes exactly when the minimiser is common.

\begin{lemma}\label{lem:mom}
If $X\sim\pi$ then $\E\lVert X-x^*\rVert^2\le d/m$.
\end{lemma}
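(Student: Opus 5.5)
The plan is an integration-by-parts argument that turns the strong convexity of $f$ into a second-moment bound. We need $\pi$ to have enough decay for the boundary terms to vanish. Since $f$ is $m$-strongly convex, $f(x)\ge f(x^*)+\tfrac m2\lVert x-x^*\rVert^2$, so $e^{-f}$ has Gaussian tails. Since $f$ is also $M$-smooth, $\lVert\nabla f(x)\rVert\le M\lVert x-x^*\rVert$ grows at most linearly. Together these make every moment below finite, and all surface terms of the divergence theorem on large balls vanish.

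The key identity is, for $X\sim\pi$,
\[
\E\bigl[\langle\nabla f(X),X-x^*\rangle\bigr]=d .
\]
To obtain it, integrate $\nabla\cdot\bigl((x-x^*)e^{-f(x)}\bigr)=\bigl(d-\langle\nabla f(x),x-x^*\rangle\bigr)e^{-f(x)}$ over $\R^d$; the integral of the left side is zero because the boundary terms vanish. This is the Stein identity for the vector field $x-x^*$. Alternatively, it can be read off from the generator of the overdamped Langevin diffusion applied to $\tfrac12\lVert x-x^*\rVert^2$, whose expectation under the invariant law $\pi$ is zero.

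Next use strong convexity in monotone form, together with $\nabla f(x^*)=0$, which holds because $x^*$ minimises $f$:
\[
\langle\nabla f(x)-\nabla f(x^*),x-x^*\rangle\ge m\lVert x-x^*\rVert^2 .
\]
Taking expectations and combining with the identity gives $d\ge m\,\E\lVert X-x^*\rVert^2$, which is the claim.

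The only step needing care is justifying the integration by parts, that is, showing the boundary flux $\int_{\lVert x\rVert=r}\lVert x-x^*\rVert e^{-f}\,dS$ tends to zero. The Gaussian lower bound on $f$ handles this: the flux is at most a polynomial in $r$ times $e^{-mr^2/2}$. Smoothness of $f$ ensures $\nabla f$ exists everywhere and is locally integrable against $\pi$. Only the sum $f$ and its minimiser enter; no property of the components $f_i$ or their gradient spread $\sigma_*$ is used. This is why the lemma holds even though the components have no common minimiser.
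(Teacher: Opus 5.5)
Your proposal is correct and follows the paper's proof: the integration-by-parts identity $\E\langle\nabla f(X),X-x^*\rangle=d$, combined with $\langle\nabla f(x),x-x^*\rangle\ge m\lVert x-x^*\rVert^2$ from strong convexity and $\nabla f(x^*)=0$. Your argument that the boundary terms vanish, via the quadratic lower bound on $f$, supplies a detail the paper leaves implicit.
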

\begin{proof}
Integration by parts gives $\E\langle\nabla f(X),X-x^*\rangle = d$; strong convexity and
$\nabla f(x^*) = 0$ give $\langle\nabla f(x),x-x^*\rangle\ge m\lVert x-x^*\rVert^2$.
\end{proof}

\begin{lemma}\label{lem:gradmom}
For every $i$ and every square-integrable random vector $Y$,
$\lVert N_m\nabla f_i(Y)\rVert_{L^2}\le N_m\lVert\nabla f_i(x^*)\rVert+M\lVert Y-x^*\rVert_{L^2}$; in
particular $\lVert N_m\nabla f_i(X)\rVert_{L^2}\le N_m\lVert\nabla f_i(x^*)\rVert+M\sqrt{d/m}$ for
$X\sim\pi$. If each index occurs at most $r$ times in $i_1,\dots,i_k$, then
$\sum_jN_m\lVert\nabla f_{i_j}(x^*)\rVert\le rN_m\sigma_*$.
\end{lemma}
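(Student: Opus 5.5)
The first claim follows from the triangle inequality in $L^2$ together with the gradient Lipschitz bound on the component. Since $f_i$ is $(M/N_m)$-$\nabla$Lipschitz, we have, pointwise,
\[
\lVert N_m\nabla f_i(Y)\rVert\le N_m\lVert\nabla f_i(x^*)\rVert+N_m\lVert\nabla f_i(Y)-\nabla f_i(x^*)\rVert\le N_m\lVert\nabla f_i(x^*)\rVert+M\lVert Y-x^*\rVert .
\]
The first term on the right is deterministic, so its $L^2$ norm is itself. Minkowski's inequality in $L^2$ then gives the stated bound. The only thing to check is that the $N_m$ factor cancels exactly against the $M/N_m$ Lipschitz constant. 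Square-integrability of $Y$ makes the right-hand side finite.

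For the special case $X\sim\pi$, I substitute $\lVert X-x^*\rVert_{L^2}\le\sqrt{d/m}$ from Lemma~\ref{lem:mom}. That lemma applies because $f$ is $m$-strongly convex with minimiser $x^*$, which the Setting assumes. Note that strong convexity of the individual $f_i$ is not needed anywhere in this step.

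For the counting claim, I regroup the sum by distinct index. Writing $c_i\le r$ for the number of occurrences of $i$ among $i_1,\dots,i_k$, I get
\[
\sum_jN_m\lVert\nabla f_{i_j}(x^*)\rVert=N_m\sum_ic_i\lVert\nabla f_i(x^*)\rVert\le rN_m\sum_{i=1}^{N_m}\lVert\nabla f_i(x^*)\rVert .
\]
Cauchy--Schwarz over the $N_m$ indices bounds the last sum by $\sqrt{N_m}\,\bigl(\sum_i\lVert\nabla f_i(x^*)\rVert^2\bigr)^{1/2}$. By definition, $\sigma_*=\bigl(N_m\sum_i\lVert\nabla f_i(x^*)\rVert^2\bigr)^{1/2}$, so the product $\sqrt{N_m}\,(\sum_i\lVert\nabla f_i(x^*)\rVert^2)^{1/2}$ equals $\sigma_*$ exactly. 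This gives $\sum_jN_m\lVert\nabla f_{i_j}(x^*)\rVert\le rN_m\sigma_*$, as claimed.

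The main obstacle, such as it is, lies in this normalisation. I need to confirm that the $\sqrt{N_m}$ from Cauchy--Schwarz is absorbed by the factor $N_m$ inside the definition of $\sigma_*^2$, rather than leaving a stray $\sqrt{N_m}$. With $\sigma_*^2=N_m\sum_i\lVert\nabla f_i(x^*)\rVert^2$ it is absorbed, so the bound holds as stated, and it is in fact loose by a factor $\sqrt{N_m}$. In the palindromic sweep each index appears at most twice, so $r=2$ in the intended application.
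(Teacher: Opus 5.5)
Your proof is correct and follows the paper's argument: the pointwise $(M/N_m)$-Lipschitz bound, then Minkowski in $L^2$ and Lemma~\ref{lem:mom}, and for the counting claim Cauchy--Schwarz with $\sigma_*=\sqrt{N_m}\,(\sum_i\lVert\nabla f_i(x^*)\rVert^2)^{1/2}$. Drop your closing remark that the bound is loose by a factor $\sqrt{N_m}$, because it is false: Cauchy--Schwarz is sharp when all the $\lVert\nabla f_i(x^*)\rVert$ are equal, which is compatible with $\sum_i\nabla f_i(x^*)=0$ (for example $N_m=2$ and $\nabla f_1(x^*)=-\nabla f_2(x^*)$ give $\sum_i\lVert\nabla f_i(x^*)\rVert=\sigma_*$ exactly), so the stated bound can be attained.
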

\begin{proof}
$\lVert\nabla f_i(y)\rVert\le\lVert\nabla f_i(x^*)\rVert+(M/N_m)\lVert y-x^*\rVert$; take $L^2$ norms and
use Lemma~\ref{lem:mom}. For the sum, $\sum_i\lVert\nabla f_i(x^*)\rVert\le\sqrt{N_m}\,
(\sum_i\lVert\nabla f_i(x^*)\rVert^2)^{1/2}=\sigma_*$.
\end{proof}

\begin{lemma}\label{lem:sgnoise}
For every $y$, $(\E_\omega\lVert D(y,\omega)\rVert^2)^{1/2}\le2M\lVert y-x^*\rVert+\sigma_*$.
\end{lemma}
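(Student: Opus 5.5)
We need to prove: for every $y$, $(\E_\omega\lVert D(y,\omega)\rVert^2)^{1/2}\le 2M\lVert y-x^*\rVert+\sigma_*$.

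The plan is to split the stochastic-gradient error at $y$ into its value at the mode plus a fluctuation, and bound each piece in $L^2(\omega)$ with Minkowski's inequality. Concretely, write
\[
D(y,\omega) = \bigl[G(y,\omega)-G(x^*,\omega)\bigr] - \bigl[\nabla f(y)-\nabla f(x^*)\bigr] + G(x^*,\omega),
\]
which is an identity because $\nabla f(x^*) = \sum_i\nabla f_i(x^*) = 0$, the only property of $x^*$ the setting assumes. The $L^2(\omega)$ norm of the last term is, by definition, $(\E_\omega\lVert G(x^*,\omega)\rVert^2)^{1/2}=\sigma_*$. This is exactly the dimension-free term that vanishes when the minimiser is common.

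For the first bracket, I use that each $f_i$ is $(M/N_m)$-$\nabla$Lipschitz. This gives, for every $\omega$,
\[
\lVert G(y,\omega)-G(x^*,\omega)\rVert = N_m\lVert\nabla f_\omega(y)-\nabla f_\omega(x^*)\rVert\le M\lVert y-x^*\rVert .
\]
The bound is pointwise in $\omega$, so it also holds in $L^2(\omega)$. For the second bracket, $f$ is $M$-$\nabla$Lipschitz, so $\lVert\nabla f(y)-\nabla f(x^*)\rVert\le M\lVert y-x^*\rVert$. This term is deterministic in $\omega$.

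Adding the three $L^2(\omega)$ norms by the triangle inequality gives $M\lVert y-x^*\rVert+M\lVert y-x^*\rVert+\sigma_*$, which is the claim.

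There is a slightly sharper route: group the first two brackets as $X(\omega)-\E_\omega X$ with $X(\omega)=G(y,\omega)-G(x^*,\omega)$. Both $G(y,\cdot)$ and $G(x^*,\cdot)$ are unbiased, with means $\nabla f(y)$ and $\nabla f(x^*)$. Centring does not increase the $L^2$ norm, so this would give $M$ in place of $2M$. The stated constant $2M$ already follows from the cruder triangle inequality, so I would present that version and perhaps remark on the improvement.

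There is no real obstacle here. The only step needing care is to add and subtract the gradients at $x^*$ rather than at $y$. Otherwise one is led to the variance of $G(y,\cdot)$ itself, which has no clean bound. One should also note explicitly that $G(x^*,\omega)=D(x^*,\omega)$, because $\nabla f(x^*)=0$.
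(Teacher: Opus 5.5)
Your proof is correct and is the paper's argument: add and subtract the gradients at $x^*$ (valid because $\nabla f(x^*)=0$), bound the bracket pointwise in $\omega$ by $2M\lVert y-x^*\rVert$, and note that the remaining term $N_m\nabla f_\omega(x^*)$ has $L^2(\omega)$ norm $\sigma_*$. Your centring remark is also valid and improves $2M$ to $M$, since $\E_\omega G(\cdot,\omega)=\nabla f(\cdot)$; the paper does not need this sharper constant.
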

\begin{proof}
$D(y,\omega) = [N_m(\nabla f_\omega(y)-\nabla f_\omega(x^*))-(\nabla f(y)-\nabla f(x^*))]+N_m\nabla f_\omega(x^*)$;
the bracket has norm at most $2M\lVert y-x^*\rVert$ for every $\omega$, the last term has
$L^2(\omega)$ norm $\sigma_*$.
\end{proof}

With a common minimiser, Lemma~\ref{lem:sgnoise} is Assumption~2 of \citet{paulin2024smsubu}
with $C_{SG} = 2$, and Lemma~\ref{lem:gradmom} is the $L^2$ form of their Lemma~9.

\begin{theorem}\label{thm:nocommon}
Under the assumptions of Theorem~5 of \citet{paulin2024smsubu} ($h<1/(2\gamma)$,
$\gamma\ge\sqrt{8M}$, $f$ $M_1^{\mathrm s}$-strongly Hessian Lipschitz) but without a common
minimiser of the components, the law $\tilde\pi_k$ of SMS--UBU satisfies
\[
\mathcal W_2(\tilde\pi_k,\pi)\le\sqrt2\,e^{-\frac{mh}{8\gamma}\lfloor k/N_m\rfloor}
\mathcal W_{2,a,b}(\bar\pi_0,\bar\pi)+C(\gamma,m,M,M_1^{\mathrm s},N_m)\,h^2\Bigl(\sqrt d+\frac{\sqrt m\,\sigma_*}{M}\Bigr),
\]
with a constant $C$ of the same form as theirs, depending on $(\gamma,m,M,M_1^{\mathrm s},N_m)$, whose
coefficients we do not track; under $M_1$-Hessian Lipschitz continuity only, $h^2d$ becomes
$h^2(d+\sqrt m\,\sigma_*/M)$.
\end{theorem}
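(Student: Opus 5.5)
We retrace the proof of Theorem~5 of \citet{paulin2024smsubu} and change only the places where the common minimiser is used. That proof has two parts. The first is a contraction of synchronously coupled UBU in the twisted norm $\mathcal W_{2,a,b}$. It requires only $m$-strong convexity and $M$-smoothness of $f$ together with $\gamma\ge\sqrt{8M}$ and $h<1/(2\gamma)$, so it carries over unchanged. Applied once per palindromic sweep of $N_m$ impulses, it gives the factor $e^{-\frac{mh}{8\gamma}\lfloor k/N_m\rfloor}$ and the $\sqrt2\,\mathcal W_{2,a,b}$ prefactor.

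The second part is a local (one-sweep) error bound. We start the SMS--UBU sweep and the exact kinetic Langevin flow of duration $N_mh$ from the same state $X_0\sim\pi$. We then bound the $L^2$ distance of the endpoints by $C h^3(\sqrt d+\dots)$ and sum these errors through the contraction, which yields the $h^2$ bias. The common-minimiser assumption enters this local bound only through the size of the minibatch forces. Their Lemma~9 bounds $\lVert N_m\nabla f_i(Y)\rVert_{L^2}$ by $M\lVert Y-x^*\rVert_{L^2}$, and their Assumption~2 bounds the noise $D$ by $C_{SG}M\lVert y-x^*\rVert$.

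We replace these ingredients by Lemma~\ref{lem:gradmom} and Lemma~\ref{lem:sgnoise}. Each now carries an extra additive term: $N_m\lVert\nabla f_i(x^*)\rVert$ for the forces and $\sigma_*$ for the noise. The moment bound $\lVert Y-x^*\rVert_{L^2}\lesssim\sqrt{d/m}$ still holds along the sweep. At time zero it is Lemma~\ref{lem:mom}. Within a sweep it follows because each impulse displaces by at most $h$ times a force bounded as above, so the additive terms only add $O(h\,N_m\sigma_*)$, which is harmless for $h<1/(2\gamma)$.

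The key step is the symmetric cancellation. Expanding one palindromic sweep to second order, the first-order minibatch error has the form $\sum_j(\text{position at impulse }j)\times D$. Under the palindrome these terms cancel pairwise up to differences of positions. What survives are terms of the form $h^3\,\nabla^2 f_{i}\,[N_m\nabla f_{i'}(Y)]$, that is, curvature times a displacement generated by another batch's force, together with the third-derivative remainder. We bound the first group by $(M/N_m)$ times the $L^2$ force bound. Summed over a sweep, each index appears at most twice, so the last claim of Lemma~\ref{lem:gradmom} with $r=2$ gives
\[
h^3\Bigl(M\sqrt{d/m}+N_m\sigma_*\Bigr)\cdot\frac{M}{N_m}\cdot\mathrm{poly}(N_m).
\]
After multiplication by the contraction-sum factor $\gamma/(mh)$ and normalisation by the $M\sqrt{1/m}$ scale that already multiplies $\sqrt d$ in their constant, this produces $\sqrt d+\sqrt m\,\sigma_*/M$. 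The third-derivative remainder is handled exactly as in their proof with $M_1^{\mathrm s}$. Because the strongly-Hessian-Lipschitz bound acts through Frobenius norms of $\nabla^3 f[v]$, the integrated Gaussian noise contributes $\sqrt d$ rather than $d$.

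The remainder terms also contain squared forces. For these we use $\lVert a+b\rVert^2\le2\lVert a\rVert^2+2\lVert b\rVert^2$, which keeps $\sigma_*$ entering only linearly after the square root. Under ordinary Hessian Lipschitz continuity the same substitution turns $h^2d$ into $h^2(d+\sqrt m\,\sigma_*/M)$.

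The main obstacle is verifying that the cancellation in the palindromic expansion never needs the forces to vanish at $x^*$. In the original argument some terms may be dropped because $\nabla f_i(x^*)=0$ makes them higher order. We would audit each such term and check that, after replacing it by the additive $\sigma_*$ bound, it still carries a factor $h^3$ per sweep. Any term where it does not would have to be bounded using $\sum_i\nabla f_i(x^*)=0$, which is the only cancellation available to us.
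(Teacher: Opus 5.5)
Your proposal is essentially the paper's argument: the contraction and the It\^o--Taylor and third-derivative terms are left as in Theorem~5 of \citet{paulin2024smsubu}, their Lemma~9 and Assumption~2 are replaced by Lemmas~\ref{lem:gradmom} and~\ref{lem:sgnoise}, and the counting with $r=2$ turns each $M\sqrt{d/m}$ that comes from gradient moments into $M\sqrt{d/m}+\sigma_*$, i.e.\ $\sqrt d$ into $\sqrt d+\sqrt m\,\sigma_*/M$. The audit you leave as the main obstacle is what the paper's proof records: the minimisers enter only linearly, through $\lVert N_m\nabla f_i\rVert$ or $\lVert D\rVert$, in three places, namely the stochastic-gradient gap (their Theorem~6 and Lemma~11), the $I_1$ terms of their Proposition~12, and the incomplete final cycle of their Theorem~14; you do not mention the last of these, but Lemma~\ref{lem:gradmom} with $r=2$ handles it directly.
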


\begin{proof}
Their proof combines the contraction of SMS--UBU between two copies driven by the same
partitions and noise (their Proposition~7, applied step by step with the constants $m_i$)
with the bias of SMS--UBU started at stationarity (their Theorem~14), which rests on the
interpolation Lemma~13, the cycle estimate Proposition~12, Lemma~11 and Theorem~6 for the
stochastic-gradient gap, Lemma~10 (sampling without replacement) and the full-gradient
Proposition~8. The minimisers of the components enter in exactly three places, always through
$\lVert N_m\nabla f_i(\cdot)\rVert$ or $\lVert D(\cdot,\omega)\rVert$; the contraction, Lemmas~10 and~13,
Proposition~8 and the It\^o--Taylor terms $I_2,\dots,I_5$ of Proposition~12, which contain
$\nabla^2f_i$, $\nabla^3f_i$ or the full gradient $\nabla f$, do not involve them.

(i) In Theorem~6 and Lemma~11 the stochastic-gradient gap over $k\le2N_m$ steps is controlled by
$(\sum_{i<k}\lVert D(y_i,\omega_i)\rVert_{L^2}^2)^{1/2}$, bounded there by $C_{SG}M\sqrt k\,
(\max_i\lVert y_i-X_{(i+1/2)h}\rVert_{L^2}+\sqrt{d/m})$. By Lemma~\ref{lem:sgnoise} and Minkowski's
inequality in $\ell^2$ it is at most this with $C_{SG}=2$ plus $\sqrt k\,\sigma_*$. Since
$M\sqrt{d/m}+\sigma_* = (M/\sqrt m)(\sqrt d+\sqrt m\,\sigma_*/M)$, Lemma~11 holds with its factor
$\sqrt{N_md(\gamma^4h^2/m^2+1/m)}$ replaced by $\sqrt{N_m(\gamma^4h^2/m^2+1/m)}\,(\sqrt d+\sqrt m\,\sigma_*/M)$.

(ii) In Proposition~12 the terms $I_1^{i,j}$ of the velocity component and their analogue in
the position component have integrands $E(\cdot)\,N_m\nabla f_i(X_s)$ with $X_s\sim\pi$, bounded
there by $O(h^2N_m\gamma^2M\sqrt{d/m})$ through their Lemma~9. By Lemma~\ref{lem:gradmom} the factor
$M\sqrt{d/m}$ becomes $N_m\lVert\nabla f_i(x^*)\rVert+M\sqrt{d/m}$; each component occurs once in
the forward and once in the backward sweep of a cycle, so summing over the cycle gives
$N_mM\sqrt{d/m}+N_m\sigma_*$ in place of $N_mM\sqrt{d/m}$.

(iii) In Theorem~14, for the at most $2N_m-1$ steps of an incomplete cycle, the one-step source
$hM\sqrt{d/m}$ of their Gr\"onwall argument becomes $h(N_m\lVert\nabla f_{\omega_k}(x^*)\rVert+M\sqrt{d/m})$,
whose sum over the remaining steps is at most $2hN_m(M\sqrt{d/m}+\sigma_*)$ by
Lemma~\ref{lem:gradmom} with $r = 2$, against $2hN_mM\sqrt{d/m}$ there.

In each case $M\sqrt{d/m}$ becomes $M\sqrt{d/m}+\sigma_*$, i.e.\ $\sqrt d$ becomes
$\sqrt d+\sqrt m\,\sigma_*/M$ in the terms that come from moments of gradients; the other terms
are unchanged, and bounding every term by the larger factor gives the result.
\end{proof}

The additional term does not depend on the dimension. It is the variance of the stochastic
gradient at the mode, which symmetric splitting does not cancel at first order. A centred
(control-variate) gradient would remove it, but that would be a different algorithm.

\begin{corollary}\label{cor:sigma}
For the latent potential split into $n_b$ batches of equal size with components
$f_i = \lVert\xi\rVert_F^2/(2n_b)+\sum_{n\in B_i}\ell_n$, $\sigma_*\le\bar\sigma_* := 2sN\max_n\lVert\phi_n\rVert$
for every partition, and for a uniformly random partition
$\E\sigma_*^2 = (n_b-1)\frac{N}{N-1}\sum_n\lVert \tilde g_n\rVert^2\le4(n_b-1)\frac{N}{N-1}s^2\sum_n\lVert\phi_n\rVert^2$,
where $\tilde g_n = \nabla\ell_n(\xi^*)-N^{-1}\sum_{n'}\nabla\ell_{n'}(\xi^*)$.
\end{corollary}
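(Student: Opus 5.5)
The plan is to use the first-order condition at the mode to rewrite each minibatch gradient as a centred batch sum, after which both claims are elementary facts about sums of bounded vectors.

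\emph{Reduction.} At the mode, $\nabla U(\xi^*) = \xi^* + \sum_n\nabla\ell_n(\xi^*) = 0$, so $\xi^* = -\sum_n\nabla\ell_n(\xi^*)$. Write $a_i = \sum_{n\in B_i}\nabla\ell_n(\xi^*)$ and $\bar a = n_b^{-1}\sum_i a_i = n_b^{-1}\sum_n\nabla\ell_n(\xi^*)$. Then
\[
\nabla f_i(\xi^*) = \xi^*/n_b + a_i = a_i - \bar a .
\]
Because the batches have equal size $N/n_b$, this also equals $\sum_{n\in B_i}\tilde g_n$.

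\emph{Per-row gradient bound.} From \eqref{eq:nll}, the gradient of one row is
\[
\nabla\ell_n(\xi) = \phi_n\bigl(\E_{\rho_n}\bpsi - \bpsi(u_n)\bigr)^\top .
\]
Since $\lVert\bpsi(u)\rVert = s$ for every $u$ (proof of Proposition~\ref{prop:lc}), the bracket has norm at most $2s$. Hence $\lVert\nabla\ell_n\rVert_F\le 2s\lVert\phi_n\rVert$. In the setting of Proposition~\ref{prop:lc2} the same bound holds row by row with the damped statistic.

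\emph{Deterministic bound.} The quantity $\sum_i\lVert a_i-\bar a\rVert^2$ is a sum of squared deviations from the mean, so it is at most $\sum_i\lVert a_i\rVert^2$. Each $a_i$ is a sum of $N/n_b$ row gradients, so $\lVert a_i\rVert\le (N/n_b)\,2s\max_n\lVert\phi_n\rVert$. Therefore
\[
\sigma_*^2 = n_b\sum_i\lVert a_i-\bar a\rVert^2 \le n_b\cdot n_b\,(N/n_b)^2\,4s^2\max_n\lVert\phi_n\rVert^2 = \bar\sigma_*^2 .
\]
This holds for every partition.

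\emph{Expectation over a random partition.} Here $\nabla f_i(\xi^*) = \sum_{n\in B_i}\tilde g_n$ with $\sum_n\tilde g_n = 0$. Each batch $B_i$ is marginally a uniform sample without replacement of size $m = N/n_b$. The finite-population variance formula then gives
\[
\E\Bigl\lVert\sum_{n\in B_i}\tilde g_n\Bigr\rVert^2 = \frac{m(N-m)}{N(N-1)}\sum_n\lVert\tilde g_n\rVert^2 .
\]
I would verify this by expanding the square and using $\Pr(n,n'\in B_i) = m(m-1)/(N(N-1))$ for $n\ne n'$, together with $\sum_{n\ne n'}\langle\tilde g_n,\tilde g_{n'}\rangle = -\sum_n\lVert\tilde g_n\rVert^2$. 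Multiplying by $n_b$ for the sum over batches and by $n_b$ for the prefactor gives
\[
\E\sigma_*^2 = n_b\,\frac{N-m}{N-1}\sum_n\lVert\tilde g_n\rVert^2 = (n_b-1)\frac{N}{N-1}\sum_n\lVert\tilde g_n\rVert^2 .
\]
Centring does not increase the sum of squares, so $\sum_n\lVert\tilde g_n\rVert^2\le\sum_n\lVert\nabla\ell_n(\xi^*)\rVert^2\le 4s^2\sum_n\lVert\phi_n\rVert^2$, which is the stated inequality.

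I expect the only delicate step to be getting the without-replacement constant exactly, i.e.\ checking that $n_b(N-m)/(N-1)$ simplifies to $(n_b-1)N/(N-1)$. The rest consists of the identity at the mode and the bound $\lVert\bpsi\rVert = s$.
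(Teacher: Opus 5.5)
Your proof is correct and follows the paper's argument: the mode identity $\nabla f_i(\xi^*) = \sum_{n\in B_i}\tilde g_n$, the row bound $\lVert\nabla\ell_n\rVert_F\le 2s\lVert\phi_n\rVert$, and the without-replacement variance formula multiplied by $n_b^2$. The only difference is in the deterministic bound. The paper applies Cauchy--Schwarz within each batch to get the intermediate $\sigma_*^2\le N\sum_n\lVert\tilde g_n\rVert^2\le 4s^2N\sum_n\lVert\phi_n\rVert^2$ before passing to the maximum, whereas you drop the between-batch mean and use the triangle inequality; both routes reach $\bar\sigma_*$.
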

\begin{proof}
At the mode $\xi^* = -\sum_n\nabla\ell_n(\xi^*)$, so with equal batch sizes
$\nabla f_i(\xi^*) = \xi^*/n_b+\sum_{n\in B_i}\nabla\ell_n(\xi^*) = \sum_{n\in B_i}\tilde g_n$. Since
$\nabla\ell_n(\xi) = \phi_n(\E_n\bpsi-\bpsi(u_n))^\top$ and $\lVert\bpsi(u)\rVert = s$,
$\lVert\nabla\ell_n\rVert_F\le2s\lVert\phi_n\rVert$, and centring does not increase $\sum_n\lVert \tilde g_n\rVert^2$.
Cauchy--Schwarz gives $\lVert\nabla f_i(\xi^*)\rVert^2\le(N/n_b)\sum_{n\in B_i}\lVert \tilde g_n\rVert^2$, hence
$\sigma_*^2 = n_b\sum_i\lVert\nabla f_i(\xi^*)\rVert^2\le N\sum_n\lVert \tilde g_n\rVert^2\le4s^2N^2\max_n\lVert\phi_n\rVert^2$.
For a uniformly random partition, each batch sum is a simple random sample without replacement
of size $b = N/n_b$ from the centred $(\tilde g_n)$, with $\E\lVert\sum_{n\in B_i}\tilde g_n\rVert^2 =
b\frac{N-b}{N-1}\frac1N\sum_n\lVert \tilde g_n\rVert^2$; multiplying by $n_b^2$ gives the formula.
\end{proof}

\section{THE SPECTRAL RESPONSE AXIS}
\label{app:spectral}

By Poisson summation, $\sum_nc(d+2n) = \tfrac12\hat c(0) + \sum_{k\ge1}\hat c(\pi k)\cos(\pi kd)$
with $\hat c$ the Fourier transform of $c$, i.e.\ the Mat\'ern spectral density. Since
$\cos\pi k(u-u') = \cos\pi ku\cos\pi ku' + \sin\pi ku\sin\pi ku'$, the process
\eqref{eq:series} with $A_k^2 = \hat c(\pi k)$ has exactly this covariance (up to the
omitted constant mode). A white-noise floor $A_k^2\mathrel{+}=\varepsilon/\sigma^2$,
$\varepsilon = 10^{-6}$, keeps every mode's variance positive. It adds $(K-1)\varepsilon/\sigma^2$
to $s^2$, a numerical stabilisation that lies outside the uniform-in-$K$ statements of
Section~\ref{sec:inference}. Sampling the wrapped kernel
$c(\min(d,2-d))$ on a grid instead gives a kink at the antipode and negative eigenvalues
once $\ell_y$ is not small against the period. The exact spectrum is never negative. For
$\nu = 3/2$, $A_k\propto(2\nu/\ell_y^2+\pi^2k^2)^{-(\nu+1/2)/2}$; with the learned
$\ell_y\approx0.13$--$0.25$, $K=100$ keeps the last mode's amplitude below
$2\cdot10^{-3}$ of the largest. The normaliser uses composite Gauss--Legendre
quadrature with $K$ cells of four nodes ($Q = 4K$ nodes). Each cell's rule is exact for
polynomials of degree seven, so the error is $O(\Delta^8\max\lvert f^{(8)}\rvert)$ with
$\Delta = 1/K$. Because the derivatives of the integrand grow with $K$, this bound does not shrink as
$K$ grows, so we check the accuracy numerically. Against a rule four times finer, the
log-normaliser agrees to $4\cdot10^{-7}$ at the learned length scales. This check uses the
global $\ell_y$. Under the per-object damping of Section~\ref{sec:nonstat} the effective
length scale is longer or shorter by at most $e^3$, and a shorter one makes the density
locally sharper. That case is checked in Appendix~\ref{app:validation}. The CDF at an observation is the cumulative
mass of the cells below it plus the integral of the Lagrange interpolant within its cell.

\section{HYPERPARAMETER GRADIENTS: SAMPLING VERSUS LAPLACE}
\label{app:bias}

Let $f(\xi) = \nabla_\vartheta\log p(y\mid\xi,\vartheta) = -\sum_n\nabla_\vartheta\ell_n(\xi)$,
so that the exact gradient is $\E_\pi f$, and let $\mu$ be the law of the sampled states. We
work coordinate by coordinate, $\theta$ a scalar component of $\vartheta$. Row $n$ enters through
its field $g_n(u;\xi) = \langle\phi_n^\top\xi,\bpsi_n(u)\rangle$, with $\bpsi_n$ the damped statistic of the proof of
Proposition~\ref{prop:lc2} (Appendix~\ref{app:proofs}), and
$\ell_n(\xi) = -g_n(v_n;\xi)+\log\int_0^1e^{g_n(u;\xi)}du$ plus terms free of $\xi$, with
$v_n$ the observation point (the integral is the quadrature sum in practice, with positive
weights; nothing below changes). Hence
\begin{equation}
\partial_\theta\ell_n(\xi) = -\dot g_n(v_n;\xi)-g_n'(v_n;\xi)\,\partial_\theta v_n
+\E_{\rho_n(\xi)}\dot g_n(\cdot\,;\xi) + (\text{terms free of }\xi),
\label{eq:dtheta}
\end{equation}
with $\dot g_n = \partial_\theta g_n = \langle(\partial_\theta\phi_n)^\top\xi,\bpsi_n\rangle+
\langle\phi_n^\top\xi,\partial_\theta\bpsi_n\rangle$, $g_n' = \partial_ug_n$, and $\rho_n(\xi)\propto e^{g_n(\cdot;\xi)}$
the row's density. We measure $\xi$ by the sup norms of these fields,
\[
F_n(\xi) = \sup_u\lvert g_n(u;\xi)\rvert+\sup_u\lvert\dot g_n(u;\xi)\rvert ,
\]
and use $\sup_u\lVert\bpsi_n(u)\rVert\le s$ together with the spectral sums $s_1^2 = \sup_u\lVert\partial_u\bpsi_n
\rVert^2 = \sum_k\pi^2k^2A_k^2d_{nk}^2$, $s_\theta^2 = \sup_u\lVert\partial_\theta\bpsi_n\rVert^2$ and
$s_{\theta,1}^2 = \sup_u\lVert\partial_u\partial_\theta\bpsi_n\rVert^2$. For the Mat\'ern spectrum
with $\nu>1$ (and the squared exponential), $\sum_kk^2A_k^2<\infty$, so all four are bounded
uniformly in $K$; the floor of Appendix~\ref{app:spectral} adds $\pi^2\varepsilon
\sum_{k<K}k^2/\sigma^2\approx\pi^2\varepsilon K^3/(3\sigma^2)$ to $s_1^2$ ($3.2/\sigma^2$ at $K = 100$),
and the corresponding term to $s_{\theta,1}^2$ when $\theta$ is the variance, which we keep explicit
as for $s^2$. Since $\lVert\phi_n^\top\Delta\rVert\le\lVert\phi_n\rVert
\lVert\Delta\rVert_F$, Cauchy--Schwarz gives, for all $a,b$,
\begin{equation}
\sup_u\lvert g_n(u;a)-g_n(u;b)\rvert\le s\lVert\phi_n\rVert\lVert a-b\rVert,\qquad
\sup_u\lvert\dot g_n(u;a)-\dot g_n(u;b)\rvert\le\kappa_n\lVert a-b\rVert,
\label{eq:fieldlip}
\end{equation}
with $\kappa_n = s\lVert\partial_\theta\phi_n\rVert+s_\theta\lVert\phi_n\rVert$, and
$\lvert g_n'(v;a)-g_n'(v;b)\rvert\le s_1\lVert\phi_n\rVert\lVert a-b\rVert$. Unlike the
coefficient vectors $\phi_n^\top\xi\in\R^{2(K-1)}$, the fields thus change by amounts free of $K$.

\begin{lemma}\label{lem:growth}
With $c_n = \max\{2\kappa_n+s_1\lVert\phi_n\rVert\lvert\partial_\theta v_n\rvert,\,2s\lVert\phi_n\rVert\}$,
for all $a,b$,
\[
\lvert\partial_\theta\ell_n(a)-\partial_\theta\ell_n(b)\rvert\le c_n\bigl(1+F_n(a)+F_n(b)\bigr)\lVert a-b\rVert .
\]
\end{lemma}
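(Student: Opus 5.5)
Split $\partial_\theta\ell_n$ in \eqref{eq:dtheta} into its three $\xi$-dependent pieces and bound the difference of each between $a$ and $b$ separately. The terms free of $\xi$ cancel.

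The first piece is $-\dot g_n(v_n;\xi)$. By \eqref{eq:fieldlip} it changes by at most $\kappa_n\lVert a-b\rVert$.

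The second piece is $-g_n'(v_n;\xi)\,\partial_\theta v_n$. Here $v_n$ and $\partial_\theta v_n$ do not depend on $\xi$, and $g_n'$ is linear in $\xi$. The derivative bound after \eqref{eq:fieldlip} gives a change of at most $s_1\lVert\phi_n\rVert\lvert\partial_\theta v_n\rvert\,\lVert a-b\rVert$. Both of these pieces are globally Lipschitz, so they need no growth factor.

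The third piece, $\E_{\rho_n(\xi)}\dot g_n(\cdot;\xi)$, is the main obstacle: both the integrand and the measure depend on $\xi$. Write
\[
\E_{\rho_n(a)}\dot g_n(\cdot;a)-\E_{\rho_n(b)}\dot g_n(\cdot;b)
=\E_{\rho_n(a)}\bigl[\dot g_n(\cdot;a)-\dot g_n(\cdot;b)\bigr]
+\bigl(\E_{\rho_n(a)}-\E_{\rho_n(b)}\bigr)\dot g_n(\cdot;b).
\]
The first term is at most $\kappa_n\lVert a-b\rVert$ by \eqref{eq:fieldlip}. For the second term, interpolate $\xi_t=b+t(a-b)$. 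The derivative in $t$ of $\E_{\rho_n(\xi_t)}h$ for a fixed function $h$ is a covariance,
\[
\Cov_{\rho_n(\xi_t)}\bigl(h,\;g_n(\cdot;a)-g_n(\cdot;b)\bigr),
\]
since $\log\rho_n(\xi_t)$ is $g_n(\cdot;\xi_t)$ minus its normaliser, and that field is affine in $t$ with slope $g_n(\cdot;a-b)$. Any covariance satisfies $\lvert\Cov(h,k)\rvert\le\sup\lvert h\rvert\cdot2\sup\lvert k\rvert$, by centring $k$ only. Take $h=\dot g_n(\cdot;b)$, so $\sup\lvert h\rvert\le F_n(b)$, and bound $\sup_u\lvert g_n(u;a-b)\rvert\le s\lVert\phi_n\rVert\lVert a-b\rVert$ by \eqref{eq:fieldlip}. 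Integrating over $t\in[0,1]$ gives at most $2s\lVert\phi_n\rVert F_n(b)\lVert a-b\rVert$. This holds verbatim for the quadrature version, because $\rho_n$ is then a discrete law with positive weights and the covariance identity is unchanged.

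Adding the pieces gives
\[
\lvert\partial_\theta\ell_n(a)-\partial_\theta\ell_n(b)\rvert
\le\Bigl[2\kappa_n+s_1\lVert\phi_n\rVert\lvert\partial_\theta v_n\rvert+2s\lVert\phi_n\rVert F_n(b)\Bigr]\lVert a-b\rVert .
\]
By the definition of $c_n$ as a maximum, the bracket is at most $c_n(1+F_n(b))\le c_n(1+F_n(a)+F_n(b))$, which is the claim. The symmetric $F_n(a)$ is harmless, and it would also absorb the alternative ordering of the split, in which the measure is changed first.

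The only points to verify carefully are that $\partial_\theta v_n$ and the warp Jacobians are free of $\xi$, which holds by the proof of Proposition~\ref{prop:lc2}, and that the covariance step uses only sup norms, so that no $K$-dependence enters.
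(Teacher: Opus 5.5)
Your proposal is correct and follows the paper's proof essentially line by line. It uses the same three-way split of \eqref{eq:dtheta}, the same decomposition of the expectation term into a change of integrand plus a change of measure, and the same covariance identity along the exponential interpolation with $\lvert\Cov(h,w)\rvert\le2\sup\lvert h\rvert\sup\lvert w\rvert$; your one addition is to spell out that $2\kappa_n+s_1\lVert\phi_n\rVert\lvert\partial_\theta v_n\rvert+2s\lVert\phi_n\rVert F_n(b)\le c_n(1+F_n(b))$, a final step the paper leaves implicit.
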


\begin{proof}
In \eqref{eq:dtheta} the first two terms change by at most $\kappa_n\lVert a-b\rVert$ and
$s_1\lVert\phi_n\rVert\lvert\partial_\theta v_n\rvert\lVert a-b\rVert$. For the third,
$\E_{\rho_n(a)}\dot g_n(a)-\E_{\rho_n(b)}\dot g_n(b) = \E_{\rho_n(a)}[\dot g_n(a)-\dot g_n(b)]
+(\E_{\rho_n(a)}-\E_{\rho_n(b)})\dot g_n(b)$; the first part is at most $\kappa_n\lVert a-b\rVert$.
Along $\rho_t\propto e^{g_n(b)+t(g_n(a)-g_n(b))}$, $\frac{d}{dt}\E_{\rho_t}h = \Cov_{\rho_t}(h,g_n(a)-g_n(b))$,
and $\lvert\Cov(h,w)\rvert\le2\sup\lvert h\rvert\sup\lvert w\rvert$, so the second part is at most
$2\sup_u\lvert\dot g_n(u;b)\rvert\,s\lVert\phi_n\rVert\lVert a-b\rVert\le2s\lVert\phi_n\rVert F_n(b)\lVert a-b\rVert$
by \eqref{eq:fieldlip}.
\end{proof}

\begin{lemma}\label{lem:supmom}
Let $X\sim\pi$, $\bar\xi = \E_\pi X$ and $\lambda_n = \lVert\phi_n\rVert(s+s_1+s_\theta+s_{\theta,1})
+\lVert\partial_\theta\phi_n\rVert(s+s_1)$. There is an absolute constant $C$ with
$(\E F_n(X)^2)^{1/2}\le F_n(\bar\xi)+C\lambda_n$.
\end{lemma}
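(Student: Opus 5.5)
The plan is to centre at the posterior mean and use that the fields are linear in $\xi$. For every $u$ we have $g_n(u;X) = g_n(u;\bar\xi)+g_n(u;X-\bar\xi)$, and likewise for $\dot g_n$. Hence
\[
F_n(X)\le F_n(\bar\xi)+\sup_u\lvert g_n(u;X-\bar\xi)\rvert+\sup_u\lvert\dot g_n(u;X-\bar\xi)\rvert ,
\]
and by Minkowski's inequality in $L^2(\pi)$ it suffices to show that each of the two centred suprema has $L^2$ norm at most a constant times its share of $\lambda_n$.

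The main obstacle is to control a supremum over $u$ without paying a factor $\sqrt d$. The crude route, via \eqref{eq:fieldlip} and Lemma~\ref{lem:mom}, gives $s\lVert\phi_n\rVert\sqrt d$, which is too weak. Instead I will combine two facts: pointwise variances are controlled by Brascamp--Lieb, and the supremum is controlled by a one-dimensional Sobolev embedding.

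\emph{Pointwise variances.} By Proposition~\ref{prop:lc} (or Proposition~\ref{prop:lc2} for the extended model), $\nabla^2U\succeq I$. The Brascamp--Lieb inequality then gives $\Cov_\pi(X)\preceq I$. Each centred field value is a linear functional of $X-\bar\xi$:
\begin{itemize}
\item $g_n(u;X-\bar\xi)=\langle X-\bar\xi,\phi_n\bpsi_n(u)^\top\rangle_F$, so $\E\,g_n(u;X-\bar\xi)^2\le s^2\lVert\phi_n\rVert^2$.
\item $\partial_u g_n$ has representer $\phi_n\,\partial_u\bpsi_n(u)^\top$, so its second moment is at most $s_1^2\lVert\phi_n\rVert^2$.
\item $\dot g_n$ has representer $(\partial_\theta\phi_n)\bpsi_n^\top+\phi_n(\partial_\theta\bpsi_n)^\top$, of Frobenius norm at most $s\lVert\partial_\theta\phi_n\rVert+s_\theta\lVert\phi_n\rVert$.
\item $\partial_u\dot g_n$ has representer norm at most $s_1\lVert\partial_\theta\phi_n\rVert+s_{\theta,1}\lVert\phi_n\rVert$.
\end{itemize}
These bounds hold for every $u$ and are free of $d$, $J$ and $K$.

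\emph{From pointwise to supremum.} For any $C^1$ function $h$ on $[0,1]$,
\[
\sup_u\lvert h(u)\rvert\le\int_0^1\lvert h\rvert\,du+\int_0^1\lvert h'\rvert\,du ,
\]
since $h(u)=h(u_0)+\int_{u_0}^u h'$ with $u_0$ chosen so that $\lvert h(u_0)\rvert\le\int\lvert h\rvert$. Squaring, applying Jensen on $[0,1]$ and taking expectations with Fubini gives
\[
\E\sup_u h^2\le 2\int_0^1\E h(u)^2\,du+2\int_0^1\E h'(u)^2\,du .
\]
I apply this to $h=g_n(\cdot;X-\bar\xi)$, obtaining $\lVert\sup_u\lvert g_n\rvert\rVert_{L^2}\le\sqrt2\,(s+s_1)\lVert\phi_n\rVert$. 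I apply it again to $h=\dot g_n(\cdot;X-\bar\xi)$, obtaining the bound $\sqrt2\,\bigl(s\lVert\partial_\theta\phi_n\rVert+s_\theta\lVert\phi_n\rVert+s_1\lVert\partial_\theta\phi_n\rVert+s_{\theta,1}\lVert\phi_n\rVert\bigr)$. Adding the two gives at most $\sqrt2\,\lambda_n$, so the lemma holds with $C=\sqrt2$.

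Two details remain to check. First, the fields are trigonometric polynomials, hence smooth in $u$, and the quadrature plays no role because $F_n$ involves only the fields. Second, under the response warp the observation point is fixed, and $\partial_\theta$ acts only through $\phi_n$ and $\bpsi_n$. The latter is exactly what the spectral sums $s_\theta$ and $s_{\theta,1}$ account for.
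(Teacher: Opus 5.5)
Your proof is correct, and it takes a different route from the paper's. Both start the same way, centring at $\bar\xi$ and using linearity of the fields. The paper then invokes the logarithmic Sobolev inequality with constant one (Bakry--\'Emery), so that every Lipschitz function of $X$ is sub-Gaussian. This makes the centred fields sub-Gaussian processes whose increments are controlled by the metric $\lVert\phi_n\rVert s_1\lvert u-u'\rvert$ (and its analogue for $\dot Z$). The paper bounds the supremum by Dudley's entropy bound in $L^2$ form plus the sub-Gaussian value at $u=0$, and leaves $C$ as an unspecified absolute constant.

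You use only second moments. Brascamp--Lieb gives $\Cov_\pi(X)\preceq I$, so every pointwise value of the centred field and of its $u$-derivative has variance at most the squared Frobenius norm of its representer. The deterministic one-dimensional embedding $\sup_u\lvert h\rvert\le\int_0^1\lvert h\rvert+\int_0^1\lvert h'\rvert$, squared and integrated by Fubini, then turns these pointwise bounds into an $L^2$ bound on the supremum, with the explicit constant $C=\sqrt2$.

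Your argument is more elementary: there is no chaining, and nothing beyond a covariance bound is needed, so a Poincar\'e inequality would do. It relies on the index set $[0,1]$ being one-dimensional and on pointwise control of the $u$-derivatives through $s_1$ and $s_{\theta,1}$. The paper uses the same spectral sums, but only as Lipschitz constants of the increment metric. The paper's route proves more than is required, namely sub-Gaussian tails for the supremum and hence all moments and exponential concentration. Proposition~\ref{prop:bias} uses only $\E F_n(X)^2$, so your version suffices for the application.
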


\begin{proof}
The fields are linear in $\xi$, so $F_n(X)\le F_n(\bar\xi)+\sup_u\lvert Z(u)\rvert+\sup_u\lvert\dot Z(u)\rvert$
with $Z(u) = g_n(u;X-\bar\xi)$ and $\dot Z(u) = \dot g_n(u;X-\bar\xi)$, both centred. Since
$\nabla^2U\succeq I$, $\pi$ satisfies a logarithmic Sobolev inequality with constant one
(Bakry--\'Emery), so every $L$-Lipschitz function of $X$ is sub-Gaussian about its mean with
variance proxy $L^2$. $Z(u)-Z(u')$ is linear in $X$ with Lipschitz constant
$\lVert\phi_n\rVert\lVert\bpsi_n(u)-\bpsi_n(u')\rVert\le\lVert\phi_n\rVert s_1\lvert u-u'\rvert$, so $Z$
has sub-Gaussian increments for the metric $\lVert\phi_n\rVert s_1\lvert u-u'\rvert$ on $[0,1]$, whose
covering numbers are at most $1+\lVert\phi_n\rVert s_1/(2\epsilon)$. Dudley's entropy bound, in its
$L^2$ form, gives $(\E\sup_{u,u'}\lvert Z(u)-Z(u')\rvert^2)^{1/2}\le C\lVert\phi_n\rVert s_1$, and
$Z(0)$ is sub-Gaussian with proxy $(\lVert\phi_n\rVert s)^2$. Likewise for $\dot Z$, with increments
of Lipschitz constant $(\lVert\partial_\theta\phi_n\rVert s_1+\lVert\phi_n\rVert s_{\theta,1})\lvert u-u'\rvert$ and
$\dot Z(0)$ of proxy $(\lVert\partial_\theta\phi_n\rVert s+\lVert\phi_n\rVert s_\theta)^2$.
\end{proof}

\begin{proposition}\label{prop:bias}
For any coupling $(X,Y)$ of $\pi$ and $\mu$ attaining $W_2(\mu,\pi)$,
\[
\bigl\lvert\E_\mu\partial_\theta\log p-\E_\pi\partial_\theta\log p\bigr\rvert\le
\sqrt3\sum_nc_n\bigl(1+\E F_n(X)^2+\E F_n(Y)^2\bigr)^{1/2}W_2(\mu,\pi),
\]
with $\E F_n(X)^2\le(F_n(\bar\xi)+C\lambda_n)^2$ by Lemma~\ref{lem:supmom} and
$\E F_n(Y)^2\le2\E F_n(X)^2+2(s\lVert\phi_n\rVert+\kappa_n)^2W_2(\mu,\pi)^2$.
\end{proposition}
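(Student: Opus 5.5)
Since $\partial_\theta\log p(y\mid\xi,\vartheta) = -\sum_n\partial_\theta\ell_n(\xi)$ up to terms free of $\xi$, and those terms cancel in the difference of expectations, I would fix an optimal coupling $(X,Y)$ with $X\sim\pi$, $Y\sim\mu$ and $\E\lVert X-Y\rVert^2 = W_2(\mu,\pi)^2$. This gives
\[
\bigl\lvert\E_\mu\partial_\theta\log p-\E_\pi\partial_\theta\log p\bigr\rvert\le\sum_n\E\bigl\lvert\partial_\theta\ell_n(Y)-\partial_\theta\ell_n(X)\bigr\rvert .
\]
Lemma~\ref{lem:growth} bounds each summand by $c_n\E[(1+F_n(X)+F_n(Y))\lVert X-Y\rVert]$. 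Cauchy--Schwarz in $L^2$ of the coupling then gives
\[
c_n\bigl(\E(1+F_n(X)+F_n(Y))^2\bigr)^{1/2}W_2(\mu,\pi).
\]
The elementary inequality $(1+a+b)^2\le3(1+a^2+b^2)$ turns this into the $\sqrt3(1+\E F_n(X)^2+\E F_n(Y)^2)^{1/2}$ factor. Summing over $n$ gives the main display.

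The bound on $\E F_n(X)^2$ is exactly Lemma~\ref{lem:supmom}, since $\E F_n(X)^2 = (\E F_n(X)^2)^{1/2\cdot2}\le(F_n(\bar\xi)+C\lambda_n)^2$.

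For $\E F_n(Y)^2$, I would use that $F_n$ is Lipschitz by \eqref{eq:fieldlip}. The sup-norm part moves by at most $s\lVert\phi_n\rVert\lVert X-Y\rVert$ and the $\dot g_n$ part by at most $\kappa_n\lVert X-Y\rVert$, because the fields are linear in $\xi$ and the reverse triangle inequality holds for sup norms. Hence
\[
F_n(Y)\le F_n(X)+(s\lVert\phi_n\rVert+\kappa_n)\lVert X-Y\rVert .
\]
Squaring with $(a+b)^2\le2a^2+2b^2$ and taking expectations under the same optimal coupling gives $\E F_n(Y)^2\le2\E F_n(X)^2+2(s\lVert\phi_n\rVert+\kappa_n)^2W_2(\mu,\pi)^2$.

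There is no real obstacle here. The only point needing care is integrability: Cauchy--Schwarz requires $F_n(X),F_n(Y)\in L^2$. For $X$ this follows from Lemma~\ref{lem:supmom}. For $Y$ it follows from the Lipschitz step above, since $\mu$ has finite second moment whenever $W_2(\mu,\pi)<\infty$. I would also check that the same coupling is used in every step, which is why the statement fixes an optimal coupling first.
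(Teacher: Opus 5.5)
Your proposal is correct and follows the paper's proof step for step. The paper also uses Lemma~\ref{lem:growth} with Cauchy--Schwarz under the optimal coupling, then the inequality $(1+p+q)^2\le 3(1+p^2+q^2)$, then the bound $F_n(Y)\le F_n(X)+(s\lVert\phi_n\rVert+\kappa_n)\lVert X-Y\rVert$ from \eqref{eq:fieldlip} squared with $(a+b)^2\le 2a^2+2b^2$; your remarks on the cancellation of the $\xi$-free terms and on the square-integrability of $F_n(Y)$ are details the paper leaves implicit.
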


\begin{proof}
By Lemma~\ref{lem:growth} and Cauchy--Schwarz, $\E\lvert\partial_\theta\ell_n(X)-\partial_\theta
\ell_n(Y)\rvert\le c_n(\E(1+F_n(X)+F_n(Y))^2)^{1/2}(\E\lVert X-Y\rVert^2)^{1/2}$, and
$(1+p+q)^2\le3(1+p^2+q^2)$. The bound for $Y$ follows from \eqref{eq:fieldlip}:
$F_n(Y)\le F_n(X)+(s\lVert\phi_n\rVert+\kappa_n)\lVert X-Y\rVert$.
\end{proof}

With $\mu = \mu_t$, the law after $t$ sweeps from the previous step's states,
Theorem~\ref{thm:sms} bounds $W_2(\mu_t,\pi)$ by a term that decays geometrically in $t$
plus $C_1h^2(\sqrt{d}+\sqrt{m_R}\,\bar\sigma_*/M_R)$. The factors $c_n$ and $\lambda_n$ in
Proposition~\ref{prop:bias} depend on $J$ and $K$ only through $\lVert\phi_n\rVert$,
$\lVert\partial_\theta\phi_n\rVert$, $\lvert\partial_\theta v_n\rvert$ and the spectral sums, the last
free of $K$ for $\sum_kk^2A_k^2<\infty$ (up to the floor terms above). We have $\lVert\phi_n\rVert\le\sigma(x_n)$,
but we do not bound the feature derivatives or $F_n(\bar\xi)$, the sup norm of the
posterior-mean field and its $\theta$-derivative at $x_n$, uniformly in $J$ and $K$. Under the
theorem's conditions the bias of the Fisher gradient is therefore
$O(h^2(\sqrt d+\sqrt{m_R}\,\bar\sigma_*/M_R))$ plus a mixing term, for a fixed representation, with a
prefactor that is explicit in these quantities. The $\sqrt d$ comes from the Wasserstein bound
itself. The gradient batch is drawn independently
of the sampled states, so it adds variance but no bias. The four chains form two antithetic
pairs and share the sampler's batch partition, so they are not independent replicates.
They reduce variance but should not be treated as independent for Monte Carlo error
estimates, and our validation runs use independent chains for that purpose
(Appendix~\ref{app:validation}). All of this concerns one outer step at fixed
$\vartheta$. We make no convergence claim for the outer procedure, which follows these
gradients by stochastic approximation with a changing target, preconditioner rebuilds and
early stopping.

The Laplace approximation replaces $\log p(y\mid\vartheta)$ by $\mathcal
L(\vartheta) = \log p(y,\hat\xi_\vartheta\mid\vartheta)-\tfrac12\log\det H_\vartheta(\hat\xi_\vartheta)$
with $\hat\xi_\vartheta$ the posterior mode and $H_\vartheta$ the Hessian of $U$ there.
Its gradient is exact for $\mathcal L$, but $\log p(y\mid\vartheta)-\mathcal L(\vartheta)$
depends on the skewness and higher cumulants of the latent posterior, which change with
$\vartheta$, so $\nabla\mathcal L$ is biased by an amount that more computation does not
remove. Computing it requires $\log\det H$ of a $d$-dimensional matrix and the third
derivative of $U$ contracted with $d\hat\xi_\vartheta/d\vartheta$. Variational
approximations likewise optimise a bound whose gap depends on $\vartheta$. Whether this
bias matters in practice is an empirical question, which Appendix~\ref{app:validation}
addresses on a controlled problem.

\section{THE NON-STATIONARY MODEL IN DETAIL}
\label{app:nonstat}

\paragraph{Heads and bounds.}
All encoder outputs pass through a residual scale of $0.01$. The log-amplitude is clipped
to $\pm6$, the response length-scale log multiplier to $\pm3$, the local log length
scales are $3\tanh(\cdot/3)$, and the response-warp coefficients are $\beta_j = (\bar\beta/j)
\tanh(j\tilde\beta_j/\bar\beta)$, with $\tilde\beta_j$ the network output and $\bar\beta=2$. The full-matrix bandwidth $(s,B)$ starts at zero and is
trained by Adam rather than Muon, since it is not a weight matrix of the network. The
input length scales $\ell_x = (\sum_d\Var\,x_d)^{1/2}$ are frozen, and the bandwidth carries
the scale. The weight matrices carry a $\N(0,100)$ prior.

\paragraph{Response warp.}
$e^{r_x}$ is represented as a piecewise-linear function on $256$ segments, so $w_x$ is
its exact (piecewise-quadratic) integral and $w_x'$ is consistent with it. The normaliser
is computed in $v$ on the unchanged quadrature nodes, so the warp enters only through
$v_n$ and the Jacobian.

\paragraph{Learnable transform.}
$T'$ is the normalised piecewise-linear interpolant of $\exp\eta$ on $400$ equispaced
knots spanning the training range plus six kernel bandwidths on each side (starting at $0$ for a
redshift), so that $T$ is exact and quadratic within a cell. $\eta$ is initialised at a
Gaussian kernel density estimate of the training targets with Silverman's bandwidth,
reflected at $y=0$, and carries a Gaussian prior on its second differences whose standard
deviation is the root-mean-square second difference of the initialisation.

\paragraph{Response support.}
Both transforms are tabulated on a bounded interval: the range of the training responses widened by six kernel
bandwidths on each side or, for a redshift, from $0$ (where the kernel estimate is reflected) to the largest training
redshift plus six bandwidths. The fixed transform uses $4{,}000$ points of this interval and the learned one its $400$
knots, so the model's density is supported on the interval. The implementation does not extend it. A response outside
the interval is mapped to $u = 10^{-6}$ or $1-10^{-6}$, and its $\log T'$ is the value at the nearest end of the grid,
so the score it receives is not that of a normalised density. The model itself assigns zero density outside
its support, and its CDF is zero below the lower end and one above the upper end; the clipping convention is a
numerical diagnostic and supplies no predictive tail mass. No test response falls outside the interval in
Table~\ref{tab:main}, on the sinusoid, in the DESI training-size runs of Appendix~\ref{app:validation} or in the \textsc{Happy}
scaling runs of Appendix~\ref{app:slgp}; for the temperature runs of Appendix~\ref{app:slgp} this was not recorded; the covariate-shift
runs of Appendix~\ref{app:validation} have a few such galaxies, listed there.

\paragraph{Preconditioner with damping.}
With per-object damping, $S$ is formed with the amplitudes multiplied by the
root-mean-square damping of each mode over the rows. On SDSS and DESI the preconditioner is
rebuilt at every one of the first ten steps, where the posterior moves fastest, and
every ten steps afterwards. The chains keep their latent states and redraw their velocities.

\section{DATA, PREPROCESSING AND METRICS}
\label{app:data}

\paragraph{Benchmarks.}
\textsc{Happy} and \textsc{Teddy} \citep{beck2017teddy} each comprise samples A to D of
$60{,}000$ to $98{,}000$ SDSS galaxies (about $75{,}000$ in A and B). A is the training set, B is drawn from the same
distribution, and C and D move away from it. One \textsc{Teddy}-B galaxy with two
spectra appears twice and is matched to its photometry by identifier and redshift. The
SDSS DR18 sample of \citet{izbicki2026benchmark} has $500{,}000$ galaxies. The DESI
benchmark comprises the $4.35$ million galaxies of the DR1 Bright Galaxy Survey bright
sub-sample ($r<19.5$, BGS\_BRIGHT, \texttt{ZWARN}${}=0$, $\Delta\chi^2>40$, spectral type
galaxy) with positive fluxes, with the Legacy Surveys $g,r,z$ and WISE $W1,W2$ photometry
of the redshift catalogue \citep{desi2025dr1}. SDSS and DESI are split at random into $90\%$ training and
$10\%$ test rows, a different split for each seed.

\paragraph{Preprocessing.}
The five features (one magnitude, four colours) are standardised by the training rows and
each is multiplied by its median gap, the absolute difference of the median of the
kernel-density CDF of the response over the training rows above and below the feature's
median. The features are an exact linear map $M$ of the magnitudes, so the per-object
feature error standard deviations are $(\sum_bM_{ba}^2\sigma_{nb}^2)^{1/2}$. These five
columns are log-transformed, standardised and median-gap scaled in the same way. Finally
every column is soft-clipped in its own standard-deviation units, the identity up to $3$
and $3+2\tanh((|z|-3)/2)$ beyond, so that objects with failed photometry ($4\%$ of DESI
training rows have some column beyond three standard deviations) do not dominate the
kernel distances.

\paragraph{Metrics.}
NLL of the true response; CDE loss $\E_x\int\hat p(y\mid x)^2dy-2\E\hat p(y\mid x)$;
Kolmogorov--Smirnov distance of the PIT values from uniform; CRPS in the quantile form
$2\int_0^1(\mathbb 1\{y<\hat q(\tau)\}-\tau)(\hat q(\tau)-y)\,d\tau$ on $2{,}001$ levels,
identical for all models; RMSE of the predictive mean; and from the predictive median
$\hat z$, with $\Delta = (\hat z - z)/(1+z)$, the NMAD $1.4826\,\mathrm{median}|\Delta-
\mathrm{median}\,\Delta|$ and the outlier fraction $|\Delta|>0.15$.

\section{EXPERIMENTAL CONFIGURATION}
\label{app:config}

\paragraph{GP.}
Mat\'ern-$3/2$ kernels on both axes, $K=100$, $J=1{,}000$ $k$-means anchors (twenty Lloyd
iterations on the GPU), jitter $\delta = 10^{-3}$; four chains, friction $2$; Muon at
$0.02$ on the weight matrices and Adam at $0.01$ on all other parameters, cosine decay to
zero over $1{,}000$ steps; the validation log-likelihood is evaluated every ten steps, the
best running mean of five values is kept, training stops after $500$ steps without
improvement, and a divergence of more than one nat restores the best state with fresh
sampler random numbers. Each chain's gradient batch has $\min(N_B,24{,}000)$
observations. The sinusoid fits of Section~\ref{sec:sinus} use RBF kernels, $K=200$, $J=1{,}000$
k-means anchors ($J=360$ at $N=400$), a fixed transform, a single batch with $2$ burn-in
and $4$ collection sweeps per step, and $1{,}000$ cosine steps on all training rows, with no
validation set.

\paragraph{TabPFN-3.5.}
Cached-context mode with an int8 key--value cache. The bar probabilities of the eight
members are mapped onto shared bin borders by differencing a single-precision CDF, so bins
far in the tail can be exactly zero. We floor every bin probability at $2^{-24}/8$ when
evaluating the density, which on \textsc{Happy}, \textsc{Teddy} and SDSS changes the NLL only on SDSS (one test galaxy, $0.014$
nats) and \textsc{Teddy} (two galaxies). On DESI its effect is contained in the stable re-scoring of
Appendix~\ref{app:validation}, which changes the mean NLL by at most $4\cdot10^{-5}$.

\paragraph{TabICL-2.}
The v2 regressor checkpoint of the \texttt{tabicl} package with eight ensemble members and
the same context as TabPFN-3.5 (on DESI the same random $500{,}000$ rows), the context
encoded once with cached row representations. Its $999$ predicted quantiles define a
monotone-spline quantile distribution with exponential tails, from which the NLL, PIT, CRPS
and mean are exact; the CDE loss is integrated on a $400$-point grid.

\paragraph{Flow-Spline.}
The conditional density flow of the benchmark of \citet{izbicki2026benchmark}, ported from its
code. It applies monotone rational-quadratic spline layers with linear tails
\citep{durkan2019neural} to the standardised response, on a Gaussian base, with each layer's
parameters given by a two-hidden-layer ReLU network of $x$, and is trained by Adam with weight decay and early stopping on
$10\%$ of the training rows. The benchmark's random search draws eight configurations of
bins, layers, width, learning rate and weight decay and scores each by three-fold
cross-validation of the CDE loss on a $200$-point grid. We score the flow's own density
exactly (NLL and PIT in closed form; mean, CRPS and $\int p^2$ by quadrature in the base
space). The benchmark's renormalised $200$-point grid gives NLLs within $0.011$ nats of these,
except on DESI ($-2.3075$ against $-2.2682$), where the fitted density has narrow spikes.

\paragraph{CRPS ensemble.}
Five networks, each a ReLU perceptron with $64$ hidden units receiving a $16$-dimensional
Gaussian noise vector after its first layer and returning $100$ weighted samples in training and $500$ at test time, trained
by AdamW on the weighted sample CRPS with early stopping (patience $40$ epochs; on the sinusoid $200$, with batch $32$). The
pooled $5\times500$ test samples become a density by an adaptive Gaussian kernel estimate
\citep{silverman1986density,abramson1982bandwidth}, weighted by the samples' weights. Samples
outside the plausible range of the target (the training range extended by its width on each
side) are given zero weight. Whether any fall there varies between runs. On DESI two of the five runs averaged in
Table~\ref{tab:main} placed $17\%$ of their samples outside this range, at
redshifts up to $10^4$, which lowers the finite-sample training loss without improving the
predictive, while the other three placed none (Appendix~\ref{app:validation}).

\section{SCALE}
\label{app:scale}

On DESI ($N = 3.89$ million after the validation split) no array grows with $N$ beyond the
inputs. Feature rows are rebuilt inside the impulses (once per batch, shared by the four
chains), and the preconditioner statistics and the $30$ power iterations run over row chunks. The
Gibbs kernel is evaluated in chunks of $8{,}192$ rows with rematerialisation in the
backward pass. Each chain's gradient batch is capped at $24{,}000$ rows and its
vector--Jacobian product is chunked, and the predictive of the $454{,}000$ validation and test
rows is formed after the sweeps from the stored latent states, $16{,}384$ rows at a time.
When the feature matrix fits in memory (\textsc{Happy}, \textsc{Teddy}, SDSS) it is built
once per outer step instead. Sharing the batch partition across chains alone made the
large-scale step about $1.7$ times faster. Scoring the $434{,}505$ DESI test galaxies takes
about a minute on the GPU.

\section{SAMPLER VALIDATION AND ABLATIONS}
\label{app:validation}

\paragraph{Fixed-target sampler validation.}
On the sinusoid with $N = 2{,}000$ and a reduced representation ($J = 100$, $K = 50$,
$d = 9{,}800$), we freeze $\vartheta$ at the value learned by sampling (variance $77$,
$\ell_x = 0.147$, $\ell_y = 0.110$). At this $\vartheta$ the whitened curvature estimate is
$\hat\lambda = 9.8$, while the certified constants are $M_R = 5.7\cdot10^4$ (batch-uniform,
$n_b = 4$; $M_R^{\mathrm{full}} = 1.4\cdot10^4$) and $M_R/m_R = 1.3\cdot10^8$. The regime of
Theorem~\ref{thm:sms} ($\gamma\ge\sqrt{8M_R}\approx670$) is therefore far from anything that can be run.
The uniform bound of Theorem~\ref{thm:sms} is $\bar\sigma_* = 2sN\max_n\lVert\phi_n\rVert = 3.3\cdot10^4$,
so its extra term $\sqrt{m_R}\,\bar\sigma_*/M_R\approx0.012$ is negligible next to $\sqrt d = 99$. (For
illustration, the spread $\sigma_*(\mathcal P)$ of $20$ random partitions at the mode is $446$ in $\xi$
coordinates, against $454$ expected from Corollary~\ref{cor:sigma}, and $28$ in whitened coordinates.)

We validate the default sampler in two ways. The first is step-size halving. SMS--UBU with the
default $\gamma = 2$ runs at $h$ (the runner's default, $h = 0.159$), $h/2$ and $h/4$, each for the
same simulated time ($2{,}000$ burn-in and $20{,}000$ collection sweeps at $h$, scaled by $2$ and $4$),
on $16$ independent, non-antithetic chains with the Kronecker whitening fixed at the mode. For every
latent component and for the field $g(u,x)$ on a $16\times32$ grid we compare posterior means
and variances between neighbouring step sizes, in units of the posterior standard deviation,
with Monte Carlo errors from the spread of the chains. The second is a reference that leaves
$\pi$ invariant, generalised HMC (GHMC) with one BCSS-2 step per Metropolis decision and partial
momentum refreshment \citep{chok2025twosystem}, without adaptation, in the same whitened
coordinates and with full-batch gradients. Its step is fixed before sampling by the rule of
\citet{chok2025twosystem} (acceptance at least $1-h_{\max}/16$). We also restart both samplers
from dispersed initial states (independent prior draws), report $\hat R$ and effective sample
sizes for every compared functional, and repeat the comparison on a problem small enough
($J = K = 10$, $d = 180$) for the reference to be run to high precision.

At the default step the latent means show no detectable bias and the variances a small one.
Between $h$ and $h/2$ the posterior means of the $9{,}800$ latent components differ by $0.006$
posterior standard deviations (root mean square), exactly their Monte Carlo error ($5.3\%$ of
components beyond two standard errors, against $4.6\%$ expected). The variances differ by $0.6\%$,
of which $0.3\%$ is resolved above the noise, and between $h/2$ and $h/4$ the corresponding biases
are $0.002$ standard deviations and $0.1\%$. The Fisher gradient, a sum over all $2{,}000$ rows, is
more sensitive. Between $h$ and $h/2$ its components (for $\log\sigma^2$, $\log\ell_x$, $\log\ell_y$)
change by $-3.3$, $+6.2$ and $+6.0$ ($z = -10$, $2.9$ and $7.8$), and between $h/2$ and $h/4$ by
$+0.2$, $-3.9$ and $+0.6$ ($z = 0.6$, $-2.3$ and $0.9$). Its bias at $h$ is thus resolved and
shrinks with the step at a rate consistent with $O(h^2)$ or faster. (A ratio of four between
successive differences would also require a non-zero leading coefficient, which this resolution
cannot establish.) The effect on learning is small. With the sampler run at $h/2$ for the same
simulated time throughout the optimisation, the learned $\sigma^2$ is $8\%$ higher on average (less
than its spread over seeds), the length scales change by less than $2\%$, and the test NLL is
unchanged ($-0.1918$ against $-0.1920$, per-seed differences within $0.003$). For the field on the
grid, the $h/2$ versus $h/4$ differences of the posterior means exceed the chain-spread error
($14\%$ of grid points beyond two standard errors). The retained-draw diagnostics below show that
this error estimate cannot be relied on for the field, and they supersede these comparisons.

The reference confirms the findings for the latent components and the gradient. GHMC ($32$ chains, half started at the mode and
half at prior draws, each $240{,}000$ steps after $24{,}000$ of burn-in; $h_{\max} = 0.17$ and
$0.26$, acceptance $0.996$ and $0.985$) agrees with SMS--UBU at the default $h$ on the latent
means (root-mean-square difference $0.005$ posterior standard deviations, equal to the Monte
Carlo error) and on the latent variances to $0.4\%$. The Fisher
gradient at $h$ differs from the reference by $-3.5$, $+6.1$ and $+7.5$ ($z = -13$, $3.1$ and
$10$), much as between $h$ and $h/2$, and at $h/2$ by $-0.3$, $-0.1$ and $+1.5$ ($z = -1.1$, $-0.1$
and $2.6$). The discretisation bias of the gradient is therefore real, and halving the step
removes most of it, although one component remains at $z = 2.6$. Learning at $h/2$ changes the
fitted variance by $8\%$ and the test NLL by $0.0002$ nats (see above). Started from independent
prior draws, both samplers agree with their mode-started runs within Monte Carlo error (gradient
$z$ at most $1.2$). At $h/4$, however, the field variances differ from the reference beyond the
chain-spread error.

To look into this we repeated all five runs (SMS--UBU at $h$, $h/2$ and $h/4$, and GHMC from the
mode and from prior draws, with $16$ chains each), this time keeping $2{,}000$ draws per chain. From
these we computed rank-normalised, folded split-$\hat R$, bulk and tail effective sample sizes and
ESS-based Monte Carlo errors \citep{vehtari2021rhat} for the Fisher gradient, $256$ randomly chosen
latent components, and the field and the normalised density on the grid. The gradient and the
latent components are well resolved in every run ($\hat R\le1.006$, bulk ESS at least $3{,}200$,
tail ESS at least $8{,}800$). The field and density values are not. In every run, the references
included, $\hat R$ exceeds $1.01$ at $40$--$85\%$ of the grid points (up to $1.13$), and bulk ESS
falls to about $100$. These diagnostics do not resolve the field and normalised-density functionals well
enough to distinguish discretisation bias from finite-run error, so we exclude them from our
claims of validated posterior accuracy; this includes the $h/4$ discrepancy above. With
ESS-based errors the two GHMC runs agree with each other (gradient $\lvert z\rvert\le0.6$). SMS--UBU
at $h$ differs from them in the variance component of the gradient ($z = -12$), and at $h/2$ and
$h/4$ by at most $\lvert z\rvert = 2.2$. For every step size, the fraction of the $256$ latent means
and variances that lie more than two errors from the reference ($3$--$6\%$) is what chance would
give, as it is for the two GHMC runs ($2\%$ and $8\%$).

On the small problem ($J = K = 10$, $d = 180$, $\vartheta$ refitted by sampling), where the
reference is precise, SMS--UBU at $h$ shows a resolved bias of $0.6\%$ in the latent variances and
of $-0.86$ ($z = -4.7$) in the variance component of the gradient, and none at $h/2$ or $h/4$. To
summarise, the default step has small but detectable biases in the latent variances and the
gradient. They shrink substantially at half the step and have little observed effect on the
predictive NLL of the controlled problem. For the field and normalised-density values on the grid,
these runs, the reference included, cannot tell discretisation bias from finite-run error.

\paragraph{Same-representation MAP and Laplace.}
On the same representation, $\vartheta$ is learned from the same starting value by three
methods: our sampler with Fisher-identity gradients, the latent mode with plug-in gradients
(MAP), and the Laplace approximation of the marginal likelihood with an exact log-determinant
and the implicit derivative of the mode. Each is scored on the $2{,}000$ test rows by its own
predictive (posterior-mean density; plug-in density at the mode; Gaussian around the mode)
and, to separate learning from prediction, by the sampled predictive at its learned
$\vartheta$ (Table~\ref{tab:small}). All three start from $\log\vartheta_0 = (1.6, -1.2,
-1.2)$ and take $300$ Adam steps with learning rate $0.03$ and a flat prior on
$\log\vartheta$. MAP maximises the joint density $p(y,\xi\mid\vartheta)$ in the non-centred
parametrisation, with the mode refreshed by five Newton steps per outer step. The Laplace
predictive averages the normalised densities of $200$ independent draws from
$\N(\hat\xi,H^{-1})$, and the sampled predictive those of $200$ states of four fresh SMS--UBU
chains, so the two entries of the sampling row are separate Monte Carlo estimates. MAP
collapses on every seed to a large variance and length scales five to twelve times shorter than
the others, which overfits the field. This reflects joint maximisation in this
parametrisation and says nothing against deterministic latent fits in general. Laplace learns
hyperparameters close to those of sampling, and at its $\vartheta$ the sampled predictive is
as good as ours, but its own Gaussian predictive is $0.12$ nats worse. This is an observed finite-run comparison
under the reported budgets. Under these budgets the larger observed difference lies in the predictive
evaluations rather than in the learned hyperparameters. The comparison does not separate the error of the
Gaussian approximation from the remaining finite-run error of the sampled predictive.

\begin{table}[h]
\caption{Learning $\vartheta$ on the sinusoid ($N=2{,}000$, $J=100$, $K=50$): mean (standard
deviation) of the test NLL over four seeds, by each method's own predictive and by the
sampled predictive at its learned $\vartheta$, and the learned variance and length scales. The
seed standard deviations are neither a convergence certificate nor an estimate of the predictive's
Monte Carlo error.}
\label{tab:small}
\begin{center}
\footnotesize\setlength{\tabcolsep}{3pt}
\begin{tabular}{lcccc}
\toprule
& own predictive & sampled pred. & $\sigma^2$ & $(\ell_x,\ell_y)$ \\
\midrule
sampling & $-0.192\ (0.014)$ & $-0.192\ (0.016)$ & $59$--$78$ & $(0.13$--$0.15,\ 0.11$--$0.12)$ \\
MAP & $+1.577\ (0.225)$ & $+1.308\ (0.166)$ & $3983$--$5062$ & $(0.027,\ 0.010)$ \\
Laplace & $-0.076\ (0.013)$ & $-0.192\ (0.016)$ & $75$--$94$ & $(0.14$--$0.15,\ 0.11$--$0.12)$ \\
\bottomrule
\end{tabular}
\end{center}
\end{table}

\paragraph{Transform-only baseline.}
With $\xi\equiv0$ the model is $p_0(y\mid x) = T'(y)\,w_x'(T(y))$, a conditional density
estimator in its own right. Trained with the same encoder, transform, schedule and
validation split as the full model (seed $0$), it reaches test NLLs of $-1.788$
(\textsc{Happy}), $-2.195$ (\textsc{Teddy}), $-1.763$ (SDSS) and $-2.190$ (DESI), $0.13$ to
$0.37$ nats behind the full model, with PIT statistics of $0.05$--$0.10$ against $0.001$--$0.005$.
Removing the latent field thus materially degrades both density accuracy and calibration in
this configuration (one seed per benchmark). Since the ablation removes capacity as well as
posterior integration, it shows the value of the field but not, by itself, the value of sampling.

\paragraph{Matched data on DESI.}
Trained on the same $500{,}000$ rows as TabPFN-3.5's context (validation rows removed from
them), the GP with the $10\times128$ network overfits (validation NLL $-2.392$, test
$-2.352$), $0.005$ nats behind TabPFN-3.5 ($-2.357$). With a $10\times64$ network it reaches
$-2.360$, $0.003$ nats ahead (paired bootstrap interval $[0.002, 0.004]$), with a PIT
statistic of $0.004$ against $0.006$. With the $10\times128$ network the test NLL is
$-2.304$, $-2.352$, $-2.367$ and $-2.379$ at $10^5$, $5\cdot10^5$, $10^6$ and
$3.9\cdot10^6$ rows. Between the two networks the validation rows also favour the smaller one
($-2.394$ against $-2.392$), so the choice does not rest on the test rows. The bootstrap interval
is conditional on the selected fit and includes neither selection nor training randomness, so
this $0.003$-nat lead is less certain than the full-data difference below. At equal data the two
models are therefore close, and the GP's lead on the full benchmark comes mostly from using
all of it.

\paragraph{Repeated runs and paired differences.}
Every GP entry of Tables~\ref{tab:sinus} and~\ref{tab:main}, and every TabPFN-3.5 entry except on \textsc{Happy} and
\textsc{Teddy} (where it does not depend on the seed), is repeated over four seeds (the validation split on \textsc{Happy}
and \textsc{Teddy}, the train/test and validation splits on SDSS and DESI, the train/test split on the sinusoid). With the test rows fixed, the seed moves the GP's test NLL by $0.0009$ nats
(standard deviation) on \textsc{Happy} and \textsc{Teddy}. With new test rows it moves it by
$0.0028$ on SDSS, $0.0010$ on DESI and by $0.015$--$0.039$ on the $2{,}000$ sinusoid test rows,
where the excess over the true density on the same rows varies far less ($0.001$ at $10{,}000$
training rows). The paired GP minus TabPFN-3.5 NLL difference varies less than either model:
$-0.0111$ (standard deviation $0.0007$) on SDSS, $-0.0222$ ($0.0004$) on DESI, and $-0.017$
($0.016$), $-0.009$ ($0.013$) and $-0.016$ ($0.004$) on the sinusoid at $400$, $2{,}000$ and
$10{,}000$ rows. Within the seed-$0$ split, paired bootstrap intervals over test objects are
$[+0.006, +0.009]$ on \textsc{Happy}, $[+0.005, +0.007]$ on \textsc{Teddy}, $[-0.013, -0.009]$
on SDSS and $[-0.023, -0.022]$ on DESI, so every difference between the two models in
Table~\ref{tab:main} is resolved. The two kinds of interval measure different things. The seed
standard deviations include the split (where it varies), the validation split and training
randomness, whereas the bootstrap intervals reflect only the finite test sample, conditionally on
the fits. Neither includes the Monte Carlo error of the predictive, which is small. Re-scored from
the same trained chains,
the $20$-sweep production predictive is within $0.00014$ nats (\textsc{Happy}), $0.00002$
nats (SDSS) and $0.00001$ nats (DESI) of a $200$-sweep one, two independent $200$-sweep predictives
differ by $0.00001$, $0.00002$ and $0.00001$, and the PIT statistic does not change. The production predictive's per-object
Monte Carlo error is about $0.003$, $0.0013$ and $0.0006$ nats in $\log p$ (\textsc{Happy}, SDSS,
DESI), against $0.0012$, $0.0005$ and $0.0003$ at $200$ sweeps.

\paragraph{Evaluation audit.}
The TabPFN-3.5 floor changes the aggregate SDSS test NLL (seed-$0$ split) from $-2.0862$ to $-2.0998$, in
TabPFN's favour. The floor is applied to the bar probabilities before taking logarithms,
without renormalisation, and sits at one single-precision rounding step of the members' CDF. The CRPS
ensemble's members diverge in some DESI runs and not in others. Our first run ($-2.2564$)
placed $17\%$ of its samples outside the plausible range
(Appendix~\ref{app:config}), where they receive zero weight. A second run with identical
settings placed none there and scored $-2.2855$, or $-2.2848$ without the range restriction, so
the restriction itself does not matter when no sample leaves the range. Of five runs with
identical settings, two diverge ($17\%$ of samples outside the range; $-2.2564$ and $-2.2563$)
and three do not ($-2.2855$, $-2.2819$ and $-2.2791$, mean $-2.282$, standard deviation
$0.003$). Divergence costs about $0.03$ nats, and even the runs that do not diverge remain at
least $0.09$ nats behind the LGP. Table~\ref{tab:main} reports the mean and standard deviation
over all five runs, so this baseline's DESI entry carries a between-run spread of about
$0.014$ nats that the other DESI entries do not show. Renormalising TabPFN-3.5's floored bar probabilities changes the
per-object NLL by $\log S = \log\sum_b\max(p_b,\epsilon)$, at most $2.8\cdot10^{-5}$ on every
benchmark, and the renormalised mean NLL differs from the reported one by $1.5\cdot10^{-5}$
(\textsc{Happy}), $1.0\cdot10^{-5}$ (\textsc{Teddy}) and $1.6\cdot10^{-5}$ (SDSS). The float32
differencing of TabPFN-3.5's CDF limits how finely tail probabilities are resolved. The $13$, $11$ and
$8$ test galaxies whose bar probability at the observation is within $100$ such steps of zero
contribute $0.0014$, $0.0012$ and $0.0014$ nats to the mean NLL. Since this contribution is not an error bound, we recomputed for every test galaxy the probability of the shared bar
that holds its observation, starting from each ensemble member's own logits and working in double precision and in log space.
Instead of differencing a single-precision CDF, this sums the exact overlaps of the member's bars with the shared bar; the other
bars, which set the softmax normaliser, are left as they were. The stable score changes the mean test NLL by at most
$4\cdot10^{-5}$ nats on every benchmark and split (\textsc{Happy}, \textsc{Teddy}, four SDSS and four DESI splits). Single galaxies
at the resolution limit move by up to $1.9$ nats in either direction, but there are few of them ($2$--$13$ per split on \textsc{Happy}, \textsc{Teddy} and SDSS, $155$--$226$
of $434{,}505$ on DESI), and every paired GP minus TabPFN-3.5
difference of Section~\ref{sec:experiments} changes by less than $10^{-4}$. Re-tuned with the exact CDE loss of its density in place of the
$200$-point grid, Flow-Spline selects the same configuration on SDSS and its scores are
unchanged. We did not complete the same check on DESI ($80$ minutes per search configuration),
which remains a limitation of that entry.

\paragraph{Conditional calibration.}
Within deciles of the magnitude column, the GP's PIT statistic stays at most $0.017$
(\textsc{Happy}), $0.018$ (\textsc{Teddy}), $0.021$ (SDSS) and $0.007$ (DESI), and the
coverage of its central $90\%$ intervals between $0.890$ and $0.906$, consistent with the
sampling noise of a decile. TabPFN-3.5 reaches $0.027$, $0.025$, $0.047$ and $0.024$. Its
worst decile is the brightest one, where its mean PIT is $0.51$--$0.53$, so it underpredicts
the redshifts of bright galaxies. Calibration within the bins of one covariate is only a
group-conditional check, not calibration given the full covariate vector. Predictive
coverage also does not validate the posterior uncertainty about the density function itself.

\paragraph{Covariate shift.}
Trained on sample A and tested on the more distant samples C and D, the GP meets two limitations.
First, every shifted test set contains redshifts above the support of its learned transform
(Appendix~\ref{app:nonstat}): $7$ of $60{,}315$ and $26$ of $74{,}642$ galaxies on \textsc{Happy} C and D, and $1$ of
$97{,}980$ and $35$ of $75{,}924$ on \textsc{Teddy} C and D. The model gives them zero density, so its full-test NLL
on these four sets is infinite. The scorer's clipping convention gives them finite but poor scores ($23$--$25$ nats,
against $4$--$10$ for TabPFN-3.5), which add $0.0002$ to $0.011$ nats to the GP's mean score. We therefore compare
log-likelihoods on the galaxies inside the support, the same rows for both models, computed from the saved
per-galaxy scores. This subset is defined by the observed redshift and excludes galaxies on which the GP scores
badly, so it favours the GP. Second, the GP falls further behind TabPFN-3.5 as the shift grows. On the in-support
galaxies it is $0.021$ and $0.033$ nats behind on \textsc{Happy} A$\to$C and A$\to$D (paired bootstrap $[0.018, 0.024]$
and $[0.030, 0.036]$; the clipped means give $0.022$ and $0.038$), against $0.008$ on A$\to$B, while its PIT is
slightly closer to uniform ($0.0085$ and $0.019$ against $0.0096$ and $0.022$). For a galaxy above the support the
model's PIT is one, and the clipping convention returns at most $7\cdot10^{-5}$ less, which leaves these statistics
unchanged at the reported precision. On \textsc{Teddy} A$\to$C the gap is $0.008$, as on A$\to$B. On \textsc{Teddy}
A$\to$D, where $95\%$ of the test galaxies have a magnitude or colour outside the central
$99.8\%$ of the training range (against $1$--$2\%$ for B and C, and $11\%$ for \textsc{Happy}
D), the GP fails: on the in-support galaxies its NLL is $-0.33$ against $-1.66$ for TabPFN-3.5 (clipped means $-0.32$
and $-1.66$), with PIT statistics of $0.22$ and $0.12$. Far from the
anchors the Nystr\"om features vanish, so the latent field reverts to zero and the density to
the learned transforms (the transform-only baseline above), whose predictions barely depend on
the galaxy. To check this mechanism we measured the magnitude of the latent field $\lVert P_n\rVert$, averaged over
posterior states. On the in-distribution test samples (\textsc{Happy} B, \textsc{Teddy} B, SDSS, DESI)
its distribution matches that on the training galaxies (medians within $3\%$; $0.1\%$ on DESI), while on
\textsc{Teddy} D its median is half the training median and its $5\%$ quantile a fifth of the
training one. So the mechanism is at work, but it does not explain everything. On
\textsc{Happy} D the field shrinks almost as much (median about half the training median), yet
the LGP is only $0.033$ nats behind TabPFN-3.5 there on the in-support galaxies. Extrapolation beyond the training
covariates is therefore a limitation of the present covariate representation, as is the response support fixed by
the training range, and our claims mainly concern in-distribution conditional density estimation.

\paragraph{Representation checks.}
On \textsc{Happy} A$\to$B the test NLL is $-2.1543$ and $-2.1555$ with $K = 50$ and $200$
modes, $-2.1558$ with $200$ quadrature cells, and $-2.1543$ and $-2.1557$ with $J = 500$ and
$2{,}000$ anchors, against $-2.1557$ (standard deviation $0.0009$ over seeds) for the defaults
($K = 100$, $100$ cells, $J = 1{,}000$). Halving any of them costs at most $0.0014$ nats, and
doubling gains nothing. On the small sinusoid model ($K = 50$), the log-normaliser of sampled
states agrees with a rule four times finer to $7\cdot10^{-6}$ at the learned $\ell_y$, but
only to $0.47$ at the shortest damped length scale $e^{-3}\ell_y$, so the permitted range of
the damping includes configurations the rule does not resolve. We check at fixed posterior
states whether fitted models reach them. For $50{,}000$ test objects and $16$ posterior
states of the fitted models we compare $\log Z_Q$ with a rule with four times the cells, and
its gradient with respect to $P_n$. The largest difference in $\log Z$ is $1.1\cdot10^{-5}$ on
\textsc{Happy}, $3.3\cdot10^{-6}$ on \textsc{Teddy} and $4.9\cdot10^{-5}$ on SDSS (99.9\% quantiles
$8\cdot10^{-6}$, $2\cdot10^{-6}$ and $1.1\cdot10^{-5}$), near single-precision rounding, and the
largest relative error of the gradient is $3\cdot10^{-4}$. The difference does not grow across
deciles of the learned damping, whose log multiplier stays within $[-0.56, 0.63]$, far from the bound $\pm3$
of the stress test. On DESI the damping reaches further, to log multipliers of $-1.3$ and $-1.5$ in two fits, and at
fixed states the rule no longer resolves every object. The $99.9\%$ quantile of the difference is $1.1\cdot10^{-5}$, as on SDSS,
but the largest is $3\cdot10^{-3}$--$9\cdot10^{-3}$ (relative error of the gradient with respect to $P_n$ up to $1.1\cdot10^{-2}$),
all in the decile of the shortest length scales. We therefore measured the effect on the score directly. Re-scoring the production predictive
of a DESI fit (its $960$ states, every fifth impulse of the last $15$ of the $20$ sweeps in four chains, all $434{,}505$ test galaxies) with $4$ and $16$ times the cells changes the test NLL by
$1\cdot10^{-6}$ nats and the per-object log-density by at most $1.3\cdot10^{-2}$ ($31$ galaxies above $10^{-3}$, $99.9\%$
quantile $3.4\cdot10^{-6}$), and the two finer rules agree to $1.5\cdot10^{-6}$, so the reference itself is resolved. This is the
scoring error at fixed states. We did not measure how the rule changes the fitted posterior or the outer optimisation, and the
gradient check concerns $P_n$, not the hyperparameters. (Refitting \textsc{Happy} with $200$ cells
moves the test NLL by $0.0001$, but a refit can adapt to the rule, so this is weaker evidence.)

\section{ADDITIONAL RESULTS}
\label{app:extra}

On \textsc{Happy} A$\to$B the $3\times32$ network's test log-likelihood is flat from step
$500$ to the early stop. On \textsc{Teddy} the
validation optimum is at step $310$ and the test log-likelihood falls by $0.005$ nats over
the following $500$ steps, which the early stop prevents. On SDSS (four seeds) the smoothed
validation optimum is at steps $870$--$1{,}000$, and in seed $0$ the test NLL is $-2.1111$, $-2.1110$
and $-2.1112$ at steps $700$, $800$ and $900$. On DESI it is at steps $990$--$1{,}000$ in every
seed, and in seed $0$ the test NLL improves from $-2.3686$ at step $500$ to $-2.3748$, $-2.3779$
and $-2.3791$ at steps $700$, $800$ and $900$. The DESI response warps have a median largest
stretch of $3.33$, $2.89$, $3.71$ and $3.13$ over the test objects in the four seeds (median
smallest $0.61$, $0.60$, $0.57$ and $0.52$), and the learned transform changes the
log-density by at most $2.9$--$3.2$, at $z = 0$.

\section{ILLUSTRATION}
Figure~\ref{fig:sinusoid-density-illustration} illustrates the model described in Section~\ref{sec:sinus}.

\begin{figure*}[t]
\centering
\includegraphics[width=\textwidth]{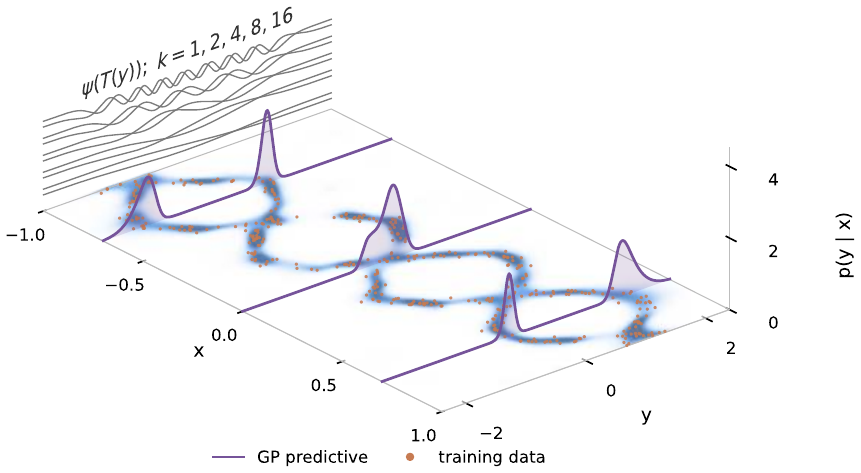}
\caption{The vanilla logistic GP on the bimodal sinusoid with $N=400$
training samples (orange points). The floor heatmap shows the posterior
predictive density; purple profiles show the predictive at $x=-0.7,0,0.7$.
Gray basis curves on the $x=-1$
boundary illustrate $\cos(\pi k T(y))$ and $-\sin(\pi k T(y))$ for
$k=1,2,4,8,16$, evaluated using the fixed training-only KDE-CDF transform
$T$. Each sine/cosine pair shares a display baseline. The basis curves are
scaled and vertically offset for readability, and their heights are not
on the density axis.}
\label{fig:sinusoid-density-illustration}
\end{figure*}

\section{COMPARISON WITH SPATIAL LOGISTIC GAUSSIAN PROCESSES}
\label{app:slgp}

\paragraph{Relation.}
The spatial logistic Gaussian process (SLGP) of \citet{gautier2026continuous} is the model closest to ours. Its conditional
density is the logistic transform \eqref{eq:lgp} of a finite-rank Gaussian process on (covariate, response), built here from
random Fourier features of a stationary kernel on $[0,1]^{D+1}$, with the response confined to a fixed interval and the normaliser
computed by quadrature. The R package \citep{gautier2026slgp} fits the coefficients by MAP, a Laplace approximation or NUTS (Stan),
and the accompanying paper proves that the coefficient likelihood is log-concave \citep[Theorem~3.2]{gautier2026slgp}. With the
Gaussian prior the posterior is therefore strongly log-concave, the property behind our Proposition~\ref{prop:lc}; we claim
neither this property nor latent posterior sampling as new. The two methods differ in four ways. Our representation is a Fourier
basis on the response times Nystr\"om features of the covariates, where SLGP uses random features of the joint kernel. Our
sampler is SMS--UBU in Kronecker-whitened coordinates, theirs is NUTS. We learn the hyperparameters by Fisher-identity gradients of
the marginal likelihood, whereas SLGP searches a grid of length scales on the posterior at the MAP, with the variance fixed by a
heuristic. Finally, our response axis is mapped by a transform estimated from the training responses (fixed or learned;
its support is given in Appendix~\ref{app:nonstat}), theirs is a fixed interval chosen in advance. Because the two
outer objectives differ, they need not select the same hyperparameters even with unlimited computation.
\citet[Section~5]{gautier2026continuous} list quantitative predictive evaluation, comparisons across kernel settings and
baselines, and a more efficient implementation for more observations and higher-dimensional inputs as further work. This section
compares the two methods on held-out data, in their application and in two of ours.

\paragraph{Protocol.}
We run SLGP with the authors' code, CRAN \texttt{SLGP}~2.0.0, and for the saved models of \citet{gautier2026continuous} with the
package build their experiments ran. Both methods see the same training rows and are scored on the same test rows. Test rows play no
part in preprocessing, in the selection of hyperparameters within a search space, in stopping or in the choice of SLGP's
response interval. They did inform the design of the sinusoid search space, as described below. Densities are in the response's original units,
Jacobians included, and the predictive averages densities over posterior draws before taking the logarithm. We score SLGP from its
fitted coefficients. The package's basis functions are evaluated directly, and each draw's normaliser numerically by the trapezoid
rule on a $1{,}401$-point response grid per test input (refined below), because in the configurations we examined the package's own
prediction routine fails off its quadrature grid in 2.0.0 and overstates densities by about $1\%$ in the paper's build (both
checks are in our code repository, \url{https://github.com/paulindani/LGP}). We compare SLGP with the package defaults
(length scale $0.15$ on every coordinate scaled to $[0,1]$, $\sigma^2$ by the package heuristic, $500$ basis functions) and SLGP
tuned on held-out training rows (below), each fitted by MAP, Laplace and NUTS ($2$ chains of $1{,}000$ iterations, the paper's
setting) where the budget allows. Our model is run in its stationary form (the vanilla model of Section~\ref{sec:model} with ARD covariate length
scales, no encoder, no validation set) and in its full form (Section~\ref{sec:nonstat}, with early stopping). In the temperature and \textsc{Happy} comparisons each fit had a declared
wall-clock limit (SLGP MAP and Laplace $1$~h, the SLGP search $1.5$--$2$~h, NUTS $2.5$~h, our model $1$~h). The sinusoid comparison
and the \textsc{Happy} repeat search and checks were run to completion. A run that exceeded its limit is reported as such, which bounds the budget, not the
method. For the temperature and \textsc{Happy} comparisons SLGP ran on two CPU cores per fit (AMD EPYC 7F72) and our model on one
NVIDIA A100 (40\,GB). The sinusoid comparison and the \textsc{Happy} repeat search and checks ran on two Intel Xeon Gold 6530 cores per
SLGP fit and one RTX~5090. The timings therefore compare two systems rather than two algorithms.

SLGP's $\sigma^2$ heuristic sets the prior variance so that the expected prior range of the latent field is about $5$
\citep{gautier2026slgp}. This is a choice of prior, not a constraint on the fitted field. On the sinusoid below (seed $42$,
$N=400$), the best NLL on held-out training rows over the length scales is $+0.47$ with the heuristic $\sigma^2$, $+0.00$ with ten times it and
$-0.13$ with a hundred times it, so the default prior regularises this problem too strongly. We therefore select $\sigma^2$ jointly
with the length scales by held-out NLL on training rows, which targets the accuracy of the predictive density. We hold out $20\%$ of
the training rows, fit the MAP on the rest at every point of a grid (response length scale, one covariate length scale, and
$\sigma^2$ as a multiple of the heuristic; grids below), select the point with the lowest held-out NLL and refit on all training
rows. A selection on the boundary of its grid is repeated on an extended grid. Preliminary fits on seed~$42$ of the
sinusoid (four probe fits at $N=400$), scored on its test rows, informed the decision to include $\sigma^2$ in the search. Within
the resulting search spaces, the final hyperparameters were selected by held-out NLL on training rows. The sinusoid comparison
therefore includes test-informed protocol development and is not an entirely test-independent evaluation, and we regard it as
exploratory. The probe fits used seed~$42$ only.

\paragraph{Swiss temperatures.}
The application of \citet[Section~4.3]{gautier2026continuous} models the daily mean temperatures of 2019 at MeteoSwiss stations
given latitude, longitude and altitude. Its split trains on the $26$ stations outside the canton of Bern ($9{,}490$ days, but only
$26$ distinct inputs) and tests on the $3$ Bern stations ($1{,}095$ days). We take the paper's data and saved models from its
repository. Refitting with its code and package build reproduces the saved model ($\sigma^2$ identical, random frequencies to
$4\cdot10^{-15}$, MAP coefficients to $7\cdot10^{-7}$). Its length scales ($7.5\%$ of the temperature range, $40\%$ of the latitude
and longitude ranges, $15\%$ of the altitude range) were chosen on a $1{,}100$-point grid whose saved objective follows the
training posterior at the MAP (Spearman correlation $0.96$ when we re-evaluate it), not the test stations. We add SLGP with length
scales selected on the same grid by station-grouped six-fold cross-validation on the $26$ training stations ($5\%$ / $50\%$ / $50\%$
/ $25\%$). Our stationary model uses ARD Mat\'ern-$5/2$ kernels on both axes (SLGP's smoothness) on the standardised inputs and is
fitted on all $26$ stations without a validation set, with five seeds. The full model has the SDSS configuration of
Table~\ref{tab:config} ($6\times64$ encoder, $J_w=6$) with early stopping on three held-out training stations, using five splits (each holding
out three different stations) and three seeds. Table~\ref{tab:slgp-meteo} gives the scores and Figure~\ref{fig:slgp-meteo} the
predictive densities.

\begin{table}[h]
\caption{Swiss temperatures, the $3$ Bern test stations ($1{,}095$ days, $^\circ$C). NLL and CRPS: mean over the test days; NLL per
station; $\Delta$: paired NLL difference to SLGP with the paper's length scales (MAP) and to SLGP with station-CV length scales
(MAP), $\pm$ the standard error of a block bootstrap over the $52$ weeks of the year (the days of a week at the three stations form
a block, since daily temperatures are autocorrelated). Rows over several runs: mean over the runs, standard deviation in parentheses.
SLGP: $500$ basis functions, Mat\'ern-$5/2$; paper's $\ell$: the saved fits of \citet{gautier2026continuous} (its
package build); CV $\ell$: fitted with CRAN \texttt{SLGP}~2.0.0; LGP: our model. Bold: the lowest value in each column except the $\Delta$NLL columns.}\label{tab:slgp-meteo}
\begin{center}
\footnotesize\setlength{\tabcolsep}{3pt}
\resizebox{\textwidth}{!}{\begin{tabular}{lcccccccc}
\toprule
& & & & \multicolumn{3}{c}{NLL per station} & \multicolumn{2}{c}{$\Delta$NLL vs SLGP} \\
\cmidrule(lr){5-7}\cmidrule(lr){8-9}
model & NLL & CRPS & RMSE & Bern & Grimsel & Meiringen & paper $\ell$ & CV $\ell$ \\
\midrule
SLGP, paper's $\ell$, MAP & $3.4000$ & $4.251$ & $7.39$ & $3.371$ & $3.361$ & $3.469$ & --- & $+0.016\pm0.009$ \\
SLGP, paper's $\ell$, NUTS & $3.4007$ & $4.251$ & $7.39$ & $3.371$ & $3.361$ & $3.470$ & $+0.001\pm0.001$ & $+0.017\pm0.008$ \\
SLGP, CV $\ell$, MAP & $3.3838$ & $4.216$ & $7.34$ & $3.366$ & $3.355$ & $3.431$ & $-0.016\pm0.009$ & --- \\
SLGP, CV $\ell$, NUTS & $3.3841$ & $4.217$ & $7.34$ & $3.366$ & $3.356$ & $3.431$ & $-0.016\pm0.009$ & $+0.000\pm0.000$ \\
\midrule
LGP, stationary (5 seeds) & $\mathbf{3.3812}$ ($0.0001$) & $\mathbf{4.200}$ & $\mathbf{7.32}$ & $3.360$ & $\mathbf{3.347}$ & $3.436$ & $-0.019\pm0.014$ & $-0.003\pm0.011$ \\
LGP, full, early stopping (15 runs) & $3.3949$ ($0.0154$) & $4.241$ & $7.38$ & $3.361$ & $3.390$ & $3.433$ & $-0.005\pm0.007$ & $+0.011\pm0.004$ \\
\quad mixture of the 15 runs & $3.3826$ & $4.219$ & $7.35$ & $3.360$ & $3.359$ & $3.429$ & $-0.017\pm0.008$ & $-0.001\pm0.004$ \\
LGP, full, no early stopping & $3.4128$ & $4.363$ & $7.55$ & $\mathbf{3.357}$ & $3.464$ & $\mathbf{3.417}$ & $+0.013\pm0.019$ & $+0.029\pm0.015$ \\
\bottomrule
\end{tabular}}
\end{center}
\end{table}

\begin{figure}[h]
\begin{center}
\includegraphics[width=\textwidth]{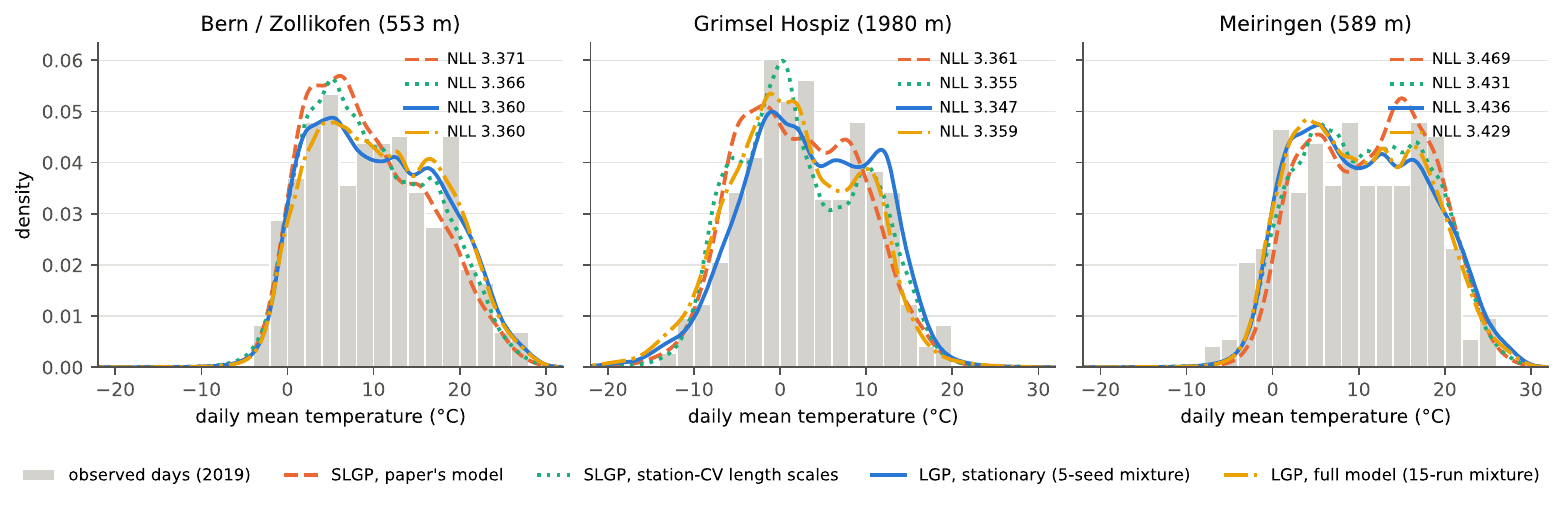}
\end{center}
\caption{Predictive densities at the three Bern test stations over a histogram of the $365$ observed daily means of 2019, with each
model's NLL at the station. Each model predicts one density per station, from its coordinates.}\label{fig:slgp-meteo}
\end{figure}

The stationary LGP has the lowest NLL, CRPS and RMSE, but the differences are not resolved. It is $0.019\pm0.014$ nats below the
paper's SLGP and level with SLGP at cross-validated length scales ($-0.003\pm0.011$), with ARD length scales learned by Fisher-identity
gradients instead of a grid search and its five seeds within $10^{-4}$ nats of each other. Refitted by NUTS with CRAN \texttt{SLGP}~2.0.0 and the package's settings ($2$ chains of $500$
warm-up and $500$ retained iterations) at either set of length scales, SLGP shows no divergent transitions or maximal trees, E-BFMI
of at least $0.89$, rank-normalised $\hat R$ of at most $1.035$ for every coefficient (bulk ESS at least $268$) and of at most $1.007$
for the per-draw mean test log-density (ESS at least $595$). In these refits the two chains' test NLLs differ by at most $0.0004$ nats, and the NUTS
predictive differs from the MAP of the same build by at most $0.0003$. The paper-$\ell$ rows of Table~\ref{tab:slgp-meteo}
are instead the saved fits of \citet{gautier2026continuous} from its own package build, in which NUTS is $0.0007\pm0.0010$ nats
behind the MAP. Neither difference between NUTS and MAP is resolved. The full
model gains nothing here. Its single runs are $0.011\pm0.004$ nats behind SLGP at cross-validated length scales, and they depend
on which three of the only $26$ distinct inputs are held out for stopping (the split holding out S\"antis, at $2{,}501$\,m one of
the three training stations above $1{,}900$\,m, gives $3.386$--$3.428$). Their mixture closes the gap ($-0.001\pm0.004$). Without early
stopping the encoder overfits the $26$ inputs (test NLL $3.385$ after $100$ steps, $3.413$ after $1{,}000$). It is then the best model at
the two stations below $600$\,m and the worst at Grimsel ($1{,}980$\,m). The median-gap scaling of the full model's preprocessing (Appendix~\ref{app:data}) shrinks the scale
of longitude here to $0$--$0.03$, against $0.07$--$0.12$ for latitude and $0.29$--$0.32$ for altitude, because the median temperature on
either side of the median longitude is equal or nearly so. Computationally this is SLGP's most favourable setting. With $26$ distinct
inputs the normaliser is evaluated once per station, and a MAP fit takes $1$--$2$\,s and NUTS $3$--$12$\,min. Our model takes
$10$--$13$\,min per fit on the GPU, of which $4$--$6$\,min are compilation and chain burn-in, followed by at most $1{,}000$ optimisation steps. Selecting the length scales costs SLGP about $33$\,CPU-minutes
for the paper's grid ($1{,}100$ MAP fits) and $114$\,CPU-minutes for the station cross-validation.

\paragraph{Bimodal sinusoid.}
The problem of Section~\ref{sec:sinus} has a known conditional density, so each fit can be compared with the truth, which is the
criterion of the consistency study of \citet{gautier2026continuous}. We draw it anew for four seeds of the generating process, each with its
own training and test rows (seed $42$ is the sample split by Table~\ref{tab:sinus}, and its split is that table's seed-$0$ split;
the other three samples are new, so the LGP rows here differ from the means of Table~\ref{tab:sinus}). On each we refit our stationary
model with the recipe of Section~\ref{sec:sinus}. SLGP uses the Gaussian kernel, as our model does, the exact basis scheme
(so that the model that is fitted is the model that is scored) and the response interval $[-2.5, 2.5]$. This interval contains the data
with probability $1-6\cdot10^{-7}$, and no test point falls outside it. Its search covers $252$ grid points (response length scales
$0.03$--$0.3$, covariate length scales $0.025$--$0.15$, $\sigma^2$ from $1$ to $10^4$ times the heuristic) on the training rows of
$N=400$ and $2{,}000$, at $1$--$2$ and $9$--$12$ hours of two-core fits per training set, and $N=10{,}000$ reuses the $N=2{,}000$ selection. Two selections
on the boundary were repeated on extended grids, on which they are interior. These are seed $45$ at $N=400$ on response length scales
up to $0.75$ and $\sigma^2$ up to $10^5$ times the heuristic, and seed $44$ at $N=2{,}000$ at $3$ and $10$ times its selected
$\sigma^2$ on the neighbourhood of its selected length scales. Besides NLL, CRPS and PIT, Table~\ref{tab:slgp-sinus} gives the excess NLL over the true density on the same test
rows, all of which lie inside the supports of both fitted models. It is a finite-sample difference and not an estimate of a
population Kullback--Leibler divergence from the Gaussian-mixture truth, which is infinite for both models, since both have
bounded support (Appendix~\ref{app:nonstat} for ours). The table also gives the binned squared Hellinger distance to the true density,
$H^2_{\mathrm{bin}} = 1-\sum_b\sqrt{P_bQ_b}$ over the bins of a shared grid of about $1{,}000$ response values plus the two tails
(the convention $\tfrac12\int(\sqrt p-\sqrt q)^2$), averaged over the test inputs. Binning can only lower it, and on every second or
fourth grid point it changes by at most $4\cdot10^{-4}$.

\begin{table}[h]
\caption{Bimodal sinusoid, four seeds of $2{,}000$ test rows each: mean over the seeds of the NLL, the excess NLL over the true
density on the same rows, the binned squared Hellinger distance $H^2_{\mathrm{bin}}$ to it, CRPS and PIT-KS; $\Delta$: paired NLL
difference to our model, mean $\pm$ standard deviation over the seeds (positive: worse than ours); time per fit, SLGP without its
search; seeds: the number of seeds averaged. Every SLGP fit uses the exact basis scheme; tuned: length scales and $\sigma^2$ selected
on held-out training rows, in a search space designed after test-scored probe fits on seed $42$ (see Protocol), so the
comparison is exploratory. $^\ast$Seeds $42$--$44$; on seed $45$ NUTS did not converge (see text).}
\label{tab:slgp-sinus}
\begin{center}
\footnotesize\setlength{\tabcolsep}{3pt}
\begin{tabular}{lcccccccc}
\toprule
model & NLL & excess & $H^2_{\mathrm{bin}}$ & CRPS & PIT-KS & $\Delta$NLL vs LGP & time & seeds \\
\midrule
\multicolumn{9}{l}{\emph{$N=400$}} \\
LGP, stationary & $-0.020$ & $0.233$ & $0.0621$ & $0.3608$ & $0.045$ & --- & $6$\,min & 4 \\
SLGP, package defaults, MAP & $+0.619$ & $0.873$ & $0.2903$ & $0.3771$ & $0.087$ & $+0.640\pm0.081$ & $8$\,s & 4 \\
SLGP, tuned, MAP & $-0.037$ & $0.217$ & $0.0485$ & $0.3658$ & $0.039$ & $-0.016\pm0.040$ & $40$\,s & 4 \\
SLGP, tuned, Laplace & $+0.632$ & $0.886$ & $0.3187$ & $0.6367$ & $0.144$ & $+0.652\pm0.195$ & $40$\,s & 4 \\
SLGP, tuned, NUTS$^\ast$ & $-0.021$ & $0.232$ & $0.0524$ & $0.3712$ & $0.043$ & $-0.008\pm0.041$ & $6.9$\,h & 3 \\
\multicolumn{9}{l}{\emph{$N=2{,}000$}} \\
LGP, stationary & $-0.185$ & $0.069$ & $0.0183$ & $0.3442$ & $0.027$ & --- & $8$\,min & 4 \\
SLGP, package defaults, MAP & $+0.340$ & $0.593$ & $0.1969$ & $0.3575$ & $0.066$ & $+0.525\pm0.115$ & $50$\,s & 4 \\
SLGP, tuned, MAP & $-0.177$ & $0.076$ & $0.0190$ & $0.3457$ & $0.021$ & $+0.008\pm0.009$ & $4$\,min & 4 \\
SLGP, tuned, Laplace & $+0.352$ & $0.606$ & $0.2376$ & $0.5595$ & $0.104$ & $+0.537\pm0.302$ & $4$\,min & 4 \\
\multicolumn{9}{l}{\emph{$N=10{,}000$}} \\
LGP, stationary & $-0.226$ & $0.027$ & $0.0065$ & $0.3388$ & $0.021$ & --- & $11$\,min & 4 \\
SLGP, package defaults, MAP & $+0.132$ & $0.386$ & $0.1260$ & $0.3489$ & $0.055$ & $+0.359\pm0.110$ & $5$\,min & 4 \\
SLGP, tuned, MAP & $-0.218$ & $0.035$ & $0.0091$ & $0.3391$ & $0.022$ & $+0.008\pm0.001$ & $30$\,min & 4 \\
SLGP, tuned, Laplace & $+0.075$ & $0.329$ & $0.1378$ & $0.4683$ & $0.072$ & $+0.301\pm0.194$ & $28$\,min & 4 \\
\bottomrule
\end{tabular}
\end{center}
\end{table}

With the package defaults SLGP scores far below the true density at every size (excess NLL $0.87$, $0.59$ and $0.39$ against $0.23$, $0.07$
and $0.03$ for our model). The default prior regularises these sharp densities too strongly, and the searches select $\sigma^2$ at
$100$ to $3\cdot10^4$ times the heuristic. Tuned, SLGP by MAP is level with our model at $N=400$ ($-0.016\pm0.040$ nats, and closer to
the truth in $H^2_{\mathrm{bin}}$, $0.049$ against $0.062$) and behind it at $2{,}000$ and $10{,}000$ ($+0.008\pm0.009$ and
$+0.008\pm0.001$ nats; $H^2_{\mathrm{bin}}$ $0.019$ and $0.009$ against $0.018$ and $0.0065$). Refining SLGP's response grid from
$1{,}401$ to $2{,}801$ and $5{,}601$ points changes the per-object log-density of any fit by at most $7\cdot10^{-4}$ and its test NLL by
at most $2\cdot10^{-4}$. NUTS with the package's settings ($2$ chains of $500$ warm-up and $500$ retained iterations) took
$5$--$8$\,h per fit at $N=400$. On three seeds it shows no divergent transitions, rank-normalised $\hat R$ of at most $1.023$ for every
coefficient (bulk ESS at least $299$) and of at most $1.004$ for the per-draw mean test log-density, although in two of them nearly every
tree reaches the maximal depth. There its predictive equals the MAP's within the resolution of three seeds (paired difference
$-0.001\pm0.005$ nats). On the fourth seed, with the weakest selected prior ($\sigma^2 = 2.5\cdot10^4$), it did not converge in these
iterations (coefficient $\hat R$ up to $1.14$, bulk ESS $12$), although its predictive is stable across the two chains ($\hat R$ $1.007$;
$-0.092$ against $-0.086$ for the MAP). We exclude it from the table. The package's Laplace approximation at the same hyperparameters is far worse ($+0.65$, $+0.54$ and $+0.30$
nats against our model), although it is implemented correctly. In all twelve Laplace fits the package's Hessian at the mode agrees with an
independent analytic Hessian of the negative log posterior to $3\cdot10^{-16}$ (itself confirmed by central differences of the
gradient), and its draws, whitened by it, are standard normal. The smallest eigenvalue of that Hessian equals the prior precision
$1/\sigma^2$ in every fit. Along directions that the data do not constrain at the mode, the Gaussian therefore keeps the prior
variance, which the selected $\sigma^2$ ($65$ to $2.5\cdot10^4$) makes large, whereas the likelihood does constrain those directions away
from the mode. The poor score comes from the Gaussian approximation at this weak prior and not from an implementation error. Our model learns its three
hyperparameters by Fisher-identity gradients in $4$--$13$\,min per fit on the GPU, while SLGP needs $40$\,s to $30$\,min per MAP fit after a
search of $1$--$12$ hours of two-core fits per training set.

\paragraph{Scaling on HAPPY.}
The third comparison uses the \textsc{Happy} A$\to$B benchmark of Table~\ref{tab:main} (ten inputs) on nested training subsets of
sample A, with $1{,}000$, $5{,}000$ and $10{,}000$ galaxies for both methods and $25{,}000$ and all $74{,}950$ for our model. Both are
scored on the same $10{,}000$ random galaxies of sample B (SLGP's exact scorer evaluates a $1{,}401$-point density for every distinct
test input). SLGP uses the exact basis scheme (the package's grid schemes cover every input and do not extend to ten),
$500$ basis functions and the response interval $[0, 1.62]$, which is $1.1$ times the largest redshift of sample A. It is run with the
package defaults (MAP and Laplace) and tuned on $1{,}000$ training galaxies (Mat\'ern-$3/2$ like ours, one length scale shared by the ten inputs). The search
that produced the reported fits covered $64$ grid points in $26$\,min of two-core fits on the first system. A repeat on the
second system over a wider $96$-point grid (response length scales up to $0.32$, $\sigma^2$ up to $1{,}000$ times the heuristic)
took $48$\,min and selected the same values, all interior (response length scale $0.08$). The selection is reused at $5{,}000$
and $10{,}000$, so that part of the curve is conditional on it. Our stationary model uses ARD Mat\'ern-$3/2$ kernels and no validation set; the full model is the \textsc{Happy}
configuration of Table~\ref{tab:config} with early stopping on $5\%$ of the training rows. Every fit is a single run.

\begin{figure}[h]
\begin{center}
\includegraphics[width=0.9\textwidth]{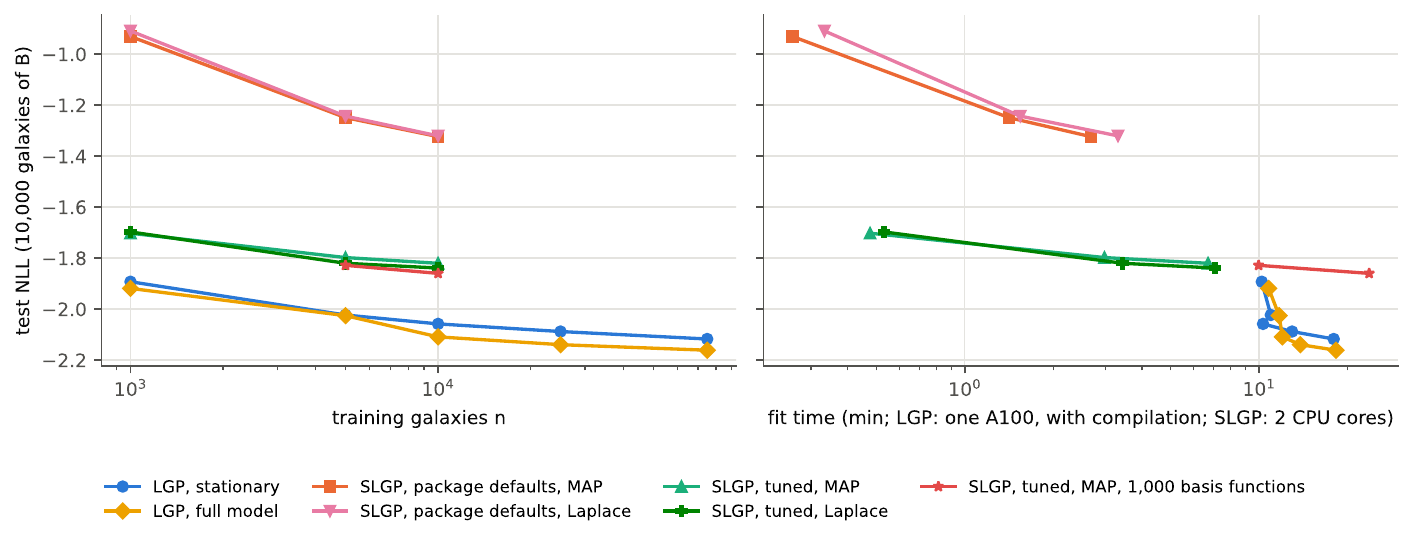}
\end{center}
\caption{\textsc{Happy} A$\to$B: test NLL on $10{,}000$ galaxies of sample B against the number of training galaxies (left) and
against the fit time (right; along each line $n$ increases; our model on one A100 including compilation, SLGP on two CPU cores,
excluding the $26$-minute search that selected its hyperparameters). ``Tuned'' SLGP uses the length scales and $\sigma^2$ selected on $1{,}000$ galaxies.}
\label{fig:slgp-happy}
\end{figure}

Our model has the lower NLL at every size (Figure~\ref{fig:slgp-happy}). At $10{,}000$ galaxies the full model reaches $-2.109$ and
the stationary one $-2.058$, against $-1.821$ for tuned SLGP by MAP, $-1.839$ by Laplace and $-1.32$ with the package defaults. These
are single runs, so their own variability is not measured. The gap of $0.2$--$0.3$ nats is large compared with the
seed-to-seed variation of the full-data fits of Table~\ref{tab:main}, although that variation does not quantify this curve's. SLGP improves by $0.12$ nats from $1{,}000$ to
$10{,}000$ galaxies and our full model by $0.19$, and it continues to $-2.161$ when trained on all of sample A ($-2.1557$ on all of sample
B, matching the four-seed mean of Table~\ref{tab:main}). The PIT statistic is $0.03$--$0.05$ for tuned SLGP by MAP and $0.009$--$0.037$ for the full model. Doubling SLGP's basis to $1{,}000$
functions at the same, not retuned, hyperparameters gains $0.03$--$0.04$ nats, so the random-feature representation of an
eleven-dimensional density may be one limit. The representation, ARD against shared length scales, the tuning budget and the inference
all differ between the two systems, and these runs do not separate their effects. The costs scale differently. SLGP's exact scheme
evaluates the basis at $101$ quadrature nodes for every training input, so a MAP fit takes $0.5$, $3$ and $7$\,min at $1{,}000$, $5{,}000$ and $10{,}000$ galaxies with $2.5$, $12$ and
$22$\,GB of memory ($24$\,min and $44$\,GB with $1{,}000$ basis functions), whereas ours takes $10$--$19$\,min per fit, nearly
independent of the training size up to $75{,}000$ galaxies, with $4$--$5$\,GB of GPU and about $4$\,GB of host memory at both
$10{,}000$ and $75{,}000$. NUTS for tuned SLGP at $1{,}000$ galaxies did not finish within the $2.5$\,h limit ($900$ of $1{,}000$
iterations). At $1{,}000$ galaxies, re-scoring the tuned MAP on $2{,}801$ and $5{,}601$ response points changes its test NLL by less
than $10^{-6}$ and the per-object log-density by at most $4\cdot10^{-4}$, and the package's Laplace Hessian agrees with an
independent analytic one to $4\cdot10^{-16}$, with its draws, whitened by it, standard normal.

\paragraph{Summary.}
On SLGP's own application our stationary model is level with SLGP at cross-validated length scales ($-0.003\pm0.011$ nats) and
ahead of the published configuration by $0.019\pm0.014$ (a difference the block bootstrap does not resolve decisively), and it learns
its length scales by gradient instead of on a grid. On the sinusoid, an exploratory comparison whose search space was informed by test-scored probe fits, SLGP needs its
variance added to the search to compete. Tuned, it is
level with our model at $400$ rows and $0.008$ nats behind at $2{,}000$ and $10{,}000$, whereas the package's Laplace approximation,
though implemented correctly, fails at these weak priors. On ten-dimensional photometric data our full model is $0.2$--$0.3$ nats ahead at
every training size, and SLGP's exact scheme grows to $22$--$44$\,GB at $10{,}000$ galaxies. These results hold for the stated budgets
and implementations (R and Stan on two CPU cores against JAX on one GPU). They do not establish limits of the SLGP model itself.

\end{document}